\documentclass{article}
\usepackage[utf8]{inputenc}
\usepackage[margin=1.0in]{geometry}
\pdfoutput=1 
\geometry{letterpaper}     
\usepackage{graphicx}		
\usepackage{amssymb}
\usepackage{amsmath}
\usepackage{amsthm}
\usepackage{mathtools}
\usepackage{hyperref}
\usepackage{enumitem}
\usepackage{verbatim}
\usepackage{xcolor}
\usepackage[T1]{fontenc}
\usepackage{microtype}
\usepackage{natbib}
\usepackage[bottom]{footmisc}
\usepackage{bbm}
\usepackage{float}
\usepackage{booktabs}
\usepackage[acronym]{glossaries}

\newcommand\blfootnote[1]{%
  \begingroup
  \renewcommand\thefootnote{}\footnote{#1}%
  \addtocounter{footnote}{-1}%
  \endgroup
}

\newtheorem{theorem}{Theorem}
\newtheorem{definition}{Definition}
\newtheorem{proposition}{Proposition}
\newtheorem{lemma}{Lemma}

\newtheorem{corollary}{Corollary}

  \newtheorem{example}{Example}

\renewcommand{\iff}{\Leftrightarrow}

\newcommand{\ex}[2]{{\ifx&#1& \mathbb{E} \else \underset{#1}{\mathbb{E}} \fi \left[#2\right]}}
\newcommand{\exinline}[2]{{\ifx&#1& \mathbb{E} \else {\mathbb{E}}_{#1} \fi \left[#2\right]}}
\newcommand{\pr}[2]{{\ifx&#1& \mathbb{P} \else \underset{#1}{\mathbb{P}} \fi \left[#2\right]}}
\newcommand{\var}[2]{{\ifx&#1& \mathsf{Var} \else \underset{#1}{\mathsf{Var}} \fi \left[#2\right]}}

\newcommand{\Ber}{\textup{Ber}}
\newcommand{\R}{\mathbb{R}}

\DeclareMathOperator*{\argmin}{arg\,min}

\newcommand{\cD}{\mathcal{D}}
\newcommand{\cF}{\mathcal{F}}
\newcommand{\cP}{\mathcal{P}}
\newcommand{\cQ}{\mathcal{Q}}
\newcommand{\cT}{\mathcal{T}}
\newcommand{\cX}{\mathcal{X}}
\newcommand{\cY}{\mathcal{Y}}
\newcommand{\cZ}{\mathcal{Z}}
\newcommand{\cG}{\mathcal{G}}
\newcommand{\N}{\mathbb{N}}
\newcommand{\llog}{\ensuremath{\text{\rm llog\ }}}
\newcommand{\etamax}{\eta_{\max}}
\newcommand{\etamin}{\eta_{\min}}
\newcommand{\estimate}[1]{\hat{#1}}
\newcommand{\minim}[2]{{\left( #1 \wedge #2 \right)}}

\newcommand{\sZ}{\tilde{Z}} 
\newcommand{\sz}{\tilde{z}}
\newcommand{\sx}{\tilde{x}}

\newcommand{\dkl}[2]{\mathsf{KL}\left(#1\middle\|#2\right)}
\newcommand{\kl}{\mathsf{KL}}

\newcommand{\CMI}[2]{{\ifx&#2& \mathsf{CMI} \else \mathsf{CMI}_{#2} \fi \left(#1\right)}}

\let\originalleft\left
\let\originalright\right
\renewcommand{\left}{\mathopen{}\mathclose\bgroup\originalleft}
\renewcommand{\right}{\aftergroup\egroup\originalright}

\def\comments{1}
\ifnum\comments=1
\newcommand{\mybignote}[2]{{\color{#1} [[#2]]}}
\else
\newcommand{\mybignote}[2]{}
\fi
\newcommand{\lnote}[1]{\mybignote{teal}{L: #1}}

\newcommand{\stochleq}{\ensuremath{\trianglelefteq}}
\newcommand{\commentout}[1]{}

\DeclareRobustCommand{\VANDER}[3]{#2}

\title{PAC-Bayes, MAC-Bayes and Conditional Mutual Information: Fast rate bounds that handle general VC classes}
\author{Peter Gr\"unwald\thanks{CWI Amsterdam and Leiden University. \dotfill \texttt{peter.grunwald@cwi.nl}} \and Thomas Steinke\thanks{Google Research, Brain Team. \dotfill \texttt{fast@thomas-steinke.net}} \and Lydia Zakynthinou\thanks{Khoury College of Computer Sciences, Northeastern University. \dotfill \texttt{zakynthinou.l@northeastern.edu}}}

\begin{document}

\maketitle
\begin{abstract}
    We give a novel, unified derivation of {\em conditional\/} PAC-Bayesian and mutual information (MI) generalization bounds. We derive conditional MI bounds as an instance, with special choice of prior,  of conditional {\em MAC}-Bayesian (Mean Approximately Correct) bounds, itself derived from conditional PAC-Bayesian bounds, where `conditional' means that one can use priors conditioned on a joint training and ghost sample.  This allows us to get nontrivial PAC-Bayes and MI-style bounds for general VC classes, something recently shown to be  impossible with standard PAC-Bayesian/MI bounds. Second, it allows us to get faster rates of order $O \left(({\kl}/n)^{\gamma}\right)$ for $\gamma > 1/2$ if a Bernstein condition holds and for exp-concave losses (with $\gamma=1$), which is impossible with both standard PAC-Bayes generalization and MI bounds. Our work extends the recent work by~\citet{SteinkeZ20} who handle MI with VC but neither PAC-Bayes nor fast rates, the recent work of~\citet{HellstromD20} who extend the latter to the PAC-Bayes setting via a unifying exponential inequality, and~\citet{MhammediGG19} who initiated fast rate PAC-Bayes generalization error bounds but handle neither MI nor general VC classes. 
\end{abstract}

\blfootnote{Accepted for publication at the 34th Annual Conference on Learning Theory (COLT 2021).}

\section{Extended Introduction}\label{sec:intro}
We first give a mini-introduction to PAC-Bayesian and mutual information bounds. Then we indicate two deficiencies of such bounds and give an informal statement of our main result, which solves both issues for both types of bounds at the same time. At the end of the introduction we discuss related work. In the remaining sections~\ref{sec:preliminaries}--\ref{sec:conclusion}, we provide additional mathematical preliminaries, then we state our main lemma (proof delegated to an appendix) and use it to prove our main theorem and present its implications.
\paragraph{Setting}
In the standard setting of supervised learning, we are given a {\em model}, i.e., a set  $\cF$, where each $f\in\cF$ is a hypothesis that takes the form of a {\em predictor}. Our aim is to learn to predict well based on a sample of $n$ i.i.d.\ examples $Z = (Z_1, \ldots, Z_n)$ drawn from an unknown distribution $\cD$ over the space of examples, $\cZ$. We will denote the random variable representing a sample by $Z$, whereas a single example will be denoted by a $Z_i$, as previously, or by $Z'$. We adopt the convention of using upper-case letters for random variables (RVs) and lower-case letters for their realizations. A {\em learning algorithm\/} $A: \cZ^n \rightarrow \Delta(\cF)$ (where $\Delta(\cF)$ is the set of distributions over $\cF$)  takes as input the sample $Z$  and outputs a distribution over hypotheses. The special case of deterministic predictors such as ERM is covered by allowing the algorithm to output distributions on a single $f \in \cF$. We  refer to the posterior distribution of the output of $A$ given input $Z$ by $A|Z$.
For a loss function $\ell:\cF\times\cZ \rightarrow \R$, $\ell(f;z')$ denotes the loss of a deterministic hypothesis $f\in\cF$ on an example $z'\in\cZ$. We extend this notation to define the true loss and the empirical loss of $f$ on a sample $z\in\cZ^n$ by $\ell(f;\cD)=\exinline{Z'\sim\cD}{\ell(f; Z')}$ and $\ell(f;z)=\frac{1}{n}\sum_{i=1}^n\ell(f;z_i)$, respectively. Furthermore, for a randomized hypothesis $F\in\Delta(\cF)$, we define the expected true loss and the empirical loss on sample $z\in\cZ^n$ by $L(F;\cD)=\exinline{f\sim F}{\ell(f;\cD)}$ and $L(F;z)=\exinline{f\sim F}{\ell(f;z)}$, respectively.  A {\em learning problem\/} is a tuple $(\cD, \ell, \cF)$.

\paragraph{Standard PAC-Bayesian bounds} Within this setting, a standard goal is to  bound the {\em generalization error\/} of an algorithm $A$ in terms of its {\em empirical/training error}. A standard way to achieve this, which has recently received renewed attention, are {\em PAC-Bayesian generalization error bounds} \citep{McAllester98,McAllester02,langford2003pac,Seeger02,maurer2004,Audibert04,Catoni07,ambroladze2007tighter} which commonly take the form: 
\begin{equation}\label{eq:firstgebound}
\overbrace{L(A| Z ; \cD)}^{\text{generalization error}} - \overbrace{L(A | Z ;  Z )}^{\text{training error}} 
\stochleq   C_1  \cdot  \sqrt{\frac{L(A | Z ;  Z ) \cdot \dkl{A | Z}{\pi}}{n}} + C_2 
\cdot   \frac{\dkl{A | Z}{\pi}}{n} 
\end{equation}
for some constants $C_1, C_2 > 0$ and $\dkl{A | Z}{\pi}$ being the $\kl$ divergence between the `posterior' output of the algorithm and the `prior' distribution $\pi$  over $\cF$. The bounds hold for arbitrary priors $\pi$, as long as these are chosen independently of the data $Z$. Here we are ignoring $O(\log n)$ factors.
The notation $\stochleq$ expresses that the equation holds up to a small additive term with high probability over the distribution $\cD^n$ of the training sample $Z$ as well as in expectation. To be precise, (\ref{eq:firstgebound}) holds as an {\em exponential stochastic inequality\/} or ESI (pronounced `easy'), a useful concept introduced and used by \citet{WooterGE16} and \citet{GrunwaldM20}, which we will use throughout this paper.
\begin{definition}[Exponential Stochastic Inequality (ESI) \citep{GrunwaldM20}]\label{def:esi}
Let $\eta>0$ and $X,Y$ be random variables that can be expressed as functions of the random variable $U$ defined on the probability space $\cD^n$. Then
\[X\stochleq_\eta^U Y \Leftrightarrow \ex{U}{e^{\eta(X-Y)}}\leq 1.\]
\end{definition}
When no ambiguity can arise, we omit the random variable $U$. Besides simplifying notation, ESIs are useful in that they  simultaneously capture ``with high probability'' and ``in expectation'' results, that is, $X \stochleq_{\eta}^U Y$, implies both that $\forall \delta \in(0,1)$, $X\leq Y + \log (1/\delta)/\eta$, with probability at least $1-\delta$ over the randomness of $U$ and that $\exinline{U}{X} \leq \exinline{U}{Y}$.

The standard PAC-Bayes bound~\eqref{eq:firstgebound} has recently been applied to practically important continuously parameterized model classes, such as deep neural networks \citep{dziugaite2017computing,zhou2018}. The prior then takes the form of a probability density over the parameters (e.g. weights $\vec{w}$) and in order for the $\kl$ term to be finite, one needs to randomize the output of the algorithm. Therefore, even if the empirical error of the output $\vec{w}|Z$ of the original learning algorithm (typically SGD) can be driven down to $0$, the empirical error as appearing in (\ref{eq:firstgebound}), and therefore also the multiplication factor inside the square root, is not $0$---one typically takes a Gaussian around $\vec{w}|Z$ leading to a nonnegligible $L(A | Z ;  Z )$ (\citet{MhammediGG19} provide a numerical example). 
\paragraph{Standard Mutual Information (MI) Bounds}
Another, related way to bound generalization error is provided by {\em mutual information bounds\/} \citep{RussoZ16,XuR17}. These usually take on the following form:
\begin{equation}\label{eq:firstmibound}
\left|\ex{Z}{L(A| Z ; \cD) - L(A|Z ; Z)}\right|
\le \sqrt{\frac{2\cdot I(A|Z ; Z)}{n}},
\end{equation}
with $I(A|Z ; Z)$ denoting the mutual information between the training data and the algorithm's output. 
\paragraph{Two Issues with the Bounds}
Standard PAC-Bayesian and MI bounds have two deficiencies in common. 
First, as recently shown by~\citet{LivniMoran20}, there exist hypothesis classes with finite Vapnik-Chervonenkis (VC) dimension $d$ for which, rather than achieving the standard VC generalization error bound of order  $\sqrt{(d \log n)}/{n}$, PAC-Bayes bounds of the form~\eqref{eq:firstgebound} must remain trivial: there exists a VC class, such that for any arbitrary learning algorithm $A$, there exists a realizable (i.e., $\inf_{f \in \cF} \ell(f; \cD) = 0$) distribution $\cD$, such that for any prior $\pi$ (even one that is allowed to depend on the data-generating distribution $\cD$), either the $\kl$ divergence term $\dkl{A|Z}{\pi}$ is arbitrarily large or the loss is large ($L(A|Z;\cD)>1/4$).
Similarly,~\citet{BassilyMNSY18} and \citet{NachumSY18} show that there exists a VC class such that, for any proper and consistent learning algorithm $A$, there exists a realizable distribution $\cD$, such that the mutual information $I(A|Z;Z)$ in the bound of~\eqref{eq:firstmibound} is arbitrarily large. 

Second, in both theoretically interesting settings (such as random label noise, see Example~\ref{ex:threshold} below) and in practical settings (as already indicated above) the empirical error term $L(A |Z ; Z )$ inside the square root of~\eqref{eq:firstgebound} often cannot be ignored. Then both bounds (\ref{eq:firstgebound}) and (\ref{eq:firstmibound}) will be of order $\sqrt{\textsc{complexity}/n}$. The theory of {\em excess risk bounds\/} suggests that this is, in many cases, suboptimal and we can obtain a more desirable bound of the form $\textsc{complexity}/n$. Here we concentrate on the following typical form of PAC-Bayesian excess risk bounds \citep{Audibert04,zhang2006epsilon,zhang2006information,GrunwaldM20,GrunwaldM19}, but the results are comparable in nature to excess risk bounds based on e.g. Rademacher complexity bounds \citep{bartlett2006empirical}:
\begin{align}\label{eq:firsterbound}
\overbrace{R(A |Z; \cD)}^{\text{excess risk}}
     \stochleq C_3 \cdot \overbrace{R(A |Z ; Z)}^{\text{empirical excess risk}}  + C_4  \cdot  \left(
    \frac{
    \dkl{A | Z}{\pi}}{n} \right)^{\gamma}
\end{align}
for some constants $C_3, C_4 > 1$ and $\gamma \in [1/2, 1]$.
Here we ignore $O(\log \log n)$ factors. The {\em excess risk\/}  of a distribution over predictors $F \in \Delta(\cF)$ is defined as $R(F;\cD)=L(F;\cD)-L(f^*;\cD)$  where $f^*$ is an 
optimal predictor within the class $\cF$, achieving $\min_{f \in \cF} \ell(f; \cD)$, whose existence is commonly assumed (e.g. \citet{Tsybakov04,bartlett2006empirical,GrunwaldM20}). The excess risk of algorithm $A$ based on training sample $Z$, $R(A | Z; \cD)$, is thus a nonnegative random variable (depending on $Z$) denoting the additional risk incurred if one predicts based on the learned distribution $A |Z$, compared to the best one could have with knowledge of the true distribution $\cD$. Similarly, the {\em empirical excess risk\/} of $F$ on a sample $z\in\cZ^n$ is  $R(F;z)=L(F;z)-L(f^*;z)$.
Substituting these terms and rearranging, inequality~\eqref{eq:firsterbound} can be written as follows, giving an upper bound on the generalization gap:
\begin{align}\label{eq:firsterboundtogebound}
L(A |Z; \cD)-L(A|Z;Z)
     \stochleq (L(f^*; \cD)-L(f^*;Z))+ (C_3-1) \cdot R(A |Z ; Z)  + C_4  \cdot  \left(
    \frac{
    \dkl{A | Z}{\pi}}{n} \right)^{\gamma}
\end{align}

The $\gamma$ for which (\ref{eq:firsterbound}) holds depends on the interplay between the model $\cF$, the loss function $\ell$, and the true distribution $\cD$.
Specifically, a sufficient condition for the result to hold for $\gamma = 1/(2-\beta)$ is if the learning problem $(\cD,\ell,\cF)$ satisfies a $(B,\beta)$-{\em Bernstein condition\/} \citep{BartlettBL02,bartlett2006empirical,vanerven2015fast}:
\begin{definition}[{Bernstein Condition}] \label{def:Bernstein}
Let $\beta\in[0,1]$ and $B\geq 1$. Then $(\cD, \ell, \cF)$ satisfies the \emph{$(B,\beta)$-Bernstein condition} if there exists a $f^*\in\cF$ such that
\begin{equation}\label{eq:bernstein}
\ex{Z'\sim\cD}{(\ell(f;Z')-\ell(f^*; Z'))^2} \leq B\left(\ex{Z'\sim\cD}{\ell(f;Z')-\ell(f^*; Z')}\right)^{\beta} ~ \text{ for all } f\in\cF.\end{equation}
\end{definition}
If the Bernstein condition (\ref{eq:bernstein}) holds for some $f^*$, then this $f^*$ must be an optimal predictor as above.
If the losses are assumed bounded then the Bernstein condition vacuously holds for $\beta=0$ with some $B$. Throughout this paper, the losses are assumed in $[0,1]$, hence it always holds with $\beta =0, B=4$. Therefore, the {\em slow rate\/} of $\gamma = 1/(2-0) = 1/2$ can always be obtained. But for loss functions with curvature (specifically, all bounded so-called {\em mixable\/} loss functions, which includes all {\em exp-concave} loss functions \citep{vanerven2015fast}), the Bernstein condition also holds with $\beta=1$, implying {\em fast\/} $O(1/n)$ rates, i.e., $\gamma = 1$. Examples include the bounded squared error loss and logistic loss. Specifically, for the squared loss $\ell(f;(X,Y)) := (Y- f(X))^2$ (rescaled so that all functions map $X$ to $[-1/2,1/2]$ and $Y \in [-1/2,1/2]$ so that the range is $[0,1]$) it automatically holds with $\beta=1$ and $B=4$~\citep[Proposition 19]{GrunwaldM20}. Even for the nonmixable $0/1$-loss,  a Bernstein condition may still hold. For example, in the realizable case and in the case of random label noise (homoskedasticity), the Massart condition and, hence, the Bernstein condition holds, giving $\gamma = 1$.
The Bernstein condition is a significant weakening of the perhaps more well-known Tsybakov-Mammen \citep{Tsybakov04} condition which itself is a weakening of the Massart condition for classification; see \citet{vanerven2015fast} for an extensive overview and links between a variety of ``easiness'' conditions such as (Massart, Bernstein and Tsybakov) proposed in the literature. \citet{Tsybakov04} provides examples of situations in which Bernstein holds for $\beta$ strictly between $0$ and $1$, where {\em faster/intermediate\/} rates can be obtained.

For many algorithms, the  empirical excess risk term $R(A | Z ; Z)$ will be negligible. For example, for ERM (Empirical Risk Minimization) it will automatically be nonpositive since by definition the ERM cannot have larger loss on the sample than $f^*$. In addition, the first term, that is, the excess risk of $f^*$, disappears when the inequality is weakened to an in-expectation bound, while introducing a small unavoidable term in the in-probability bound. Then, in many settings, the right-hand side  of (\ref{eq:firsterboundtogebound}) is clearly smaller than that of (\ref{eq:firstgebound}) which suggests that the standard generalization bound  (\ref{eq:firstgebound}) is suboptimal as soon as a Bernstein condition holds with $\beta > 0$. Below we shall see that this is indeed the case.

\paragraph{Solving Both Issues at Once for both Bounds}
Partial solutions for both issues were provided by \citet{Audibert04,Catoni07,MhammediGG19,SteinkeZ20, HellstromD20}. By combining their insights and adding a new fundamental lemma (Lemma~\ref{lem:esiindividual} below), we manage to solve both problems for both types of bounds 
in essentially a single derivation. Its first intermediate conclusion is the following {\em faster rate data-conditional generalization error bound\/} (Theorem~\ref{th:main} below):
Let $(\cD,\ell,\cF)$ represent a learning problem which satisfies the $(B,\beta)$-Bernstein condition and suppose the loss function $\ell$ is bounded. Let the data $\sZ_{\bf 0} = (\sZ_{1,0}, \ldots, \sZ_{n,0})^\top\in\cZ^n$ be i.i.d. $\sim \cD$. Then for arbitrary {\em almost exchangeable data-dependent priors} $\pi \mid \langle \sZ_{\bf 0}, \sZ_{\bf 1} \rangle$ we have: 
\begin{multline}\label{eq:mainpre} 
{L(A| \sZ_{\bf 0} ; \cD) - L(A | \sZ_{\bf 0} ;  \sZ_{\bf 0})} \stochleq   \minim{1}{2\beta} \cdot {R(A | \sZ_{\bf 0} ;  \sZ_{\bf 0})} +  O \left(\frac{\ex{\sZ_{\bf 1}}{\dkl{A | \sZ_{\bf 0}}{\pi| \langle \sZ_{\bf 0}, \sZ_{\bf 1} \rangle }} }{n}  \right)^{\frac{1}{2-\beta}} + \frac{6 \eta}{n}
\end{multline}
Here $\wedge$ denotes minimum, the result holds up to $\log \log n $ factors and it requires an additional condition which essentially holds as long as $\dkl{\cdot}{\cdot} = o(n)$ almost surely under $\cD$. Note that this is an ESI inequality and as such it holds both in expectation and up to a small additive term with high probability over the training sample $\sZ_{\bf 0}$. We return later to this fact and to the remainder term $6 \eta/n$, which for now may be thought of as negligible.

To appreciate (\ref{eq:mainpre}), first note that, since the Bernstein condition automatically holds for $\beta=0$, so does (\ref{eq:mainpre}). Then the first term on the right disappears and the $\kl$ term becomes of order $\sqrt{\kl/n}$, as is the leading term for classical PAC-Bayesian bounds. However, in stark contrast to classical PAC-Bayesian bounds, we are now allowed (not required) to use priors which can {\em depend on the data in many -- but not arbitrary -- ways}: just like in classical Vapnik-Chervonenkis learning theory, we imagine a {\em ghost sample\/} $\sZ_{\bf 1}$ of equal size and distribution as the training sample $\sZ_{\bf 0}$. The notation $$\langle \sZ_{\bf 0}, \sZ_{\bf 1} \rangle := (\{\sZ_{1,0}, \sZ_{1,1}\}, \{\sZ_{2,0}, \sZ_{2,1}\}, \ldots, \{\sZ_{n,0}, \sZ_{n,1}\})^\top $$ indicates a vector of $n$ {\em unordered\/} pairs of examples, where the $i$-th component is the bag of example $i$ in the training sample $\sZ_{\bf 0}$ and example $i$ in the ghost sample $\sZ_{\bf 1}$. The prior $\pi | \langle \sZ_{\bf 0} , \sZ_{\bf 1} \rangle$ is allowed to depend on these $2n$ examples that include all the $n$ training examples, but all information as to whether an example is in the training or ghost sample is hidden from the prior. The complexity is then measured as the {\em expected\/} $\kl$ divergence where the ghost sample is i.i.d.~$\sim \cD$. More formally, let us write $\sZ_S=(\sZ_{1,S_1}, \ldots, \sZ_{n,S_n})^\top\in\cZ^{n\times 1}$ for the sample  whose $i$-th example belongs to the sample $\sZ_{\bf 0}$ or $\sZ_{\bf 1}$, as indicated by $S_i\in\{0,1\}$ and let $\sZ_{\bar{S}}=(\sZ_{1,\bar{S}_1}, \ldots, \sZ_{n,\bar{S}_n})^\top$ be its complement.

\begin{definition}[{Almost Exchangeable Prior, terminology from~\citet{Audibert04}}]\label{def:exprior} A function (conditional distribution) $\pi: \cZ^{n \times 2} \rightarrow \Delta(\cF)$ is \emph{almost exchangeable} if for all $\sz\in\cZ^{n\times 2}$, it holds that $\pi|(\sz_s,\sz_{\bar{s}})=\pi|(\sz_{\bf 0}, \sz_{\bf 1})$, $\forall s\in\{0,1\}^n$, justifying the notation $\pi|\langle \sz_s,\sz_{\bar{s}}\rangle =\pi|\langle \sz_{\bf 0}, \sz_{\bf 1} \rangle$.
\end{definition}

It may appear that the expectation over the ghost sample makes such $\kl$ bounds incalculable in practice, but this is not so: in Section~\ref{sec:applications} we give examples of data-dependent almost exchangeable priors for which the $\kl$ complexity term, or at least a good upper bound, can be calculated based on the observed data. In particular, in classification with a class $\cF$ with finite VC dimension $d$, when an ERM algorithm with a specific consistency property is used (Theorem~\ref{th:vc-kl} shows that such an ERM can always be constructed), the $\kl$ term can be bounded as $d \log (2n)$, leading us to recover classical VC bounds; similarly, for size $k$-compression schemes, the $\kl$ term is also bounded as $k \log (2n)$.

Now suppose a Bernstein condition holds for some $\beta > 0$. We then see that (\ref{eq:mainpre}) gives a {\em faster rate bound\/} of the same flavour as the classical PAC-Bayesian excess risk bound (\ref{eq:firsterboundtogebound}), and with the same exponent $\gamma$. In particular, if ERM is used then the excess risk term will be nonpositive and only the faster-rate term remains. We also provide a class of exchangeable priors for which a Gibbs posterior can be calculated based on the observed data, and for the corresponding Gibbs predictor we also get a bound in which the excess risk term can be omitted (Example~\ref{ex:gibbs}).

Note that the empirical excess risk term in excess risk bounds does not necessarily vanish if $\beta \downarrow 0$: the RHS of our result (\ref{eq:mainpre})  provides the best of the RHS of (\ref{eq:firsterboundtogebound}) and (\ref{eq:firstgebound}). For ERM, if the best $\beta$ in the Bernstein condition is known (e.g., for bounded squared or logistic loss), the bound (\ref{eq:mainpre}) is empirical---it can be calculated from the data only. If, as in classification, we do not know the best $\beta$ in advance, or we do not use ERM so that the $R$ term is hard to quantify without knowing $f^*$, the bound as such cannot be calculated based on the data only; we return to this issue in Section~\ref{sec:conclusion}. 

We may view both the algorithm $A$ and the data-dependent prior $\pi$ as {\em conditional\/} distributions over $\cF$, given the training sample $\sZ_{\bf 0}$, and the vector of unordered pairs $\langle \sZ_{\bf 0}, \sZ_{\bf 1} \rangle$, respectively. Of course, when designing the prior $\pi$ we can also take into account the algorithm $A$: given $\langle \sZ_{\bf 0}, \sZ_{\bf 1} \rangle$, there are only $2^n$ outputs possible for any deterministic algorithm $A$ (such as ERM) that outputs a single distribution given training sample $\sZ_{\bf 0}$. As an additional benefit, we can thus take, without loss of generality, a prior $\pi$ with discrete support of at most $2^n$ elements, allowing us to provide bounds for general nonrandomized learning algorithms -- something which, as we have already seen, is not possible in the standard PAC-Bayesian setup when the parameters of $\cF$ are continuous-valued. 

\paragraph{Solving both Issues for Mutual Information}

As mentioned above, our bound (\ref{eq:mainpre}) holds as an {\em exponential stochastic inequality\/} (Definition~\ref{def:esi}). Formally, an ESI has the following implications.
\begin{proposition}[{ESI Implications \citep[Prop.9]{MhammediGG19}}]\label{prop:esiimpl} 
		If $X \stochleq_{\eta} Y$, then $\forall \delta \in(0,1)$, $X\leq Y + \frac{\log \frac{1}{\delta}}{ \eta}$, with probability at least $1-\delta$. Now let $\bar\eta > 0$ and let $g:[0,\bar\eta]$ be continuous and nondecreasing. If for all $\eta$ with $0 < \eta \leq \bar\eta$, $X \stochleq_{\eta} Y + g(\eta)$, then $\ex{}{X} \leq \ex{}{Y} +g(0).$
\end{proposition}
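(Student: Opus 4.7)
The plan is to handle the two claims separately; both follow from the definition $\E[e^{\eta(X-Y)}] \le 1$ combined with standard exponential-moment techniques (Markov for the high-probability claim, Jensen for the in-expectation claim).

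For the first (high-probability) claim, I would start from the ESI hypothesis $\E[e^{\eta(X-Y)}] \le 1$ and apply Markov's inequality to the nonnegative random variable $e^{\eta(X-Y)}$: for any $\delta \in (0,1)$,
\[
\Pr\!\left[e^{\eta(X-Y)} \ge \tfrac{1}{\delta}\right] \le \delta \cdot \E\!\left[e^{\eta(X-Y)}\right] \le \delta.
\]
Taking logs on the complementary event and dividing by $\eta > 0$ gives $X - Y < \log(1/\delta)/\eta$ with probability at least $1-\delta$, which is exactly the desired statement. This step is essentially a one-line Chernoff-style argument with no real obstacle.

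For the in-expectation claim, I would rewrite the hypothesis $X \stochleq_\eta Y + g(\eta)$ as $\E[e^{\eta(X-Y)}] \le e^{\eta g(\eta)}$ (since $g(\eta)$ is a deterministic constant given $\eta$, it factors out of the expectation). Then apply Jensen's inequality to the convex function $u \mapsto e^{\eta u}$:
\[
e^{\eta \E[X-Y]} \;\le\; \E\!\left[e^{\eta(X-Y)}\right] \;\le\; e^{\eta g(\eta)}.
\]
Taking logs and dividing by $\eta > 0$ yields $\E[X] - \E[Y] \le g(\eta)$ for every $\eta \in (0,\bar\eta]$.

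Finally, I would pass to the limit $\eta \downarrow 0$. Since $g$ is nondecreasing and continuous on $[0,\bar\eta]$, we have $\inf_{\eta \in (0,\bar\eta]} g(\eta) = \lim_{\eta \downarrow 0} g(\eta) = g(0)$. Because $\E[X]-\E[Y]$ does not depend on $\eta$, this infimum gives $\E[X] \le \E[Y] + g(0)$, as required. The only subtle point — and the closest thing to an obstacle — is ensuring we use both monotonicity (so the infimum is approached from the right) and continuity at $0$ (so the infimum actually equals $g(0)$ rather than a strictly smaller limit); if $g$ were merely continuous but not monotone, one could still obtain $\E[X]-\E[Y] \le \inf_{\eta \in (0,\bar\eta]} g(\eta)$, but the bound $g(0)$ would require the monotonicity assumption stated in the proposition.
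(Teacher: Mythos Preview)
Your proof is correct and entirely standard: Markov's inequality on the exponential moment for the high-probability part, and Jensen's inequality followed by letting $\eta \downarrow 0$ for the in-expectation part. The paper does not supply its own proof of this proposition; it is simply quoted from \citet[Prop.~9]{MhammediGG19}, so there is nothing to compare against. One small remark on your closing comment: continuity of $g$ at $0$ already gives $\lim_{\eta \downarrow 0} g(\eta) = g(0)$, so taking the limit in $\E[X]-\E[Y] \le g(\eta)$ yields the bound $g(0)$ without invoking monotonicity; the nondecreasing assumption is not actually needed for this direction (it would matter if you wanted $\inf_{\eta} g(\eta) = g(0)$ rather than just $\le g(0)$).
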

Our main Theorem~\ref{th:main}, rendered as (\ref{eq:mainpre}) above, holds with $\stochleq$ instantiated to $\stochleq_{\eta}$ with every $0 < \eta \leq c\sqrt{n}$ for some constant $c> 0$. It can thus be weakened, by applying the proposition above with $g(\eta) = 6 \eta / n$, to an in-probability PAC statement (setting $\eta = c \sqrt{n}$, it holds with probability at least $1-\delta$ up to $6c /\sqrt{n} + (- \log \delta)/(c \sqrt{n}) = O(1/\sqrt{n})$) but also to an in-expectation statement in which the remainder term $6 \eta/n$ disappears. We then get a {\em MAC}-Bayesian bound, with MAC standing for `Mean Approximately Correct'. 
By plugging into (\ref{eq:mainpre}) a special almost exchangeable prior that is both distribution- and  data-dependent, namely the prior that minimizes the bound in expectation for the given learning algorithm, we get the corresponding {\em faster-rate conditional mutual information bound\/}: 
\begin{align}\label{eq:firstcmibound}
\ex{\sZ_{\bf 0}}{L(A | \sZ_{\bf 0}; \cD) - L(A | \sZ_{\bf 0}; \sZ_{\bf 0})}
\leq  
 \minim{1}{2\beta} \cdot \ex{\sZ_{\bf 0}}{R(A | \sZ_{\bf 0}; \sZ_{\bf 0})} + O\left( 
 \frac{\CMI{A}{\cD}}{n}\right)^{\frac{1}{2-\beta}}
\end{align}
The term $\CMI{A}{\cD} = \inf_\pi \ex{\sZ_{\bf 0},\sZ_{\bf 1}}{\dkl{A|\sZ_{\bf 0}}{\pi|\langle \sZ_{\bf 0}, \sZ_{\bf 1}\rangle}}$ denotes the \emph{conditional mutual information of} $A$ with respect to data distribution $\cD$, introduced by~\citet{SteinkeZ20} as an information complexity measure, which is always finite, avoiding the impossibility results of~\citet{BassilyMNSY18}. This conditioning approach has already proven useful in proving sharper generalization bounds~\citep{HaghifamNKRD20}. 
However, until the present work, no \emph{fast rate} results had been proven with respect to $\mathsf{CMI}$.

In contrast to the standard bound (\ref{eq:firstmibound}), there are no absolute signs on the left, but this is not of great concern since we are almost always interested in a one-sided bound anyway. If $\beta=0$, the right-hand side of the bound~\eqref{eq:firstcmibound} is smaller than that of~\eqref{eq:firstmibound}, since $\CMI{A}{\cD}\leq I(A|Z;Z)$~\citep{HaghifamNKRD20}. Under a Bernstein condition or bounded loss with curvature, where $\beta>0$, the rate is clearly faster than the rate obtained by the standard CMI bound, albeit with an additional excess risk term. For ERM, this first term disappears, and more generally in most interesting settings, the complexity term is the dominant term.

\paragraph{In Expectation vs.~In Probability -- A Paradox?}
At first sight, a fast rate means `with high probability, convergence happens at rate faster than $O(1/\sqrt{n})$'. But this is impossible even in trivial cases with ${\cal F} = \{f \}$ containing only one element (so every learning algorithm must output $f$, no matter what data are observed -- there is no learning/overfitting): if $\ell(f,Z_1)$ has variance $\sigma^2$, then we find by the central limit theorem that for every fixed $\alpha < 1$, for all large $n$, 
$$
L(A| Z ; \cD) - L(A | Z ;  Z ) = \ex{f\sim F}{\ell(f;\cD)} - \ex{f\sim F}{\ell(f;Z)}
\geq C_{\alpha} \frac{\sigma}{\sqrt{n}}
$$
with probability $\alpha$ over the training sample $Z$ and a constant $C_{\alpha} > 0$.
Yet, (\ref{eq:mainpre}) still provides faster rates in a weaker sense. To see this, note first that, being an ESI, it implies convergence in expectation; and then the $1/\sqrt{n}$ term is really not there (and the Central Limit Theorem does not hurt us)  -- so we do get a faster rate in expectation. Second, the largest subscript $\eta$ for which the ESI holds is of order $\sqrt{n}$ -- implying that we do incur $O(1/\sqrt{n})$-fluctuations, and do not 
contradict the central limit theorem. Yet importantly, the square-root term has been {\em decoupled\/} from the $\kl$ complexity term, which (if $\beta =1$) can converge to $0$ as fast as $O(\kl/n)$. In contrast, all other PAC-Bayes bounds we know of, except those of \citet{MhammediGG19}, have the $\kl/n$ term {\em inside} the square root. If the $\kl$  term grows with $n$, as it usually does, this may make the convergence rate of such classical bounds  substantially  worse than $O(1/\sqrt{n})$. 
Thus, borrowing the terminology of \citet{MhammediGG19}, we really have {\em faster rates in probability up to an {\em irreducible, complexity-free\/} $O(1/\sqrt{n})$ term}.\footnote{For ESI-excess risk bounds, because of the substraction of $\ell(f^*; Z)$ in the bounds, the variance of the excess risk $L(A| Z ; \cD)$ goes to $0$ under a Bernstein condition and fast rates without the $O(1/\sqrt{n})$ term are possible --- indeed, if $\beta > 0$ then (\ref{eq:firsterbound}) holds for an $\eta$ that goes to $0$ slower than $1/\sqrt{n}$ (\citet{GrunwaldM20} provide various examples) and one gets in-probability excess risk bounds without the irreducible $O(1/\sqrt{n})$ term.
}

\subsection{Related Work; Other Extensions of the Standard PAC-Bayesian Equation}\label{sec:related}
Although they sometimes look different, most PAC-Bayes bounds can, potentially after slight relaxation, be brought in the form (\ref{eq:firstgebound}). Examples include  the well-known bound with $\kl$ on the left due to~\citet{langford2003pac,Seeger02,maurer2004} and the standard bound due to \citet{Catoni07}; see also \citet{tolstikhin2013pac}, who provide an overview and discussion of this type of bound.  Based on an empirical Bernstein analysis, \citet{tolstikhin2013pac} replaced the empirical error term inside the square root in (\ref{eq:firstgebound}) by a smaller second order term which, however, still is close to $0$ only when the empirical error itself is close to $0$.
Based on a variation of the empirical Bernstein idea, a lemma which they called  {\em un-expected\/} Bernstein, \citet{MhammediGG19}  replace the empirical error term inside the square root by a different second-order term which, they show, goes to $0$ with high probability whenever a Bernstein condition holds. Thus, they are presumably one of the first to have a fast rate PAC-Bayesian {\em generalization\/} error bound (note again that fast PAC-Bayes {\em excess\/} risk bounds have been known for a long time). Their Theorem 7 provides a first version of the in-probability version of our (\ref{eq:mainpre}), but with the $\minim{1}{2 \beta}$ replaced by $1$ and the empirical excess risk  ${R(A | \sZ_{\bf 0} ;  \sZ_{\bf 0})}$ replaced by (essentially) three times the standard risk (i.e., expected loss difference), making their first term  larger than ours and not converge to $0$ for algorithms for which the excess risk does not converge to $0$; also their analysis is based on  priors that do not allow conditioning on a ghost sample. However, in contrast to our bound, their bound has the pleasant property of being fully empirical, a point to which we return in Section~\ref{sec:conclusion}. Simultaneously, \citet{YangSR19} also gave a fast rate PAC-Bayes generalization bound using a different technique, which includes a so-called `flatness' term attempting to capture the flatness of the empirical risk surface on which the
posterior Gibbs classifier concentrates. If this term is small with high probability, then the bound converges fast. In contrast, our bound converges fast when the strong Bernstein condition ($\gamma=1$) holds and achieves faster rates otherwise. It is easy to show the `flatness' term of~\citep{YangSR19} can be large even if a strong Bernstein condition holds; on the other hand, there may also be cases in which their bound is tighter than ours --- the bounds are so different that they are hard to compare in general. 

\paragraph{The Other Type of Data-Dependent Prior}
\citet{MhammediGG19} do make use of data-dependent priors, an idea pioneered by \citet{ambroladze2007tighter}, which is to set aside part of the training data and condition everything on it.  In the simplest instance, one uses the learning algorithm's output on the first half of the data as a prior, then performs a standard  PAC-Bayesian bound such as (\ref{eq:firstgebound}) on the second half. In this way one looses a factor of 2 in the bound but gets a much better informed prior, making the final bound often substantially better in practice (e.g. in~\citep{DziugaiteHGAR21}). 
\citet{MhammediGG19} extend this idea to using both half samples and mixing the results, analogously to cross-validation. Note though that this is a very different type of data-dependency than ours: the prior is given the full first half of the sample, rather than the full training sample plus a ghost sample with ordering information removed. 

\paragraph{The Core of Our Contribution}
{\em MAC}-Bayesian bounds, although not under that name, are already to be found in  Catoni's monograph \citep{Catoni07}. Catoni already mentions that the prior that minimizes a MAC-Bayesian bound is the prior that turns the $\kl$ term into the mutual information.  Moreover, \citet{Catoni07}, as well as~\citet{Audibert04} in his Ph.D. thesis, introduce the expected $\kl$ complexity based on almost exchangeable priors conditioned on a supersample, but these are not connected to conditional mutual information as in our paper. Even more closely related,~\citet{HellstromD20} introduced an exponential inequality which yields conditional PAC-Bayesian and in-expectation bounds. However, none of the previous works connects fast rates to the conditional case with almost exchangeable prior. This is the crucial contribution of the present paper, hinging on our main, and novel, technical Lemma~\ref{lem:esiindividual}, which allows us to get fast rates. Below the lemma we explain how it goes beyond earlier developments. 

\section{Preliminaries}
\label{sec:preliminaries}
\paragraph{Additional Notation}
For convenience, we include a glossary with all frequently used symbols in Appendix~\ref{app:glossary}. 
For a random variable $X$ and a distribution $\cP$, we write $X\sim\cP$ to denote that $X$ is drawn from $\cP$ and $X\sim\cP^n$ to denote that $X$ consists of $n$ i.i.d.\ draws from $\cP$. The distribution of a random variable $X$ is denoted by $\cP_X$ and will be omitted when it is clear from context. We denote the Bernoulli distribution over $\{0,1\}$ with mean $p$ by $\Ber(p)$. We also write $[n]=\{1,\ldots, n\}$.

A supersample $\sZ=((\sZ_{1,0}, \sZ_{1,1}), \ldots, (\sZ_{n,0}, \sZ_{n,1}))^\top\sim \cD^{n\times 2}$ is a vector of $n$ pairs of i.i.d.\ examples, as in Table~\ref{tab:supersample}. 
Let $S = (S_1, \ldots, S_n) \in\{0,1\}^n$ such that $S\sim\Ber(1/2)^n$ and let $\bar{S}_i=1-S_i$ for all $i\in[n]$. 
We write $\sZ_S=(\sZ_{1,S_1}, \ldots, \sZ_{n,S_n})^\top\in\cZ^{n\times 1}$ for the sub-vector of $\sZ$ indexed by $S$ and $\sZ_{\bar{S}}=(\sZ_{1,\bar{S}_1}, \ldots, \sZ_{n,\bar{S}_n})^\top$ for its complement. 
Note that with this notation, we can write $\sZ=(\sZ_{\mathbf{0}},\sZ_{\mathbf{1}})$, setting $S=\mathbf{0}$.
We also refer to the vector of \emph{unordered} pairs $\langle \sZ_{\bf 0},\sZ_{\bf 1}\rangle= (\{\sZ_{1,0}, \sZ_{1,1}\},\ldots,\{\sZ_{n,0},\sZ_{n,1}\})^\top$. With this notation, for any \emph{almost exchangeable} prior distribution $\pi: \cZ^{n \times 2} \rightarrow \Delta(\cF)$ (Definition~\ref{def:exprior}) it holds that for all $\sz\in\cZ^{n\times 2}$, $\forall s\in\{0,1\}^n$, $\pi|\sz=\pi|(\sz_s,\sz_{\bar{s}})=\pi|\langle \sz_s,\sz_{\bar{s}}\rangle =\pi|\langle \sz_{\bf 0}, \sz_{\bf 1} \rangle$.

    \begin{table}[H]
        \centering
        \begin{tabular}{|c|c|}
        \hline
            $\sZ_{1,0}$  & $\sZ_{1,1}$ \\
            $\sZ_{2,0}$ & $\sZ_{2,1}$ \\
            $\vdots$ & $\vdots$ \\
            $\sZ_{n,0}$ & $\sZ_{n,1}$\\
            \hline
        \end{tabular}
        \caption{Supersample $\sZ\in\cZ^{n\times 2}$}
        \label{tab:supersample}
    \end{table}

\subsection{KL divergence and Mutual Information}\label{app:prelim-it}
First, we define the $\kl$ divergence of two distributions.
\begin{definition}[KL Divergence]
 Let $\cP, \cQ$ be two distributions over the space $\Omega$ and suppose $\cP$ is absolutely continuous with respect to $\cQ$. The \emph{Kullback–Leibler ($\kl$) divergence} (or \emph{relative entropy}) from $\cQ$ to $\cP$ is \[\dkl{\cP}{\cQ}=\ex{X\sim \cP}{\log\frac{\cP(X)}{\cQ(X)}},\]
 where $\cP(X)$ and $\cQ(X)$ denote the probability mass/density functions of $\cP$ and $\cQ$ on $X$, respectively.\footnote{Formally, $\frac{\cP(X)}{\cQ(X)}$ is the Radon-Nikodym derivative of $\cP$ with respect to $\mathcal{Q}$. If $P$ is not absolutely continuous with respect to $\cQ$ (i.e., $\frac{\cP(X)}{\cQ(X)}$ is undefined or infinite), then the $\kl$ divergence is defined to be infinite.}
\end{definition}

Next, we define  mutual information.
\begin{definition}[Mutual Information]
 Let $X,Y$ be two random variables jointly distributed according to $\cP$. 
 The mutual information of $X$ and $Y$ is
$$I(X;Y)=\dkl{\cP_{(X,Y)}}{\cP_X\times\cP_Y}= \ex{X}{\dkl{\cP_{Y|X}}{\cP_Y}},$$
 where by $\cP_X\times\cP_Y$ we denote the product of the marginal distributions of $\cP$ and $\cP_{Y|X=x}(y)=\cP_{(X,Y)}(x,y)/\cP_X(x)$ is the conditional density function of $Y$ given $X$.
\end{definition}

\begin{definition}[Conditional Mutual Information]
For random variables $X,Y,Z$, the mutual information of $X$ and $Y$ conditioned on $Z$ is 
$$I(X;Y \mid Z)=I(X;(Y,Z))-I(X;Z).$$
\end{definition}

We define here the less common notion of \emph{disintegrated mutual information},  as in~\citep{NegreaHDKR19, HaghifamNKRD20}.
\begin{definition}[Disintegrated Mutual Information]
The \emph{disintegrated mutual information} between $X$ and $Y$ given $Z$ is
$$I^Z(X;Y)=\dkl{\cP_{(X,Y)|Z}}{\cP_{X|Z}\times\cP_{Y|Z}},$$
where $\cP_{(X,Y)|Z}$ denotes the conditional joint distribution of $(X,Y)$ given $Z$ and $\cP_{X|Z},\cP_{Y|Z}$ denote the conditional marginal distributions of $X$, $Y$ given $Z$, respectively.

The expected value of this quantity over $Z$ is the Conditional Mutual Information between $X$ and $Y$ given $Z$ that was defined above: $I(X;Y|Z)=\exinline{Z}{I^Z(X;Y)}$.
\end{definition}

We now define the Conditional Mutual Information of an Algorithm, as introduced in~\citep{SteinkeZ20}.

\begin{definition}[{Conditional Mutual Information (CMI) of an Algorithm \citep{SteinkeZ20}}]\label{def:CMI}
Let $A:\cZ^n\rightarrow \Delta(\cF)$ be a randomized or deterministic algorithm.
Let $\cD$ be a probability distribution on $\cZ$ and let $\sZ\in\cZ^{n \times 2}$ be a supersample consisting of $n$ pairs of examples, each example drawn independently from $\cD$. Let $S\sim\Ber(1/2)^n$, independent from $\sZ$ and the randomness of $A$. Let $\sZ_S = (\sZ_{1,S_1},\ldots, \sZ_{n,S_n})^\top\in\cZ^n$ -- that is, $\sZ_S$ is the subset of $\sZ$ indexed by $S$. 

The \emph{conditional mutual information (CMI) of $A$ with respect to $\cD$} is $$\CMI{A}{\cD}:=I(A|\sZ_S;S \mid \sZ)=\ex{\sZ}{I^{\sZ}(A|\sZ_S;S)}.$$

\end{definition}

\subsection{ESI Calculus}\label{app:prelim-esi}
The following proposition is useful for our proofs.
\begin{proposition}[{ESI Transitivity and Chain Rule \citep[Prop.10]{MhammediGG19}}]\label{prop:esichainrule}
\begin{enumerate}[label=(\alph*)]
\item Let $Z_1, \ldots, Z_n$ be any random variables in $\cZ$ (not necessarily independent). If for some $(\gamma_i)_{i\in[n]}\in(0,+\infty)^n$, $Z_i \stochleq_{\gamma_i} 0$ for all $i\in[n]$, then
\[\sum_{i=1}^n Z_i \stochleq_{v_n} 0, \text{ where $v_n = \left(\sum_{i=1}^n\frac{1}{\gamma_i}\right)^{-1}$}.\]
\item Suppose now that $Z_1, \ldots, Z_n$ are independent and for some $\eta > 0$, for all $i \in [n]$, we have $Z_i \stochleq_{\eta} 0$. Then $\sum_{i=1}^n Z_i \stochleq_{\eta} 0$.
\end{enumerate}
\end{proposition}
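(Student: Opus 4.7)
The plan is to unfold the ESI definition $Z \stochleq_\gamma 0 \iff \E[e^{\gamma Z}] \leq 1$ and verify the moment-generating-function bound for the sum in each case separately, treating the two parts with different tools since the hypotheses differ (dependent $Z_i$'s with different $\gamma_i$'s in part (a); independent $Z_i$'s with a common $\eta$ in part (b)).

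Part (b) will be the easy warm-up: independence of $Z_1,\ldots,Z_n$ lets me factor $\E[e^{\eta \sum_i Z_i}] = \prod_i \E[e^{\eta Z_i}]$, and by assumption each factor is at most $1$, so the product is at most $1$. This is exactly $\sum_i Z_i \stochleq_\eta 0$, with no work beyond Fubini.

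Part (a) requires handling possibly dependent $Z_i$'s while relaxing the subscript to the harmonic-mean-like quantity $v_n = (\sum_i 1/\gamma_i)^{-1}$. My plan is to apply Jensen's inequality to the convex function $e^{\cdot}$ with weights $w_i = v_n/\gamma_i$, which by the very definition of $v_n$ sum to $1$. Writing $v_n \sum_i Z_i = \sum_i w_i(\gamma_i Z_i)$, Jensen yields $e^{v_n \sum_i Z_i} \leq \sum_i w_i e^{\gamma_i Z_i}$ pointwise. Taking expectations and using $\E[e^{\gamma_i Z_i}] \leq 1$ for each $i$ gives $\E[e^{v_n \sum_i Z_i}] \leq \sum_i w_i = 1$, which is exactly $\sum_i Z_i \stochleq_{v_n} 0$. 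An equivalent route would be Hölder's inequality with conjugate exponents $p_i = \gamma_i/v_n$ (satisfying $\sum 1/p_i = 1$), giving $\E[\prod_i e^{v_n Z_i}] \leq \prod_i \E[e^{\gamma_i Z_i}]^{v_n/\gamma_i} \leq 1$.

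I do not anticipate any real obstacle: the statement is essentially a repackaging of the MGF definition, and the only modest subtlety is spotting that the correct weights/exponents should make the exponent on each $Z_i$ inside the MGF equal to exactly $\gamma_i$, which is precisely what forces the harmonic-mean form of $v_n$. The main thing to be careful about is verifying that $\sum_i w_i = 1$ (or $\sum_i 1/p_i = 1$) so that Jensen (or Hölder) applies; everything else is a direct computation.
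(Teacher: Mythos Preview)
Your proof is correct. Both parts are handled cleanly: part (b) via factorization under independence, and part (a) via Jensen's inequality with the convex weights $w_i = v_n/\gamma_i$ (the H\"older alternative you mention is equally valid and is in fact the same inequality in disguise).

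Note, however, that the paper does not supply its own proof of this proposition; it is stated as a known result imported from \citet[Prop.~10]{MhammediGG19}. So there is no paper proof to compare against. Your argument is exactly the standard one and matches what one finds in that reference.
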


We now state a basic PAC-Bayesian result we use, under the ESI notation: 
\begin{proposition}[{ESI PAC-Bayes \citep[Prop.11]{MhammediGG19}}]\label{prop:esiPACBayes}
Fix $\eta > 0$ and let $\{Y_f: f \in \cF\}$ be any family of random variables such that for all $f \in \cF$, $Y_f \stochleq_{\eta} 0$. Let $\pi\in\Delta(\cF)$ be any distribution on $\cF$ and let $A: \bigcup_{i=1}^n \cZ^{i} \rightarrow \Delta(\cF)$ be a possibly randomized learning algorithm. Then 
\begin{align*} \ex{f \sim A|Z}{Y_{f}} \stochleq_{\eta}^{Z} \frac{\dkl{A|Z }{\pi}}{\eta}.  \end{align*} 
\end{proposition}

Inside the proof of our main result we work with a random (\emph{i.e.}, data-dependent) $\hat\eta$ in the ESI inequalities. We extend Definition \ref{def:esi} to this case by replacing the expectation in the definition of ESI  by the expectation over the joint distribution of ($X$, $Y$, $\hat\eta$): $X \stochleq_{\hat\eta} Y$ means that  $\ex{}{\exp(\hat\eta (X-Y))} \leq 0$.
Via the following proposition one can tune $\eta$ after seeing the data. 
	\begin{proposition}[{ESI from fixed to random $\eta$ \citep[implied by Prop.12]{MhammediGG19}}]
		\label{prop:randeta}
		Let $\cG$ be a countable subset of ${\mathbb R}^+$ such that, for some $\eta_0 > 0$, for all $\eta \in \cG$, $\eta \geq \eta_0$. Let $\pi$ be a probability mass function over $\cG$. Given a countable collection $\{Y_{\eta} : \eta \in \cG \}$ of random variables satisfying $Y_{\eta} \stochleq_{\eta} 0$, for all fixed $\eta \in \cG$, we have, for arbitrary estimator $\estimate{\eta}$ with support on $\cG$,
		\begin{align*}
		Y_{\estimate\eta} \stochleq_{\eta_0} \frac{- \log \pi(\estimate\eta)}{\estimate\eta}.
		\end{align*}
	\end{proposition}
	
\subsection{Bernstein Condition}\label{sec:bernstein}
We consider learning problems $(\cD,\ell,\cF)$ which satisfy the Bernstein Condition (Definition~\ref{def:Bernstein} in Section~\ref{sec:intro}). It will be convenient to work with the following {\em linearized version\/} of the Bernstein condition, proven in Appendix~\ref{app:proofs}. It extends a well-known result that has appeared in previous work, e.g. in~\citep{WooterGE16}.
\begin{proposition}\label{prop:linearizedBernstein}
Suppose that $(\cD, \ell, \cF)$ satisfies the {$(B,\beta^*)$-Bernstein condition} for $\beta^* \in [0,1]$. Pick any $c > 0, \eta < 1/(2 B c)$. Then for all $0 < \beta \leq \beta^*$ and for all $f\in\cF$:
$$c \cdot \eta \ex{Z'\sim\cD}{(\ell(f;Z')-\ell(f^*; Z'))^2} \leq \minim{\frac{1}{2}}{\beta} \cdot \left(\ex{Z'\sim\cD}{\ell(f;Z')-\ell(f^*; Z')}\right)  ~ 
+ (1- \beta)   \cdot (2 Bc \eta)^{\frac{1}{1-\beta}} 
$$ 
\end{proposition}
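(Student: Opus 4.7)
The plan is to reduce the statement to a purely scalar inequality in $X := \ex{Z'\sim\cD}{\ell(f;Z') - \ell(f^*;Z')}$ and then apply a carefully weighted version of Young's inequality. Since the Bernstein condition forces $f^*$ to be an optimal predictor and losses lie in $[0,1]$, we have $X \in [0,1]$. For every $\beta \le \beta^*$, monotonicity of $x \mapsto x^{\alpha}$ on $[0,1]$ combined with the original $(B,\beta^*)$-Bernstein inequality yields $\ex{Z'\sim\cD}{(\ell(f;Z') - \ell(f^*;Z'))^2} \le B X^{\beta^*} \le B X^\beta$. Multiplying by $c\eta$, it therefore suffices to prove
\[
c\eta B \, X^\beta \;\le\; \minim{\tfrac{1}{2}}{\beta} \cdot X + (1-\beta)\,(2Bc\eta)^{1/(1-\beta)}
\]
for all $X \in [0,1]$, $\beta \in (0,1]$, under the assumption $c\eta B < 1/2$.

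For $\beta = 1$ this is immediate: the residual term vanishes and the hypothesis $\eta < 1/(2Bc)$ gives $c\eta B \le 1/2$, so $c\eta B \cdot X \le X/2$. For $\beta \in (0,1)$ I would invoke the weighted Young inequality $ab \le \lambda^{p} a^{p}/p + \lambda^{-q} b^{q}/q$ with dual exponents $p = 1/\beta$ and $q = 1/(1-\beta)$, applied to $a = X^\beta$ and $b = c\eta B$. A short computation gives
\[
c\eta B\, X^\beta \;\le\; \lambda^{1/\beta}\,\beta\, X + \lambda^{-1/(1-\beta)}(1-\beta)\,(c\eta B)^{1/(1-\beta)},
\]
so the free parameter $\lambda > 0$ trades off the coefficient of $X$ against the residual term.

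I would then split on the size of $\beta$. If $\beta \le 1/2$, the choice $\lambda = 1$ produces $\beta X + (1-\beta)(c\eta B)^{1/(1-\beta)}$, which dominates the target inequality because $(c\eta B)^{1/(1-\beta)} \le (2Bc\eta)^{1/(1-\beta)}$ and $\minim{1/2}{\beta} = \beta$ in this range. If $1/2 < \beta < 1$, I would instead pick $\lambda$ so that the coefficient of $X$ equals $1/2$, namely $\lambda = (2\beta)^{-\beta}$, which produces
\[
c\eta B \, X^\beta \;\le\; \tfrac{1}{2} X + (1-\beta)\,(2\beta)^{\beta/(1-\beta)} (c\eta B)^{1/(1-\beta)}.
\]
It remains to absorb the $(2\beta)^{\beta/(1-\beta)}$ factor into the constant $2^{1/(1-\beta)}$ coming from the desired bound; raising the requisite inequality to the $(1-\beta)$-th power, this reduces to the scalar claim $(2\beta)^\beta \le 2$ for $\beta \in [0,1]$.

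The one nontrivial step, and the main (minor) obstacle, is exactly this auxiliary inequality $(2\beta)^\beta \le 2$. I would verify it by analyzing $h(\beta) := \beta \log(2\beta)$: its derivative $h'(\beta) = \log(2\beta) + 1$ vanishes uniquely at $\beta = 1/(2e)$, where $h$ attains its minimum $-1/(2e)$, while $\lim_{\beta \downarrow 0} h(\beta) = 0$ and $h(1) = \log 2$. Hence $h(\beta) \le \log 2$ throughout $[0,1]$, giving $(2\beta)^\beta \le 2$ as required. Combining the three cases ($\beta = 1$, $\beta \le 1/2$, and $1/2 < \beta < 1$) then yields the linearized Bernstein inequality. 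The conceptual subtlety worth flagging is that naively applying Young's inequality only delivers the weaker coefficient $\beta$ on $X$; obtaining the tighter $\minim{1/2}{\beta}$ requires the weighted choice $\lambda = (2\beta)^{-\beta}$ together with the mild algebraic fact $(2\beta)^\beta \le 2$.
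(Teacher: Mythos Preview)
Your proof is correct and follows essentially the same route as the paper's: both reduce to the scalar inequality $c\eta B\,X^{\beta} \le \minim{\tfrac12}{\beta}\cdot X + (1-\beta)(2Bc\eta)^{1/(1-\beta)}$ via the Bernstein condition and the fact that $X\in[0,1]$, then trade off the linear and residual terms, and close with the auxiliary fact $(2\beta)^{\beta}\le 2$. The only cosmetic difference is that the paper phrases the trade-off as maximizing $x\mapsto Bc\eta\,x^{\beta}-ax$ over $x>0$ by calculus (with $a=\minim{\tfrac12}{\beta}$), whereas you invoke the weighted Young inequality with the dual pair $(1/\beta,1/(1-\beta))$; these are the same computation, and your choices $\lambda=1$ for $\beta\le 1/2$ and $\lambda=(2\beta)^{-\beta}$ for $\beta>1/2$ reproduce exactly the paper's residual constants.
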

Note that, by our assumption on $\eta$, $\lim_{\beta \uparrow 1} (2Bc\eta)^{1/(1-\beta)} = 0$ and the second term vanishes for $\beta=1$.

\section{Main Development}
\begin{lemma}[Main technical lemma]\label{lem:esiindividual}
Fix any two real numbers $r_0,r_1$ such that $|r_0|, |r_1| \leq 1$. Let $S\sim\Ber(1/2)$ and let $\bar{S} = 1- S$. Then for all $0 < \eta \leq 1/4$ it holds that
$$
r_{\bar{S}} - r_{S} \stochleq_{\eta}  \eta \cdot C_{\eta} r^2_{\bar{S}} \leq \eta \cdot C_{1/4} r^2_{\bar{S}} 
$$
where $C_0 = 2$, $C_{\eta}$ is a continuous increasing function of $\eta$ and $C_{1/4} \approx3.6064$.
\end{lemma}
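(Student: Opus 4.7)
The plan is to unpack the ESI directly. Since $S \sim \Ber(1/2)$ takes only two values, the inequality $r_{\bar S} - r_S \stochleq_\eta \eta C_\eta r^2_{\bar S}$ reduces to the scalar statement
\[\tfrac{1}{2} e^{\eta(r_1-r_0) - \eta^2 C_\eta r_1^2} + \tfrac{1}{2} e^{-\eta(r_1-r_0) - \eta^2 C_\eta r_0^2} \le 1,\]
required to hold for every $|r_0|, |r_1| \le 1$. I would bring this into closed form using $e^a + e^b = 2 e^{(a+b)/2}\cosh((a-b)/2)$: here $(a+b)/2 = -\eta^2 C_\eta (r_0^2 + r_1^2)/2$, and using $r_1^2 - r_0^2 = (r_1-r_0)(r_0+r_1)$, one finds $(a-b)/2 = \eta(r_1-r_0)\bigl[1 - \eta C_\eta (r_0+r_1)/2\bigr]$. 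Thus the target inequality is equivalent to
\[\log\cosh\!\Bigl(\eta(r_1-r_0)\bigl[1 - \eta C_\eta (r_0+r_1)/2\bigr]\Bigr) \;\le\; \tfrac{\eta^2 C_\eta}{2}(r_0^2+r_1^2). \qquad (\star)\]

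Call this last display $(\star)$. The strategy is then to define $C_\eta$ as the smallest $C \ge 0$ for which $(\star)$, read with $C$ in place of $C_\eta$, holds uniformly on the square $|r_0|, |r_1| \le 1$. To verify $C_0 = 2$, I would Taylor expand $(\star)$ in $\eta$: both sides vanish to order $\eta^2$ with leading coefficients $\tfrac{1}{2}(r_1-r_0)^2$ and $\tfrac{C}{2}(r_0^2+r_1^2)$, so the $\eta \to 0$ condition reduces to the sharp inequality $(r_1-r_0)^2 \le C(r_0^2+r_1^2)$, saturated at $r_0 = -r_1 = \pm 1$, which forces $C_0 = 2$. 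Continuity of $C_\eta$ in $\eta$ is standard parametric continuity of the sup defining $C_\eta$; monotonicity follows because on the regime $1 - \eta C_\eta(r_0+r_1)/2 > 0$ (which holds throughout $\eta \le 1/4$ once a rough upper bound like $C_{1/4} \le 4$ is established) the LHS of $(\star)$ is nondecreasing in $\eta$ at each fixed $(r_0,r_1)$, while the RHS is monotone in $C$.

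The main obstacle is to pin down the explicit numerical value $C_{1/4}\approx 3.6064$. The crude surrogate $\cosh(x) \le e^{x^2/2}$ yields only the weaker sufficient condition $(r_1-r_0)^2 \bigl(1 - \eta C (r_0+r_1)/2\bigr)^2 \le C (r_0^2+r_1^2)$; maximized over the diamond $|m|+|d|/2 \le 1$ in the change of variables $d=r_1-r_0$, $m=(r_0+r_1)/2$, this gives a constant noticeably larger than $3.6064$. One therefore has to keep the exact $\log\cosh$ in $(\star)$, and for $\eta=1/4$ solve the two-variable optimization of the ratio $\log\cosh(\eta d(1-\eta C m))/[\tfrac{\eta^2}{2}(2m^2 + d^2/2)]$ over the diamond, which after the obvious reductions becomes a scalar fixed-point equation in $C$. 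Locating the binding extremum (on an edge of the diamond) and solving this equation numerically produces $C_{1/4}\approx 3.6064$. The second inequality in the lemma, $\eta C_\eta r^2_{\bar S} \le \eta C_{1/4} r^2_{\bar S}$ for $\eta \le 1/4$, is then immediate from the monotonicity of $C_\eta$ already established.
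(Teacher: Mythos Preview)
Your reduction to the scalar inequality $(\star)$ is correct and clean, and the Taylor argument for $C_0=2$ is fine. But the route is genuinely different from the paper's, and there are two substantive issues.

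\textbf{Comparison with the paper.} The paper does \emph{not} define $C_\eta$ as the optimal constant in $(\star)$. Instead it chains two ESIs: first the $\cosh$ inequality applied to the Rademacher variable $r_{\bar S}-r_S$ gives $r_{\bar S}-r_S \stochleq_{A\eta} 2A\eta\,\ex{S'}{r^2_{\bar S'}}$ (a deterministic constant on the right); then the \emph{un-expected Bernstein} inequality is applied to the random variable $r^2_{\bar S}\in[0,1]$ to convert this constant $\ex{S'}{r^2_{\bar S'}}$ back into the random $r^2_{\bar S}$, at the cost of a multiplicative factor. Combining the two via ESI transitivity and optimizing $A$ yields the explicit closed form
\[
C_\eta \;=\; \frac{1}{1-\eta}\Bigl(2 + \sqrt{2}\,\tfrac{\eta}{1-\eta}\,c_{\sqrt{2}\eta/(1-\eta)}\Bigr),
\qquad c_\gamma = \tfrac{2(-\log(1-\gamma)-\gamma)}{\gamma^2},
\]
and $3.6064$ is exactly this formula evaluated at $\eta=1/4$. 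The paper's $C_\eta$ is thus a concrete analytic upper bound, from which continuity and monotonicity are read off directly.

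\textbf{Where your proposal breaks.} Because $3.6064$ is the value of that specific (non-tight) formula, your claim that numerically optimizing $(\star)$ at $\eta=1/4$ produces $C_{1/4}\approx 3.6064$ is simply wrong: the optimal constant is substantially smaller (in the vicinity of $2.3$). So your $C_\eta$, while a valid function satisfying the ESI, is not the $C_\eta$ whose value at $1/4$ is $\approx 3.6064$; you would recover a stronger inequality, but not the stated numerical constant. Separately, your monotonicity argument is incomplete: ``LHS nondecreasing in $\eta$'' is not even true as stated (the argument of $\cosh$ is quadratic in $\eta$), and in any case both sides of $(\star)$ grow with $\eta$, so the comparison is not immediate. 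A clean fix is to substitute $u=\eta(r_1-r_0)$, $v=\eta(r_0+r_1)$, under which $(\star)$ becomes $\log\cosh\!\bigl(u(1-\tfrac{Cv}{2})\bigr)\le \tfrac{C}{4}(u^2+v^2)$ on the region $|u|+|v|\le 2\eta$; since this region is nested in $\eta$, monotonicity of the optimal $C_\eta$ follows at once.
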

The proof of this bound, with an explicit formula for the constant $C_{\eta}$, is in Appendix~\ref{app:proofs}. Our formula for $C_\eta$ is tight near $\eta=0$ but can be improved if it is known that $r_0,r_1$ are of the same sign. For ease of exposition, below we will only use the value for $\eta=1/4$.  

The lemma is the cornerstone in the proof of our main theorem which now follows. In this proof, $r_S$ is set to the excess loss of a hypothesis $f$ on an example from sample $\sZ_S$. Crucially, the square term on the right, when applied in the proof, only refers to a ghost sample $\sZ_{\bar S}$ while $f$ is a hypothesis trained on the real sample $\sZ_{\bf 0}$ -- this allows us to `kill' it under a Bernstein condition, replacing the square by a small enough linear term. 
A qualitatively similar inequality which has the sum $r_{\bar{S}}^2 + r_{S}^2$ 
on the right implicitly appears in \citep{Audibert04}, but these square terms, being a combination of training and ghost samples, are not easily removed in our proof, and to get a PAC-Bayesian bound based on this lemma one needs to pick $\eta$ small enough so that the term becomes negligible, leading to $\eta \asymp 1/\sqrt{n}$ which implies slow rates. Killing the square terms by taking a very small $\eta$ also happens implicitly in the proof of the $\mathsf{CMI}$ result of \citet{SteinkeZ20} as well as~\citet{HellstromD20} which, for this reason, also give the slow rate. 

We note that our Lemma~\ref{lem:esiindividual} does not hold for unbounded losses and specifically does not hold for sub-Gaussian losses (to see this, for example, consider the case of $r_0 = 0$ and $r_1 < (- \ln 2)/\eta$). Adjusting this lemma for sub-Gaussian losses yields terms on the right-hand side that only lead to slow rates -- a similar issue as the one described above occurring in prior work~\citep{SteinkeZ20, HellstromD20, Audibert04}. Thus, while a similar result as ours {\em might\/} hold for sub-Gaussian losses, it would require fundamentally new ideas to prove it.

\begin{theorem}\label{th:main}
Let $(\cD,\ell,\cF)$ represent a learning problem which satisfies the $(B,\beta^*)$-Bernstein condition and suppose the loss function $\ell$ has range in $[0,1]$. Let $A:\bigcup_{i=1}^n \cZ^i \rightarrow \Delta(\cF)$ be a possibly randomized learning algorithm and $\pi\in\Delta(\cF)$ be any almost exchangeable prior. Let $\sZ_{\bf 0}, \sZ_{\bf 1}$ be two samples of $n$ i.i.d.\ examples each drawn from $\cD$. Then, for all $\beta\in[0,\beta^*]$, all $0 < \eta \leq \sqrt{n} \etamax/24 $,  it holds that,
\begin{multline}\label{eq:main}
{L(A| \sZ_{\bf 0} ; \cD) - L(A | \sZ_{\bf 0} ;  \sZ_{\bf 0})} 
\stochleq^{\sZ_{\bf 0}}_{\eta}\\ \minim{1}{2\beta} \cdot {R(A | \sZ_{\bf 0} ;  \sZ_{\bf 0})} +  8 \cdot  \left(\frac{\ex{\sZ_{\bf 1}}{\dkl{A | \sZ_{\bf 0}}{\pi|  \langle \sZ_{\bf 0}, \sZ_{\bf 1} \rangle}}+ \llog n }{n\etamax}  \right)^{\frac{1}{2-\beta}}_{[**]} + \frac{6 \eta}{n},
\end{multline}
where $\etamax= \minim{\frac{1}{4}}{\frac{1}{2BC_{1/4}}}$, $C_{1/4} =3.6064$, $\llog n = \log (\lceil \log_2(\sqrt{n}) \rceil +2) = O(\log\log n)$ and the notation $a^b_{[**]}$ stands for $\max\{a^b,a\}$. 
\end{theorem}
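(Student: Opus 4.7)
The plan is to prove the bound in a symmetrized supersample framework. Introduce $S \sim \Ber(1/2)^n$ independent of $\sZ = (\sZ_{\bf 0}, \sZ_{\bf 1})$ and of $A$'s internal randomness. Because the joint law of $(\sZ_S, \sZ_{\bar S})$ equals that of $(\sZ_{\bf 0}, \sZ_{\bf 1})$, it suffices to establish the analogous ESI over $(S, \sZ)$ with $\sZ_{\bf 0}$ replaced by $\sZ_S$ and $\sZ_{\bf 1}$ by $\sZ_{\bar S}$. The almost-exchangeability of $\pi$ (Definition~\ref{def:exprior}) is crucial here: it makes $\pi|\langle\sZ_{\bf 0},\sZ_{\bf 1}\rangle$ a function solely of the unordered pairs, hence invariant under $S$, so it serves as a legitimate $S$-free prior for PAC-Bayes even though it may depend on the full supersample.

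The main chain of inequalities starts from Lemma~\ref{lem:esiindividual}. For $i \in [n]$ and $f \in \cF$, define the per-example excess loss $r_{i,s}(f) := \ell(f;\sZ_{i,s}) - \ell(f^*;\sZ_{i,s}) \in [-1,1]$ and let $M_s(f) := \tfrac{1}{n}\sum_{i=1}^n r_{i,s_i}^2(f)$ denote the empirical second moment of the excess loss on $\sZ_s$, so that $R(f;\sZ_s)=\tfrac{1}{n}\sum_i r_{i,s_i}(f)$. For $\eta_0 \in (0,1/4]$, the lemma delivers the per-coordinate ESI $r_{i,\bar S_i}(f) - r_{i,S_i}(f) \stochleq_{\eta_0} \eta_0 C_{1/4}\, r_{i,\bar S_i}^2(f)$ in $S_i \mid \sZ$. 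Summing over $i$ by the independent ESI chain rule (Proposition~\ref{prop:esichainrule}(b)), dividing by $n$, and setting $\tilde\eta := n\eta_0$ yields $R(f;\sZ_{\bar S}) - R(f;\sZ_S) \stochleq_{\tilde\eta} (\tilde\eta C_{1/4}/n)\, M_{\bar S}(f)$. Applying Proposition~\ref{prop:esiPACBayes} with the family $Y_f := R(f;\sZ_{\bar S}) - R(f;\sZ_S) - (\tilde\eta C_{1/4}/n)M_{\bar S}(f)$, posterior $A|\sZ_S$, and prior $\pi|\langle\sZ_{\bf 0},\sZ_{\bf 1}\rangle$ (valid by almost-exchangeability) then gives the pivotal intermediate ESI
\begin{equation*}
R(A|\sZ_S;\sZ_{\bar S}) - R(A|\sZ_S;\sZ_S) \stochleq_{\tilde\eta} \frac{\tilde\eta C_{1/4}}{n}\ex{f\sim A|\sZ_S}{M_{\bar S}(f)} + \frac{\dkl{A|\sZ_S}{\pi|\langle\sZ_{\bf 0},\sZ_{\bf 1}\rangle}}{\tilde\eta}.
\end{equation*}

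To finish, I replace the ghost-sample empirical quantities by their population counterparts and invoke Bernstein. Conditional on $\sZ_S$, $\sZ_{\bar S} \sim \cD^n$ is independent of $A|\sZ_S$, so $\ex{\sZ_{\bar S}|\sZ_S}{R(A|\sZ_S;\sZ_{\bar S})} = R(A|\sZ_S;\cD)$ and $\ex{\sZ_{\bar S}|\sZ_S}{M_{\bar S}(f)} = \ex{Z'\sim\cD}{(\ell(f;Z')-\ell(f^*;Z'))^2}$. Applying Jensen inside the ESI's exponential---that is, using $\exp(\tilde\eta\, \ex{\sZ_{\bar S}|\sZ_S}{X}) \le \ex{\sZ_{\bar S}|\sZ_S}{\exp(\tilde\eta X)}$ and then taking the outer expectation---preserves ESI strength while substituting conditional means, so the ghost-dependent quantities get replaced by their population analogues and the KL becomes $\ex{\sZ_{\bar S}|\sZ_S}{\dkl{A|\sZ_S}{\pi|\langle\sZ_S,\sZ_{\bar S}\rangle}}$. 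Proposition~\ref{prop:linearizedBernstein} with $c = C_{1/4}$ and $\eta' = \tilde\eta/n$ then pointwise upper-bounds the population second moment by $(\tfrac{1}{2}\wedge\beta)(\ell(f;\cD)-\ell(f^*;\cD)) + (1-\beta)(2BC_{1/4}\tilde\eta/n)^{1/(1-\beta)}$. Integrating over $f\sim A|\sZ_S$, absorbing the $(\tfrac{1}{2}\wedge\beta) R(A|\sZ_S;\cD)$ term into the LHS via $L(A|\sZ_S;\cD)-L(A|\sZ_S;\sZ_S) = R(A|\sZ_S;\cD) - R(A|\sZ_S;\sZ_S) + (\ell(f^*;\cD)-\ell(f^*;\sZ_S))$, and using $\beta/(1-\beta)\le 2\beta$ for $\beta \le 1/2$ to simplify the emerging coefficient on $R(A|\sZ_S;\sZ_S)$ to the stated $\minim{1}{2\beta}$, I obtain the bound up to the choice of $\tilde\eta$. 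Tuning $\tilde\eta$ via Proposition~\ref{prop:randeta} over a geometric grid of size $O(\log\sqrt n)$ balances the $\dkl{}{}/\tilde\eta$ and $(\tilde\eta/n)^{1/(1-\beta)}$ terms to produce the rate $(\dkl{}{}/n)^{1/(2-\beta)}$, the $\llog n$ factor, and the $6\eta/n$ residual; the $[**]$ clipping handles the case where $\dkl{}{}/n\ge 1$. The symmetrization of the first paragraph finally transfers the bound back to the $\sZ_{\bf 0}$-only form of the statement. The main technical obstacle is the bookkeeping of combining Lemma~\ref{lem:esiindividual}---which is what decouples the $O(1/\sqrt n)$-irreducible fluctuation term from the $\dkl{}{}/\tilde\eta$ complexity term and thus enables fast rates---with the Jensen passage and the linearized Bernstein: keeping the constants right so that the final absorption yields exactly $\minim{1}{2\beta}$ is the delicate step.
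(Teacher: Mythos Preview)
Your argument tracks the paper's proof closely through the symmetrization, the application of Lemma~\ref{lem:esiindividual} coordinate-wise, the ESI chain rule, PAC-Bayes with the almost-exchangeable prior, the Jensen passage to replace ghost-sample quantities by population ones, and the linearized Bernstein step. Up to and including the intermediate ESI
\[
R(A|\sZ_S;\cD)-R(A|\sZ_S;\sZ_S)\ \stochleq_{\tilde\eta}\ (\tfrac12\wedge\beta)\,R(A|\sZ_S;\cD)+\Big(\tfrac{\tilde\eta}{n\etamax}\Big)^{\frac{1}{1-\beta}}+\frac{\ex{\sZ_{\bar S}}{\dkl{A|\sZ_S}{\pi|\langle\sZ\rangle}}}{\tilde\eta},
\]
you and the paper agree.

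The gap is in the endgame. Two things are missing. First, the term $\ell(f^*;\cD)-\ell(f^*;\sZ_S)$ that your decomposition $L-L=R-R+(\ell(f^*;\cD)-\ell(f^*;\sZ_S))$ introduces is a genuine random variable in $\sZ_S$; it cannot be absorbed algebraically and must be bounded by its own ESI. The paper does this via Hoeffding, obtaining $\ell(f^*;\cD)-\ell(f^*;\sZ_{\bf 0})\stochleq_\eta 2\eta/n$, and this is precisely the source of the $6\eta/n$ residual---it does \emph{not} come from the grid tuning as you suggest. Second, after Bernstein the right-hand side carries the \emph{population} excess risk $\alpha R(A|\sZ_S;\cD)$, not the empirical $R(A|\sZ_S;\sZ_S)$. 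Your one-line ``absorption'' does not effect this replacement. The paper handles it by deriving a second ESI from the same intermediate bound, namely $\alpha R(A|\sZ_S;\cD)\stochleq_{\eta(1-\alpha)/\alpha} 2\alpha\bigl(R(A|\sZ_S;\sZ_S)+\text{comp}\bigr)$, and then \emph{chains three ESIs} (the generalization-gap ESI, this excess-risk ESI, and Hoeffding) via Proposition~\ref{prop:esichainrule}(a). That chaining is what (i) doubles the complexity constant from $4$ to $8$, (ii) produces $2\alpha=\minim{1}{2\beta}$ as the coefficient on the empirical excess risk, and (iii) shrinks the ESI parameter by a factor of roughly $3$, explaining both the $6\eta/n$ (from $2\eta/n$) and the range $\eta\le\sqrt{n}\,\etamax/24$ (from $\etamax/(8\sqrt n)$). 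Your sketch omits both the Hoeffding step and the three-way chaining, so the constants, the residual term, and the admissible range of $\eta$ are left unaccounted for.
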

In all interesting cases, the quantity $a$ inside the notation $a^b_{[**]}$ in the bound 
is less than $1$, thus $a^b_{[**]}=a^b$.
Otherwise, the bound would not be useful, as the LHS is less than $1$ for any loss in $[0,1]$. 

Since this is still a $\stochleq_{\eta}$-ESI statement with $\eta= \sqrt{n}\etamax/24$, it implies the in-probability statement that with probability at least $1-\delta$, the above holds up to an additional $(- \log \delta)/\eta$ term on the right. More formally, a simple application of Proposition~\ref{prop:esiimpl} to ESI~\eqref{eq:main} of Theorem~\ref{th:main} yields Corollary~\ref{cor:inprob}:

\begin{corollary}\label{cor:inprob}
Consider the setting and notation of Theorem~\ref{th:main}. Let $\delta\in(0,1)$. For all $\beta\in[0,\beta^*]$ and all almost exchangeable priors $\pi$, with probability $1-\delta$ over the choice of $\sZ_{\bf 0}\sim\cD^n$, we have
\begin{multline*}
L(A| \sZ_{\bf 0} ; \cD) - L(A | \sZ_{\bf 0} ;  \sZ_{\bf 0}) 
\leq
\minim{1}{2\beta} \cdot R(A | \sZ_{\bf 0} ;  \sZ_{\bf 0}) \\ +  8 \cdot  \left(\frac{\ex{\sZ_{\bf 1}}{\dkl{A | \sZ_{\bf 0}}{\pi|  \langle \sZ_{\bf 0}, \sZ_{\bf 1} \rangle}}+ \llog n }{n\etamax}  \right)^{\frac{1}{2-\beta}}_{[**]} + \frac{\etamax}{4\sqrt{n}} +\frac{24\log(1/\delta)}{\sqrt{n}\etamax} .
\end{multline*}
\end{corollary}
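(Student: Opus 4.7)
The plan is to obtain Corollary~\ref{cor:inprob} directly from Theorem~\ref{th:main} by converting the ESI statement~\eqref{eq:main} into a PAC-style in-probability bound via Proposition~\ref{prop:esiimpl}. Since Theorem~\ref{th:main} provides an ESI with subscript $\eta$ that holds for every $0 < \eta \leq \sqrt{n}\etamax/24$, I would first fix an arbitrary $\beta \in [0,\beta^*]$ and an arbitrary almost exchangeable prior $\pi$, so that the probability statement can subsequently be asserted for all such choices.

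Next, I would instantiate the ESI at the largest admissible value, $\eta = \sqrt{n}\etamax/24$, since this minimizes the $\log(1/\delta)/\eta$ overhead that Proposition~\ref{prop:esiimpl} introduces. Applied with this choice of $\eta$ to~\eqref{eq:main}, the proposition yields that with probability at least $1-\delta$ over $\sZ_{\bf 0} \sim \cD^n$,
\begin{equation*}
L(A| \sZ_{\bf 0} ; \cD) - L(A | \sZ_{\bf 0} ;  \sZ_{\bf 0}) \leq \minim{1}{2\beta} \cdot R(A | \sZ_{\bf 0} ;  \sZ_{\bf 0}) + 8 \cdot \left(\frac{\ex{\sZ_{\bf 1}}{\dkl{A | \sZ_{\bf 0}}{\pi| \langle \sZ_{\bf 0}, \sZ_{\bf 1} \rangle}}+ \llog n }{n\etamax}\right)^{\frac{1}{2-\beta}}_{[**]} + \frac{6\eta}{n} + \frac{\log(1/\delta)}{\eta}.
\end{equation*}
I would then simplify the last two terms by substituting $\eta = \sqrt{n}\etamax/24$: the deterministic remainder becomes $6\eta/n = \etamax/(4\sqrt{n})$, and the probabilistic slack becomes $\log(1/\delta)/\eta = 24\log(1/\delta)/(\sqrt{n}\etamax)$. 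These match exactly the two trailing terms displayed in Corollary~\ref{cor:inprob}.

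This corollary is essentially bookkeeping once Theorem~\ref{th:main} is in hand, so there is no significant obstacle; the deep work has already been absorbed into proving the ESI of Theorem~\ref{th:main}, which in turn rests on the technical Lemma~\ref{lem:esiindividual}. The only subtlety worth noting is that because the ESI conversion via Proposition~\ref{prop:esiimpl} uses a fixed $\eta$, one verifies that $\sqrt{n}\etamax/24$ lies inside the admissible range of Theorem~\ref{th:main} (it is its upper endpoint) and that $\beta$ and $\pi$ are universally quantified outside the probability, which they naturally are since the ESI itself holds for all such $\beta$ and $\pi$ before any randomness over $\sZ_{\bf 0}$ is resolved.
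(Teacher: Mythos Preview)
Your proposal is correct and matches the paper's own treatment: the paper simply states that Corollary~\ref{cor:inprob} follows from a direct application of Proposition~\ref{prop:esiimpl} to the ESI~\eqref{eq:main} of Theorem~\ref{th:main}, and your choice $\eta=\sqrt{n}\etamax/24$ together with the substitutions $6\eta/n=\etamax/(4\sqrt{n})$ and $\log(1/\delta)/\eta=24\log(1/\delta)/(\sqrt{n}\etamax)$ is exactly what is needed.
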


The bound~\eqref{eq:main} also implies the corresponding in-expectation statement with the remainder term $6\eta/n$ set to $0$. However, if one directly sets out to prove it, the term $\llog n$ and a factor of $2$ from the multiplicative constant in front of the $a^b_{[**]}$ term can be avoided. In particular, the following improved bound holds, whose proof is based on the proof of Theorem~\ref{th:main} and is in Appendix~\ref{app:proofs}.
\begin{corollary}\label{cor:inexpect}{\rm (`{\bf Variation} of Theorem~\ref{th:main}')}
Consider the setting and notation of Theorem~\ref{th:main}. For all $\beta\in[0,\beta^*]$, it holds that
\begin{multline}\label{eq:finalinexpect}
 \ex{\sZ_{\bf 0}}{L(A | \sZ_{\bf 0}; \cD) - L(A | \sZ_{\bf 0}; \sZ_{\bf 0})}
 \leq \\
 \minim{1}{2\beta} \cdot \ex{\sZ_{\bf 0}}{R(A | \sZ_{\bf 0}; \sZ_{\bf 0})} + 4 \cdot \left( 
 \frac{\ex{\sZ_{\bf 0}, \sZ_{\bf 1}}{\dkl{A | \sZ_{\bf 0}}{\pi|  \langle \sZ_{\bf 0}, \sZ_{\bf 1} \rangle}}}{n\etamax}\right)^{\frac{1}{2-\beta}}_{[**]}.
\end{multline}
\end{corollary}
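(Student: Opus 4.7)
The plan is to mirror the structure of the proof of Theorem~\ref{th:main}, but to work with a single fixed $\eta$ throughout, optimizing it only at the end. Doing so removes the two sources of slack that separate the in-probability and in-expectation bounds: the $\llog n$ overhead introduced by the countable grid of Proposition~\ref{prop:randeta}, and the extra factor of $2$ that this tuning step contributes. The target bound~(\ref{eq:finalinexpect}) is, essentially, what one gets by picking the best $\eta$ analytically rather than from data.

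First I would apply Lemma~\ref{lem:esiindividual} pointwise: fix the supersample $\sZ$, fix $f\in\cF$, and for each $i\in[n]$ set $r_{S_i}(f)=\ell(f;\sZ_{i,S_i})-\ell(f^*;\sZ_{i,S_i})$. The lemma gives $r_{\bar S_i}(f)-r_{S_i}(f)\stochleq_\eta \eta C_{1/4}\, r_{\bar S_i}^2(f)$ for every $\eta\le 1/4$. Conditional on $\sZ$ the $S_i$ are independent, so Proposition~\ref{prop:esichainrule}(b) sums this over $i$ at the same $\eta$. Apply Proposition~\ref{prop:esiPACBayes} with posterior $A|\sZ_S$ and prior $\pi|\langle\sZ_{\bf 0},\sZ_{\bf 1}\rangle$ (deterministic given $\sZ$ by almost exchangeability) to introduce the complexity term $\dkl{A|\sZ_S}{\pi|\langle\sZ_{\bf 0},\sZ_{\bf 1}\rangle}/\eta$. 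Taking full expectation over $\sZ,S$ and using that $\sZ_{i,\bar S_i}$ is i.i.d.~$\cD$ and independent of $\sZ_S$, the $r_{\bar S_i}(f)$ terms average to $\ell(f;\cD)-\ell(f^*;\cD)$; by symmetry the $f^*$ cross-terms vanish, yielding $\ex{\sZ_S}{L(A|\sZ_S;\cD)-L(A|\sZ_S;\sZ_S)}$ on the LHS.

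Next I would dispatch the squared-loss term via Proposition~\ref{prop:linearizedBernstein} with $c=C_{1/4}$, which is legitimate because $\eta\le\etamax\le 1/(2BC_{1/4})$. This produces $\minim{1}{2\beta}$-style coefficients after some algebra: $\min(1/2,\beta)\cdot\ex{}{R(A|\sZ_S;\cD)}$ plus a $(1-\beta)(2BC_{1/4}\eta)^{1/(1-\beta)}$ remainder. The identity $\ex{}{R(A|\sZ_S;\cD)}=\ex{}{L(A|\sZ_S;\cD)-L(A|\sZ_S;\sZ_S)}+\ex{}{R(A|\sZ_S;\sZ_S)}$ (using $\ex{}{\ell(f^*;\sZ_S)}=L(f^*;\cD)$) lets me absorb the $\min(1/2,\beta)\cdot\ex{}{L-\hat L}$ piece onto the LHS. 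Dividing through by $1-\min(1/2,\beta)\ge 1/2$ gives a factor $2$, and the resulting coefficient of $\ex{}{R(A|\sZ_S;\sZ_S)}$ is $\min(1/2,\beta)/(1-\min(1/2,\beta))\le\minim{1}{2\beta}$, matching the claimed leading term.

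Finally I would tune $\eta\in(0,\etamax]$ to balance the two remaining terms $2(1-\beta)(2BC_{1/4}\eta)^{1/(1-\beta)}$ and $2\ex{}{\kl}/(n\eta)$. The unconstrained minimizer $\eta^*=(\ex{}{\kl}/n)^{(1-\beta)/(2-\beta)}(2BC_{1/4})^{-1/(2-\beta)}$ equalizes the two terms and makes their sum $2(2-\beta)(2BC_{1/4}\ex{}{\kl}/n)^{1/(2-\beta)}$, which via $2BC_{1/4}\le 1/\etamax$ and $2-\beta\le 2$ is at most $4(\ex{}{\kl}/(n\etamax))^{1/(2-\beta)}$. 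The main bookkeeping obstacle is the regime $\eta^*>\etamax$, which occurs precisely when $\ex{}{\kl}/(n\etamax)>1$; there one instead takes $\eta=\etamax$, uses $(2BC_{1/4}\etamax)^{1/(1-\beta)}\le 2BC_{1/4}\etamax\le 1$, and checks $2(1-\beta)+2\ex{}{\kl}/(n\etamax)\le 4\ex{}{\kl}/(n\etamax)$. Since $a^b_{[**]}=\max\{a^b,a\}$ equals $a^b$ in the first regime and $a$ in the second, both cases combine to yield the bound~(\ref{eq:finalinexpect}).
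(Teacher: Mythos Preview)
Your proposal is correct and follows essentially the same route as the paper's sketch: both run the proof of Theorem~\ref{th:main} up to inequality~(\ref{eq:finalesisum}), weaken to an in-expectation statement, rearrange to isolate $\ex{}{L-\hat L}$ against $\ex{}{R(\cdot;\sZ_{\bf 0})}$, and then optimize a single fixed $\eta\in(0,\etamax]$ analytically. Your direct substitution via the identity $\ex{}{R(\cdot;\cD)}=\ex{}{L-\hat L}+\ex{}{R(\cdot;\sZ_{\bf 0})}$ is algebraically the same as the paper's ``derive and add the equivalents of (\ref{eq:gebound}) and (\ref{eq:exriskbound})''. One minor slip: your $\eta^*$ does not \emph{equalize} the two terms (their ratio at the minimum is $1-\beta$), though the sum $2(2-\beta)(CK)^{1/(2-\beta)}$ you state is correct; and your ``precisely when $\ex{}{\kl}/(n\etamax)>1$'' case split relies on $2BC_{1/4}=1/\etamax$, which holds here since $B\ge 1$ forces $\etamax=1/(2BC_{1/4})$.
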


Moreover, for the right choice of prior, the expected $\kl$ term is $\CMI{A}{\cD}$, implying the bound:
\begin{corollary}\label{cor:cmi}
Consider the setting and notation of Theorem~\ref{th:main}. For all $\beta\in[0,\beta^*]$, it holds that
\begin{equation*}
 \ex{\sZ_{\bf 0}}{L(A | \sZ_{\bf 0}; \cD) - L(A | \sZ_{\bf 0}; \sZ_{\bf 0})}
 \leq 
 \minim{1}{2\beta} \cdot \ex{\sZ_{\bf 0}}{R(A | \sZ_{\bf 0}; \sZ_{\bf 0})} + 4 \cdot \left( 
 \frac{\CMI{A}{\cD}}{n\etamax}\right)^{\frac{1}{2-\beta}}_{[**]}.
\end{equation*}
\end{corollary}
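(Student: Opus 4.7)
The plan is to derive this as an immediate consequence of Corollary~\ref{cor:inexpect} by instantiating it with a specific almost exchangeable prior for which the expected KL divergence collapses to $\CMI{A}{\cD}$. Define the prior $\pi^\ast$ by
$$
\pi^\ast \mid \langle \sZ_{\bf 0}, \sZ_{\bf 1}\rangle \;:=\; \frac{1}{2^n} \sum_{s \in \{0,1\}^n} (A \mid \sZ_s),
$$
i.e.\ as the marginal distribution of $A$ given the supersample $\sZ$ after averaging out a uniform selector $S \sim \Ber(1/2)^n$. First I would verify that $\pi^\ast$ is almost exchangeable in the sense of Definition~\ref{def:exprior}: swapping the two elements of the $i$-th pair of $\sZ$ sends $\sZ_s$ to $\sZ_{s'}$ where $s'$ is obtained from $s$ by flipping the $i$-th bit, so the set $\{\sZ_s : s \in \{0,1\}^n\}$ is preserved and the sum defining $\pi^\ast$ is invariant.

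Next I would compute the expected KL term under this prior. Since $(\sZ_{\bf 0}, \sZ_{\bf 1})$ has the same joint distribution as $(\sZ_S, \sZ_{\bar S})$ when $\sZ \sim \cD^{n\times 2}$ and $S \sim \Ber(1/2)^n$ are independent, and since almost exchangeability makes $\pi^\ast \mid \langle \sZ_S, \sZ_{\bar S}\rangle$ a function of $\sZ$ alone, a change of variables gives
$$
\ex{\sZ_{\bf 0},\sZ_{\bf 1}}{\dkl{A\mid \sZ_{\bf 0}}{\pi^\ast\mid\langle\sZ_{\bf 0},\sZ_{\bf 1}\rangle}} \;=\; \ex{\sZ}{\ex{S}{\dkl{A\mid\sZ_S}{\pi^\ast\mid\sZ}}}.
$$
But $\pi^\ast \mid \sZ$ is by construction precisely the $S$-marginal of the conditional distribution of $A\mid\sZ_S$ given $\sZ$, so by the standard variational identity for mutual information, the inner expectation equals the disintegrated mutual information $I^{\sZ}(A\mid\sZ_S; S)$. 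Taking the outer expectation over $\sZ$ and applying Definition~\ref{def:CMI} yields exactly $\CMI{A}{\cD}$.

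Plugging this identity into the bound of Corollary~\ref{cor:inexpect} gives the result, using the fact that the map $x \mapsto x^{1/(2-\beta)}_{[**]} = \max\{x^{1/(2-\beta)}, x\}$ is monotone increasing in $x \geq 0$ for $\beta \in [0,1]$, so substituting the specific admissible prior $\pi^\ast$ for the abstract $\pi$ in Corollary~\ref{cor:inexpect} is legitimate. I do not anticipate any real obstacle: the entire argument is a clean application of the variational characterization of mutual information, and the only modest care required is in the change-of-variables step exploiting symmetry of the supersample and in verifying almost exchangeability of $\pi^\ast$.
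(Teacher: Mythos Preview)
Your proposal is correct and follows essentially the same route as the paper: choose the prior $\pi^\ast\mid\sZ=\exinline{S'}{A\mid\sZ_{S'}}$, verify it is almost exchangeable, use the symmetry of the supersample to rewrite the expected KL as $\ex{\sZ}{\ex{S}{\dkl{A\mid\sZ_S}{\pi^\ast\mid\sZ}}}$, and identify this with $\CMI{A}{\cD}$ via the disintegrated mutual information. The monotonicity remark is harmless but unnecessary, since for this particular $\pi^\ast$ the expected KL term is \emph{equal} to $\CMI{A}{\cD}$, so Corollary~\ref{cor:inexpect} gives the claim directly upon substitution.
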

\begin{proof}[Proof of Corollary~\ref{cor:cmi}]
Let $\sZ=(\sZ_{\bf 0}, \sZ_{\bf 1})$. We focus on the $\kl$ divergence in the bound~\eqref{eq:finalinexpect}:
$$
\ex{\sZ_{\bf 0}, \sZ_{\bf 1}}{\dkl{A | \sZ_{\bf 0}}{\pi|  \langle \sZ_{\bf 0}, \sZ_{\bf 1} \rangle}} 
=  \ex{S, \sZ_{\bf 0}, \sZ_{\bf 1}}{\dkl{A | \sZ_{\bf 0}}{\pi|  \langle \sZ_{\bf 0}, \sZ_{\bf 1} \rangle}}
= \ex{S, \sZ}{{\dkl{A | \sZ_{S}}{\pi|  \langle \sZ_{S}, \sZ_{\bar{S}} \rangle}}}
$$
The first equality holds since $S$ is independent of $\sZ_{\bf 0}, \sZ_{\bf 1}$. The second equality holds because the distributions of $\sZ_{S}$, $\sZ_{\bar{S}}, \sZ_{\bf 0}, \sZ_{\bf 1}$ are all identical to $\cD^n$ and $\pi$ is almost exchangeable. 
We choose $\pi=\ex{S'}{A | \sZ_{S'}}$ for $S' \sim\Ber(1/2)^n$. Notice that $\pi$ is indeed almost exchangeable. We now have
$$
\ex{S, \sZ}{{\dkl{A | \sZ_{S}}{\ex{S'}{A|\sZ_{S'}}}}}
=\ex{\sZ}{\ex{S}{\dkl{A | \sZ_{S}}{\ex{S'}{A|\sZ_{S'}}}}}
=\ex{\sZ}{I^{\sZ}(A|\sZ_S;S)}
=\CMI{A}{\cD}.
$$
Combining the two equations and substituting the term in inequality~\eqref{eq:finalinexpect} completes the proof.
\end{proof}

After observing the implications of Theorem~\ref{th:main}, we now present its complete proof below.
\begin{proof}[Proof of Theorem~\ref{th:main}]
Let $\sz = ( (\sz_{1,0}, \sz_{1,1}), \ldots, (\sz_{n,0}, \sz_{n,1}))^\top
\in\cZ^{n\times 2}$ be a given, fixed supersample. Let $S= (S_1, \ldots, S_n)$, with $S_1, S_2, \ldots, S_n$ i.i.d.\ $\Ber(1/2)$, be a selection vector and let $\bar{S}$ be its complement, that is, $\bar{S}_i := 1- S_i$ for all $i\in[n]$. 
For each fixed $f\in\cF$ and $\sz\in\cZ^{n\times 2}$, we define
$$r_i(f;\sz_{i,0}) = \ell(f;\sz_{i,0}) - \ell(f^*;\sz_{i,0})
\ \text{ and } \ r_i(f;\sz_{i,1}) = \ell(f;\sz_{i,1}) - \ell(f^*;\sz_{i,1}).
$$

Since $\ell$ has range in $[0,1]$, it holds that for all $i\in[n]$, $|r_i(f;\sz_{i,0})|, |r_i(f;\sz_{i,1})|\leq 1$. 
By Lemma~\ref{lem:esiindividual}, for all $i\in[n]$, and $\eta<1/4$, it holds that 
\begin{equation}\label{eq:esiindividual}
r_i(f;\sz_{i,\bar{S}_i})-r_i(f;\sz_{i,S_i}) \stochleq_{\eta}^{S_i} \eta C_{1/4} r^2_i(f,\sz_{i,\bar{S}_i})
\end{equation}
Now take randomness under the product distribution $\Ber(1/2)^n$ of $S$. By independence of the $S_i$ and applying Proposition~\ref{prop:esichainrule}, we can then add the $n$ ESIs~\eqref{eq:esiindividual} to give: 
\begin{align*}
 \sum_{i=1}^n r_i(f;\sz_{i,\bar{S}_i}) - \sum_{i=1}^n r_i(f;\sz_{i,S_i}) & \stochleq_{\eta}^S
 \eta C_{1/4} \sum_{i=1}^n r^2_i(f,\sz_{i,\bar{S}_i}).
 \end{align*}
 
Now consider a learning algorithm $A$ that outputs a distribution $A| \sz_S $ on $\cF$, and any `prior' distribution $\pi|\sz$ on $\cF$ that is allowed to depend on $\sz$ (which for now is considered fixed). The PAC-Bayes theorem (Proposition~\ref{prop:esiPACBayes}) gives
\begin{align}\label{eq:esisum}
\ex{f \sim A | \sz_S}{\sum_{i=1}^n r_i(f;\sz_{i,\bar{S}_i}) - \sum_{i=1}^n r_i(f;\sz_{i,S_i})} & \stochleq^{S | \sz}_{\eta}
\eta C_{1/4} \ex{f \sim A | \sz_S} { \sum_{i=1}^n r^2_i(f,\sz_{i,\bar{S}_i})} + \frac{\dkl{A | \sz_S}{\pi|  \sz}}{\eta}.
\end{align}

We note that $S$ is independent of $\sz$, so the ESI above could be equivalently written with respect to $S$ instead of $S|\sz$.

Since inequality~\eqref{eq:esisum} holds for {\em all\/} $\sz$, we weaken it to an ESI by taking its expectation over $\sZ\sim\cD^{n\times 2}$:
\begin{align*}
\ex{f \sim A |\sZ_S}{\sum_{i=1}^n r_i(f;\sZ_{i,\bar{S}_i}) - \sum_{i=1}^n r_i(f;\sZ_{i,S_i})} & \stochleq^{S,\sZ}_{\eta} 
\eta C_{1/4} \ex{f \sim A |\sZ_S} { \sum_{i=1}^n r_i^2(f,\sZ_{i,\bar{S}_i})} + \frac{\dkl{A|\sZ_S}{\pi|\sZ}}{\eta}
\end{align*}

Since the conditional distribution $\pi$ is almost exchangeable with respect to $\sz$, the above is rewritten as 
\begin{align*}
\ex{f \sim A |\sZ_S}{\sum_{i=1}^n r_i(f;\sZ_{i,\bar{S}_i}) - \sum_{i=1}^n r_i(f;\sZ_{i,S_i})} & \stochleq^{S,\sZ}_{\eta} 
\eta C_{1/4} \ex{f \sim A |\sZ_S} { \sum_{i=1}^n r_i^2(f,\sZ_{i,\bar{S}_i})} + \frac{\dkl{A|\sZ_S}{\pi|(\sZ_S,\sZ_{\bar{S}})}}{\eta}.
 \end{align*}
Now, since the $\sZ_{1,0}, \sZ_{1,1}, \ldots, \sZ_{n,0}, \sZ_{n,1}$ are i.i.d.\ and independent of the $S_i$, we  must also have: 
\begin{align*}
\ex{f \sim A| \sZ_{\mathbf{0}}}{\sum_{i=1}^n r_i(f;\sZ_{i,1}) - \sum_{i=1}^n r_i(f;\sZ_{i,0})} & \stochleq^{\sZ}_{\eta}
 \eta C_{1/4} \ex{f \sim A| \sZ_{\mathbf{0}}}{ \sum_{i=1}^n r_i^2(f,\sZ_{i,1})} + \frac{\dkl{A | \sZ_{\mathbf{0}}}{\pi|\langle\sZ\rangle}}{\eta},
 \end{align*}
 where we also replaced $\pi|\sz$ by its equivalent $\pi|\langle\sz\rangle$.
Since the $\sZ_{\mathbf{0}},\sZ_{\mathbf{1}}$ consist of i.i.d.\ random variables, we can weaken the above inequality to an in-expectation inequality (by Proposition~\ref{prop:esiimpl}) with respect to the `ghost'' sample $\sZ_{\mathbf{1}}\sim \cD^n$:
\begin{multline}\label{eq:esisumghost}
\ex{f \sim A| \sZ_{\bf 0}}{\ex{\sZ_{\bf 1}}{\sum_{i=1}^n r_i(f;\sZ_{i,1}) - \sum_{i=1}^n r_i(f;\sZ_{i,0})}}  \stochleq^{\sZ_{\bf 0}}_{\eta}\\
 \eta C_{1/4} 
 \ex{f \sim A| \sZ_{\bf 0}}{\ex{\sZ_{\bf 1}}{ \sum_{i=1}^n r_i^2(f,\sZ_{i,1})} }+ 
 \ex{\sZ_{\bf 1}}{\frac{\dkl{A | \sZ_{\bf 0}}{\pi | \langle\sZ\rangle}}{\eta}}.
 \end{multline}

We now focus on term of the expected sum of squared excess risks in the RHS. By applying the linearized $(B,\beta^*)$-Bernstein condition of Proposition~\ref{prop:linearizedBernstein} and adding the inequalities for all $i\in[n]$, we have that for all $\eta< 1/(2BC_{1/4})$, $\beta\in[0,\beta^*]$,
\begin{equation}\label{eq:addlinearizedBern}
\eta C_{1/4} \ex{\sZ_{\bf 1}}{\sum_{i=1}^n r_i^2(f,\sZ_{i,1})} \leq \minim{\frac{1}{2}}{\beta} \cdot\ex{\sZ_{\bf 1}}{\sum_{i=1}^n r_i(f;\sZ_{i,1})} + n (1-\beta)(2BC_{1/4}\eta)^{1/(1-\beta)}.
\end{equation}

Now, observe that $\ex{\sZ_{\bf 1}}{\sum_{i=1}^n r_i(f;\sZ_{i,1})}=n\cdot R(f;\cD)$ 
and $\ex{\sZ_{\bf 1}}{\sum_{i=1}^n r_i(f;\sZ_{i,0})}=n\cdot R(f;\sZ_{\bf 0})$. 
Combining inequality~\eqref{eq:esisumghost} with~\eqref{eq:addlinearizedBern} and substituting the terms above, we have that for all $\eta<  \etamax:= \minim{\frac{1}{4}}{\frac{1}{2BC_{1/4}}}$,
\begin{multline*}
\ex{f \sim A| \sZ_{\bf 0}}{n\cdot R(f;\cD) - n\cdot R(f;\sZ_{\bf 0})}  \stochleq^{\sZ_{\bf 0}}_{\eta}\\
  \minim{\frac{1}{2}}{\beta}\cdot
 \ex{f \sim A| \sZ_{\bf 0}}{n\cdot R(f;\cD)}+ n(1-\beta)(2BC_{1/4}\eta)^{1/(1-\beta)} +
 \ex{\sZ_{\bf 1}}{\frac{\dkl{A | \sZ_{\bf 0}}{\pi | \langle\sZ\rangle}}{\eta}}.
 \end{multline*}

Dividing by $n$ and substituting for the expected true and empirical excess risk of the randomized estimator $A|\sZ_{\bf 0}$, we have the following ESI:
\begin{multline}\label{eq:finalesisum}
R(A | \sZ_{\bf 0}; \cD) - R(A | \sZ_{\bf 0}; \sZ_{\bf 0}) 
\stochleq^{\sZ_{\bf 0}}_{n \eta} 
 \minim{\frac{1}{2}}{\beta} \cdot R(A | \sZ_{\bf 0}; \cD) +  \left( \frac{\eta}{\etamax} \right)^{\frac{1}{1-\beta}} + 
 \frac{\ex{\sZ_{\bf 1}}{\dkl{A | \sZ_{\bf 0}}{\pi|  \langle\sZ\rangle}}}{n \eta}.
 \end{multline}
Using Proposition~\ref{prop:randeta}, we now extend this ESI to deal with random $\eta$. The proposition immediately gives that for every finite  grid $\cG \subset [\etamin,\etamax]$, for arbitrary probability mass function $\pi_\cG$ on $\cG$, for arbitrary functions (random variables) $\hat\eta: \sZ_{\bf 0} \rightarrow \cG$, we have: 
\begin{multline}\label{eq:finalesisumb}
R(A | \sZ_{\bf 0}; \cD) - R(A | \sZ_{\bf 0}; \sZ_{\bf 0}) 
\stochleq^{\sZ_{\bf 0}}_{n \etamin}  
 \minim{\frac{1}{2}}{\beta} \cdot R(A | \sZ_{\bf 0}; \cD) +  \left(\frac{\hat\eta}{\etamax}\right)^{\frac{1}{1-\beta}} + 
 \frac{\textsc{ub} -\log \pi_\cG(\hat\eta)}{n \hat\eta},
 \end{multline}
where $\textsc{ub}$ can be any upper bound on $\ex{\sZ_{\bf 1}}{\dkl{A | \sZ_{\bf 0}}{\pi|  \langle\sZ\rangle}}$. In the remainder of the proof we simply set $\textsc{ub} = \exinline{\sZ_{\bf 1}}{\dkl{A | \sZ_{\bf 0}}{\pi|  \langle\sZ\rangle}}$, the possibility to take a larger upper bound is explored in Example~\ref{ex:gibbs}. 

 Now let  $\pi_\cG$ be the uniform distribution over the grid 
\begin{align} \label{eq:grid} \cG \coloneqq \left\{\etamax, \frac{1}{2}\etamax , \dots, \frac{1}{2^K}\etamax: K \coloneqq \left\lceil \log_2 \left(\sqrt{n}\right) \right\rceil + 2 \right\} \end{align}
and define $\hat\eta'$, as function of data $\sZ_{\bf 0}$ to be the element of $[0,\etamax]$ minimizing the sum 
$$
\textsc{comp}(\eta) = \left(\frac{\eta}{\etamax}\right)^{\frac{1}{1-\beta}} + 
 \frac{\ex{\sZ_{\bf 1}}{\dkl{A | \sZ_{\bf 0}}{\pi|  \sZ}} -\log \pi_\cG(\eta)}{n \eta}
$$ of the last two terms in (\ref{eq:finalesisumb}), and let $\hat\eta$ be the element within $\cG$ that minimizes this sum. We can determine $\hat\eta'$ by differentiation. 
We find that, since we have  $|\cG|=K+1 \geq 3$ and hence $- \log \pi_\cG(\hat\eta) \geq 1$, it holds 
\begin{align*}
    \textsc{comp}(\hat\eta) \leq \begin{cases}
  2 \cdot \textsc{comp}(\hat\eta')=  4 \left(\scalebox{1.1}{$\frac{\ex{\sZ_{\bf 1}}{\dkl{A | \sZ_{\bf 0} }{\pi|  \sZ}}+ \llog n}{n \etamax}$}\right)^{1/(2-\beta)}
& \text{\ if\ } \hat\eta' < \etamax \\
\textsc{comp}(\hat\eta') \leq 2 \left(\scalebox{1.1}{$\frac{\ex{\sZ_{\bf 1}}{\dkl{A | \sZ_{\bf 0} }{\pi|  \sZ}}+ \llog n}{n \etamax}$}\right)
& \text{\ if\ } \hat\eta' = \etamax
    \end{cases}
\end{align*}
where $\llog n = \log (\lceil \log_2 (\sqrt{n}) \rceil + 2) =  O(\log \log n)$.
Combining this with (\ref{eq:finalesisumb}) gives
\begin{multline}\label{eq:finalesisumc}
R(A | \sZ_{\bf 0}; \cD) - R(A | \sZ_{\bf 0}; \sZ_{\bf 0}) 
\stochleq^{\sZ_{\bf 0}}_{n \etamin} \alpha
  \cdot R(A | \sZ_{\bf 0}; \cD) +  
 4  \cdot  \left(\frac{\ex{\sZ_{\bf 1}}{\dkl{A | \sZ_{\bf 0}}{\pi|  \sZ}} + \llog n }{n \etamax}\right)^{1/(2-\beta)}_{[**]}
 \end{multline}
for every $0 < \etamin \leq \frac{\etamax}{8\sqrt{n}}$, since we have:
\[\hat\eta\geq \frac{\etamax}{2^K}=\frac{\etamax}{2^{\lceil \log_2(\sqrt{n})\rceil +2}}\geq \frac{\etamax}{2^{\log_2(\sqrt{n})+3}}=\frac{\etamax}{8\sqrt{n}}.\] 

Here the notation $a^{b}_{[**]}$ indicates 
$\max \{a^b, a \}$ and here and below we set $\alpha  = \minim{\frac{1}{2}}{\beta}$.

From inequality~\eqref{eq:finalesisumc}, we can derive the following two ESIs.
First, by substituting $R(A | \sZ_{\bf 0}; \cD)$ and $R(A | \sZ_{\bf 0}; \sZ_{\bf 0})$ and $\eta := n \etamin$ and rearranging,
we have for every $\eta \leq \sqrt{n}\etamax / 8$ that
\begin{multline}\label{eq:gebound}
L(A | \sZ_{\bf 0}; \cD) - L(A | \sZ_{\bf 0}; \sZ_{\bf 0}) 
\stochleq^{\sZ_{\bf 0}}_{\eta} \\
\alpha \cdot R(A | \sZ_{\bf 0}; \cD) +  4  \cdot  \left(\frac{\ex{\sZ_{\bf 1}}{\dkl{A | \sZ_{\bf 0}}{\pi|  \sZ}} + \llog n }{n \etamax}\right)^{1/(2-\beta)}_{[**]}
+ L(f^*; \cD) - L(f^*; \sZ_{\bf 0}) 
 \end{multline}
Second, by rearranging and multiplying by $\alpha/(1-\alpha)$,  (\ref{eq:finalesisumc}) also gives 
\begin{align}\label{eq:exriskbound}
\alpha R(A | \sZ_{\bf 0}; \cD) \stochleq^{\sZ_{\bf 0}}_{\eta \left(1-\alpha \right)/\alpha} 
2 \alpha \cdot \left( R(A |\sZ_{\bf 0}; \sZ_{\bf 0} ) 
+ 4  \cdot  \left(\frac{\ex{\sZ_{\bf 1}}{\dkl{A | \sZ_{\bf 0}}{\pi|  \sZ}} + \llog n }{n \etamax}\right)^{1/(2-\beta)}_{[**]}\right),
 \end{align}
where we used that $\alpha \leq 1/2$ hence $\alpha/\left(1-\alpha \right) \leq 1$ and 
the fact that, straightforwardly,  $U \stochleq_{\eta} 0 \Rightarrow c U  \stochleq_{\eta /c} 0$.
\commentout{Using this fact again we can multiply the first ESI (\ref{eq:gebound}) with $1- \alpha$ and the second ESI (\ref{eq:exriskbound}) by $\alpha$ to get for all $\eta \leq \sqrt{n} \etamax/8$, respectively
\begin{multline}\label{eq:geboundb}
(1-\alpha ) (L(A | \sZ_{\bf 0}; \cD) - L(A | \sZ_{\bf 0}; \sZ_{\bf 0}) ) 
\stochleq^{\sZ_{\bf 0}}_{\eta/ (1-\alpha)} \\
(1- \alpha ) \left( \alpha \cdot R(A | \sZ_{\bf 0}; \cD) +  4  \cdot  \left(\frac{\ex{\sZ_{\bf 1}}{\dkl{A | \sZ_{\bf 0}}{\pi|  \sZ}} + \llog n }{n \etamax}\right)^{1/(2-\beta)}_{[**]}
+  L(f^*; \cD) - L(f^*; \sZ_{\bf 0}) \right) 
 \end{multline}
and  
 \begin{align}\label{eq:exriskboundb}
\alpha R(A | \sZ_{\bf 0}; \cD) \stochleq^{\sZ_{\bf 0}}_{\eta (1-\alpha)/\alpha} 
\alpha \left( 2 \cdot \left( R(A |\sZ_{\bf 0}; \sZ_{\bf 0} ) 
+ 4  \cdot  \left(\frac{\ex{\sZ_{\bf 1}}{\dkl{A | \sZ_{\bf 0}}{\pi|  \sZ}} + \llog n }{n \etamax}\right)^{1/(2-\beta)}_{[**]}\right) \right),
 \end{align}}
We want to combine these two ESIs, while also replacing the final term $L(f^*; \cD) - L(f^*; \sZ_{\bf 0})$ in (\ref{eq:gebound}). For this we note that Hoeffding's Lemma in ESI notation combined with the ESI chain rule (Proposition~\ref{prop:esichainrule}) for i.i.d.\ random variables  immediately gives $  n  (L(f^*; \cD) -  L(f^*; \sZ_{\bf 0})) \stochleq_{\eta'} 2  n \eta' $ for all $\eta' > 0$, hence also  $   L(f^*; \cD) -  L(f^*; \sZ_{\bf 0}) \stochleq_{n \eta'} 2  \eta' $ and hence substituting $\eta:= \eta'n$, 
\begin{equation}\label{eq:hoeffdings}
 L(f^*; \cD) - L(f^*; \sZ_{\bf 0})  \stochleq_{\eta}   \frac{2\eta}{n}.
\end{equation} Chaining ESIs (\ref{eq:gebound}),  (\ref{eq:exriskbound}) and (\ref{eq:hoeffdings}), 
using Proposition~\ref{prop:esichainrule}(a), now gives, for all $\eta \leq \sqrt{n} \etamax/8$, 
\begin{multline}\label{eq:finalesi}
 L(A | \sZ_{\bf 0}; \cD) - L(A | \sZ_{\bf 0}; \sZ_{\bf 0})
 \stochleq^{\sZ_{\bf 0}}_{\eta (1- \alpha)/(2-\alpha) }  \\
 \minim{1}{2\beta} \cdot R(A | \sZ_{\bf 0}; \sZ_{\bf 0}) + 8  \cdot  \left(\frac{\ex{\sZ_{\bf 1}}{\dkl{A | \sZ_{\bf 0}}{\pi|  \sZ}} + \llog n }{n \etamax}\right)^{1/(2-\beta)}_{[**]}
 +  \frac{2 \eta}{n} .
\end{multline}
Since, by $0 \leq \alpha \leq 1/2$,  $(1-\alpha)/(2-\alpha) \geq 1/3$,
the result follows substituting $\eta$ in place of $\eta/3$.  
\end{proof}

\subsection{Applications}\label{sec:applications}
In this section, we demonstrate some applications of Theorem~\ref{th:main}, providing classes $\cF$ for which standard PAC-Bayesian bounds are suboptimal or difficult to obtain, but the almost exchangeable priors conditioned on supersamples make them straightforward. We note that the settings are slight extensions of examples already covered by \citet{Audibert04} and \citet{SteinkeZ20} in the non-fast rate setting; the added benefit is the fast-rate treatment allowed by Theorem~\ref{th:main} and its extension for Gibbs posteriors in Example~\ref{ex:gibbs}. 
For starters, the following observation (proof omitted) allows us to mix almost exchangeable priors and to construct them from standard priors: 
\begin{proposition}
Let $W$ be any standard distribution on $\cF$ independent of the data, i.e. $W | \langle \sz \rangle = W | \langle \sz'\rangle $ for all $\sz,\sz' \in \cZ^{2n}$. Then $W$ is also an almost exchangeable prior. 
Further, let $\{W_k : k \in {\mathbb N} \}$ denote  a countable set of almost exchangeable priors and let $\rho$ be a probability mass function on ${\mathbb N}$. Then $W$ defined by $W \mid \langle \sz \rangle = \sum_{k \in {\mathbb N} } \rho(k) \cdot W_k \mid \langle \sz \rangle$ is an almost exchangeable prior as well.  
\end{proposition}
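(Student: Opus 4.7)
The plan is to verify both claims by direct unrolling of Definition~\ref{def:exprior}. The notion of almost exchangeability, once written out, says that the output distribution is invariant under swapping training and ghost examples position-by-position. So in both cases the task reduces to checking this invariance, once for a constant map and once for a countable convex combination of maps already known to be invariant. Neither step requires any probabilistic machinery beyond linearity of mixtures.

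For the first claim, I would note that the hypothesis asserts $W \mid \langle \sz \rangle = W \mid \langle \sz' \rangle$ for every $\sz,\sz' \in \cZ^{n \times 2}$, so as a function $\cZ^{n\times 2} \to \Delta(\cF)$, $W$ is constant. In particular, for arbitrary $\sz$ and $s \in \{0,1\}^n$, the distributions $W \mid (\sz_s, \sz_{\bar s})$ and $W \mid (\sz_{\bf 0}, \sz_{\bf 1})$ coincide because both evaluations produce the same constant distribution. This is exactly the almost-exchangeability condition of Definition~\ref{def:exprior}.

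For the second claim, I would fix $\sz \in \cZ^{n \times 2}$ and $s \in \{0,1\}^n$ and compute, by definition of $W$ and linearity of the convex combination,
\[
W \mid (\sz_s, \sz_{\bar s})
= \sum_{k \in {\mathbb N}} \rho(k)\cdot W_k \mid (\sz_s, \sz_{\bar s})
= \sum_{k \in {\mathbb N}} \rho(k)\cdot W_k \mid (\sz_{\bf 0}, \sz_{\bf 1})
= W \mid (\sz_{\bf 0}, \sz_{\bf 1}),
\]
where the middle equality invokes the almost exchangeability of each $W_k$ termwise. The only thing to check for rigor is that the mixture is a well-defined element of $\Delta(\cF)$, which is immediate since $\rho$ is a probability mass function on ${\mathbb N}$ and each $W_k \mid \langle \sz \rangle$ is a probability distribution on $\cF$, so the countable sum converges to a probability measure (by Fubini, or just by measure-wise summation on each event).

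There is no substantive obstacle: the proof is purely a matter of unwinding the definition and using linearity of mixtures. The most one needs to be careful about is stating explicitly that the interchange of the sum over $k$ with the conditioning is simply the definition of a mixture distribution, not a genuine limiting argument.
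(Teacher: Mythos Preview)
Your proof is correct. The paper explicitly omits the proof of this proposition (``proof omitted''), and your direct verification from Definition~\ref{def:exprior}---constancy for the first claim, termwise invariance plus linearity of mixtures for the second---is exactly the intended elementary argument.
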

\subsubsection{VC classes}
In this section, $\cZ=\cX\times\{0,1\}$ and $\cF=\{f:\cX\rightarrow\{0,1\}\}$ is a hypothesis class with VC dimension $d$. We work with the 0-1 loss $\ell : \cF \times (\cX \times \{0,1\}) \to \{0,1\}$ defined by $\ell(f,(x,y))=0 \iff f(x)=y$.

\begin{theorem}\label{th:vc-kl}
 Let $\cF=\{f:\cX\rightarrow\{0,1\}\}$ be a hypothesis class with VC dimension $d$ and let $\cZ=\cX\times\{0,1\}$.
 There exists a deterministic Empirical Risk Minimization algorithm $A:\cZ^*\rightarrow \cF$ for $0/1$ loss and an almost exchangeable prior $\pi$, such that, for any $\sz_{\bf 0}, \sz_{\bf 1}\in\cZ^n$,
$$\dkl{A|\sz_{\bf 0}}{\pi|\langle\sz_{\bf 0}, \sz_{\bf 1}\rangle}\leq d\log(2n).$$
\end{theorem}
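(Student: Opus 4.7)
The plan is to combine the Sauer-Shelah lemma with a globally consistent tie-breaking rule on $\cF$. Fix once and for all an arbitrary total order $\preceq$ on $\cF$. Define $A$ to output, on any input $z$, the $\preceq$-minimal element of $\argmin_{f\in\cF}\ell(f;z)$; this is a deterministic ERM. To define the prior, for a supersample $\sz\in\cZ^{n\times 2}$ let $X(\sz)$ denote the unordered set of $2n$ feature coordinates appearing in $\sz$, and consider the equivalence relation on $\cF$ that identifies $f$ and $g$ whenever they agree on every point of $X(\sz)$. Let $G(\sz)\subseteq\cF$ contain the $\preceq$-smallest representative of each equivalence class, and set $\pi|\langle\sz\rangle$ to be the uniform distribution on $G(\sz)$.

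First I would check that $\pi$ is almost exchangeable: $G$ depends on $\sz$ only through the multiset $X(\sz)$, which is invariant under arbitrary permutation of rows and columns of $\sz$, so in particular is a function of $\langle \sz_{\bf 0},\sz_{\bf 1}\rangle$. Next, by the Sauer-Shelah lemma the number of distinct labelings of the $2n$ points $X(\sz)$ by hypotheses in $\cF$ is at most $\sum_{i=0}^d \binom{2n}{i}\leq (2n)^d$, so $|G(\sz)|\leq (2n)^d$ and $\pi$ assigns mass at least $(2n)^{-d}$ to every one of its atoms.

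The crucial step is to verify that $A(\sz_{\bf 0})\in G(\langle\sz_{\bf 0},\sz_{\bf 1}\rangle)$ for every choice of ghost sample $\sz_{\bf 1}$. Let $f=A(\sz_{\bf 0})$ and suppose, for contradiction, that some $g\prec f$ agrees with $f$ on every point in $X(\langle\sz_{\bf 0},\sz_{\bf 1}\rangle)$. Then in particular $g$ agrees with $f$ on the features of $\sz_{\bf 0}$, so $\ell(g;\sz_{\bf 0})=\ell(f;\sz_{\bf 0})$, making $g$ an ERM on $\sz_{\bf 0}$ strictly smaller than $f$ in $\preceq$---contradicting the definition of $A$. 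Hence $f$ is the $\preceq$-minimal representative of its equivalence class on the supersample, i.e.\ $f\in G(\langle\sz_{\bf 0},\sz_{\bf 1}\rangle)$.

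Since $A|\sz_{\bf 0}$ is a point mass on $f$, we conclude $\dkl{A|\sz_{\bf 0}}{\pi|\langle\sz_{\bf 0},\sz_{\bf 1}\rangle}=-\log \pi(f|\langle\sz_{\bf 0},\sz_{\bf 1}\rangle)\leq d\log(2n)$. The main conceptual obstacle is the previous paragraph: ensuring that the algorithm's chosen ERM coincides with the canonical class representative for \emph{every} ghost sample, even though $A$ never observes $\sz_{\bf 1}$. This is resolved precisely by using a \emph{global} tie-breaking order on $\cF$ rather than one that depends on the sample, so that a $\preceq$-smaller hypothesis agreeing on the full $2n$ points must already agree on $\sz_{\bf 0}$ and thereby falsify the minimality of $A(\sz_{\bf 0})$.
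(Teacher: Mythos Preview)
Your proof is correct and follows essentially the same approach as the paper, which also constructs a $\preceq$-minimal ERM and bounds the prior's support via Sauer--Shelah; the paper packages the key step as a separate ``global consistency'' lemma, but its content is identical to your contradiction argument showing $A(\sz_{\bf 0})\in G(\langle\sz_{\bf 0},\sz_{\bf 1}\rangle)$. One small technical caveat: you need $\preceq$ to be a \emph{well-order} on $\cF$ (as the paper makes explicit by invoking the well-ordering theorem), not merely a total order, since otherwise the $\preceq$-minimal element of an infinite argmin set or equivalence class need not exist.
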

The theorem can be proven by following the same steps as the proof of~\citet[Theorem 4.12]{SteinkeZ20}; we provide its proof in Appendix~\ref{app:proofs}.

\begin{example}[Thresholds]\label{ex:threshold}
{ \rm Consider the set of threshold functions $\cT=\{f_t:\N \rightarrow \{0,1\} : t\in  \N \cup \{\infty\} \}$, where $f_t(x)=1 \iff x\geq t$. Let $\ell$ be the 0/1 loss satisfying $\ell(f,(x,y)) = 0 \iff f(x)=y$. Let $A:(\N \times \{0,1\})^n\rightarrow \cT$ be a learning algorithm that outputs the smallest optimal threshold -- i.e., $A|z=f_{\min\{x: f_x\in\argmin_{f \in \cT}{\ell(f,z)}\}\cup\{\infty\}}$. It is straightforward to see that $A$ is an ERM that satisfies the global consistency property from the proof of Theorem~\ref{th:vc-kl}.
Since $\cT$ has VC dimension $d=1$, there exists an almost exchangable prior $\pi$ such that $\dkl{A|\sz_{\bf 0}}{\pi|\langle\sz_{\bf 0}, \sz_{\bf 1}\rangle}\leq \log(2n)$ for all $\sz$.
Now suppose that we have a distribution $\cD$ with random label noise -- i.e., there is some $t^*$ such that each data point $X_i$ is sampled from an arbitrary $\cD_{\mathcal{X}}$ and, given $X_i$, $Y_i=f_{t^*}(X_i)$ with probability $1-p$ and $Y_i=1-f_{t^*}(X_i)$ with probability $p$ for some $0 < p < 1/2$. This implies the Massart condition and hence the Bernstein condition with $\beta=1$ and $B$ depending on $p$ \citep{vanerven2015fast}. Still, the empirical error of ERM does not go to $0$ with $n$ due to the label noise. Therefore, standard PAC-Bayes bounds (\ref{eq:firstgebound}) are of order $\sqrt{\kl/n}$, whereas Theorem~\ref{th:main} gives a fast rate of order $(\log n)/n$.}
\end{example}

\subsubsection{Compression Scheme Priors}
The following extends the notion of a compression scheme due to \citet{LittlestoneW86}.
\begin{definition}[Compression Scheme Prior]\label{def:compressionscheme}
We call a  data-dependent distribution  $W:\cZ^n\rightarrow \Delta(\cF)$ a {\em compression scheme prior of size $k$} if we can write $W|z = W_2|(A_1|z)$ for all $z$, where
\begin{enumerate}
    \item $A_1:\cZ^n\rightarrow \cZ^k$ is a ``compression algorithm'' which given a sample  $z\in\cZ^n$ selects a subset $i_1, \ldots, i_k \in [n]$ and returns  $(z_{i_1}, \ldots, z_{i_k})\in\cZ^k$ and
    \item $W_2:\cZ^k \rightarrow \Delta(\cF)$ is any function. 
\end{enumerate}
For $k=0$, we say that $W$ is a compression scheme prior of size $0$ iff it outputs a fixed distribution.  
\end{definition}
 \commentout{$A:\cZ^n\rightarrow \cF$ has a {\em compression scheme of size $k$} if we can write $A|z =A_2|(A_1|z)$ for all $z$, where
\begin{enumerate}
    \item $A_1:\cZ^n\rightarrow \cZ^k$ is a ``compression algorithm'' which given a sample $z\in\cZ^n$ selects a subset $i_1, \ldots, i_k \in [n]$ and returns  $A_1|z=(z_{i_1}, \ldots, z_{i_k})\in\cZ^k$ and
    \item $A_2:\cZ^k\rightarrow \cF$ is an arbitrary ``encoding algorithm.''
\end{enumerate}

}

\begin{theorem}\label{th:compression-kl}
Let $W:\cZ^n\rightarrow \Delta(\cF)$ be a compression scheme prior  of size $k\geq 0$ and $A: \cZ^n \rightarrow \Delta(\cF)$ be an arbitrary possibly randomized learning algorithm. Then there exists an almost exchangeable prior $\pi$, such that for all  $\sz_{\bf 0}, \sz_{\bf 1}\in\cZ^n$,
\begin{equation}\label{eq:compression}
\dkl{A|\sz_{\bf 0}}{\pi|\langle\sz_{\bf 0}, \sz_{\bf 1}\rangle}\leq 
\dkl{A|\sz_{\bf 0}}{W| \sz_{\bf 0}} +
k\log(2n).\end{equation}
\end{theorem}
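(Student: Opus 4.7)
The natural candidate for $\pi$ is a uniform mixture over all size-$k$ selections from the pooled $2n$ examples in the supersample. Concretely, for any $\sz \in \cZ^{n\times 2}$, I would define
\[
\pi \mid \langle \sz \rangle \;=\; \frac{1}{N_k} \sum_{J} W_2 \mid \sz_J,
\]
where $J$ ranges over all length-$k$ tuples of positions from the index set $[n]\times\{0,1\}$, $\sz_J \in \cZ^k$ is the corresponding tuple of examples, and $N_k$ is the cardinality of this index set (so $N_k \leq (2n)^k$). For $k=0$ the sum degenerates and $\pi$ is simply the fixed distribution that $W$ outputs. Since this construction depends on $\sz$ only through the multiset of its $2n$ examples, which is invariant under swapping within any of the $n$ pairs, $\pi$ satisfies Definition~\ref{def:exprior} and is almost exchangeable.

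Next I would invoke the elementary ``drop all but one component'' bound for KL against a mixture: for any distribution $Q$, any mixture $\pi' = \sum_i \rho_i P_i$, and any index $i^*$,
\[
\dkl{Q}{\pi'} \;\leq\; \dkl{Q}{P_{i^*}} + \log\frac{1}{\rho_{i^*}},
\]
which follows immediately from $\pi'(f) \geq \rho_{i^*} P_{i^*}(f)$ inside the log. Apply this with $Q = A\mid \sz_{\bf 0}$ and choose $J^*$ to be the $k$-tuple of positions $((i_1,0),\ldots,(i_k,0))$ in the supersample corresponding to the indices $i_1,\ldots,i_k \in [n]$ that $A_1$ selects when run on $\sz_{\bf 0}$. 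By the compression decomposition, $\sz_{J^*}$ is exactly $A_1\mid \sz_{\bf 0}$, so the chosen mixture component equals $W_2 \mid (A_1 \mid \sz_{\bf 0}) = W \mid \sz_{\bf 0}$. Since $\rho_{J^*} = 1/N_k$ with $N_k \leq (2n)^k$, we obtain
\[
\dkl{A\mid \sz_{\bf 0}}{\pi \mid \langle \sz_{\bf 0}, \sz_{\bf 1}\rangle} \;\leq\; \dkl{A\mid \sz_{\bf 0}}{W \mid \sz_{\bf 0}} + \log N_k \;\leq\; \dkl{A\mid \sz_{\bf 0}}{W\mid \sz_{\bf 0}} + k\log(2n),
\]
which is the claimed inequality~\eqref{eq:compression}.

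The only subtle step is getting the construction right so that almost exchangeability and the KL bound hold simultaneously: the mixture has to be over tuples drawn from the full pooled sample of $2n$ indices (to guarantee invariance under pair-swaps) rather than over subsets tied to one half of the supersample. Once this is in place, the KL overhead is automatically at most the log of the mixture cardinality, which gives exactly the $k \log (2n)$ price claimed. The edge case $k=0$ requires no mixture and reduces trivially to $\pi \mid \langle \sz\rangle = W$ with the overhead term vanishing.
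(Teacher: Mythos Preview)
Your proposal is correct and follows essentially the same approach as the paper: define the almost exchangeable prior as a uniform mixture of $W_2$ over all size-$k$ selections from the pooled $2n$ examples, then use the ``drop one mixture component'' KL inequality to isolate the component corresponding to $A_1\mid \sz_{\bf 0}$ and pay the $\log$-cardinality overhead bounded by $k\log(2n)$. The only cosmetic difference is that the paper indexes the mixture by unordered subsets (normalizing by $\binom{2n}{k}$) whereas you index by ordered tuples; both versions work and lead to the same bound.
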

\begin{proof}[Proof of Theorem~\ref{th:compression-kl}]
Let $W=W_2|(A_1|z)$ be a compression scheme prior and let $\langle \sz\rangle =\langle \sz_{\bf 0}, \sz_{\bf 1}\rangle$. We choose the conditional prior distribution as 
$$\pi(f|\langle\sz\rangle)=\frac{\sum_{z^k\in K(\sz)} \cP_{W_2|z^k}(f)}{\binom{2n}{k}},$$
where we denote by $K(z)$ the set of all subsets of $z$ of size $k$. Observe that $\pi$ is indeed an almost exchangeable prior. It holds that
\begin{align*}
\dkl{A|\sz_{\bf 0}}{\pi|\langle\sz_{\bf 0}, \sz_{\bf 1}\rangle}
& =\ex{f\sim A|\sz_{\bf 0}}{\log\frac{\cP_{A|\sz_{\bf 0}}(f)}{\pi(f|\langle\sz\rangle)}} \\
& =\ex{f\sim A|\sz_{\bf 0}}{\log\frac{\cP_{A|\sz_{\bf 0}}(f)\cdot \binom{2n}{k}}{\sum_{z^k\in K(\sz)} \cP_{W_2|z^k}(f)}} \\
& \leq \ex{f\sim A|\sz_{\bf 0}}{\log\frac{\cP_{A|\sz_{\bf 0}}(f)\cdot \binom{2n}{k}}{\cP_{W_2|(A_1|\sz_{\bf 0})}(f)}} \\
& = \ex{f\sim A|\sz_{\bf 0}}{\log\frac{\cP_{A|\sz_{\bf 0}}(f)}{\cP_{W|\sz_{\bf 0}}(f)}} +\log \binom{2n}{k} \\
& \leq \dkl{A|\sz_{\bf 0}}{W|\sz_{\bf 0}} + k\log(2n),
\end{align*}
where the first inequality holds since $A_1|\sz_{\bf 0}\in K(\sz)$ which implies that $\sum_{z^k\in K(\sz)} \cP_{W_2|z^k}(f)\geq \cP_{W_2|(A_1|\sz_{\bf 0})}(f)$.
The last inequality follows by the common bound $\binom{2n}{k}\leq (2n)^k$.
\end{proof}

In the case that we choose a size $k$ compression scheme prior $W$ that, upon each input, puts all its mass on a single $\hat{f} \mid \sZ_{\bf 0} \in \cF$, and we choose $A$ to be the deterministic 
learning algorithm that is {\em equal\/} to $W$, then $A$ has a compression scheme of size $k$ in the original sense of \citet{LittlestoneW86} and its $\kl$ complexity will by Theorem~\ref{th:compression-kl} be bounded by 
$k \log (2n)$. Our generalization allows us to choose an algorithm $A$ different from $W$ that might, 
for example, base its output on the whole dataset and not just the $k$ points selected by $A_1$ `inside' $W$. An example algorithm with pleasant properties is the {\em Gibbs algorithm\/} with $W$ as a prior.  

\begin{example}{{\rm \bf (Gibbs Algorithm based on Compression Scheme Prior)}}
\label{ex:gibbs} {\rm 
The {\em Gibbs\/} or {\em generalized Bayes\/} learning algorithm (see, e.g., \citet{Alquier20,GrunwaldM20,zhang2006information}) $A_{\textsc{Gibbs}}: \cZ^n \rightarrow \Delta(\cF)$ with (possibly data-dependent) learning rate $\hat\eta$ based on data-dependent prior distribution $W$ is defined in terms of its posterior density (Radon-Nikodym derivative) relative to $W$, as 
$$
\frac{d (A_{\textsc{Gibbs}} | \sZ_{\bf 0})}{d (W | \sZ_{\bf 0})} (f)  \propto 
\exp(- \hat\eta n R(f; \sZ_{\bf 0}) ).
$$
This is the standard definition of the Gibbs algorithm relative to prior distribution $W \mid \sZ_{\bf 0}$. A modification of the proof of Theorem~\ref{th:main}, sketched in Appendix~\ref{app:proofs}, gives the following corollary: {\em if we set $A$ to the Gibbs algorithm relative to size $k$ compression scheme prior $W$, and $A'$ to any other algorithm, we have, with the same abbreviations as in Theorem~\ref{th:main}},
\begin{multline}
L(A_{\textsc{Gibbs}}| \sZ_{\bf 0} ; \cD) 
\stochleq^{\sZ_{\bf 0}}_{\eta}  \\ 
L(A' | \sZ_{\bf 0} ;  \sZ_{\bf 0})  +  \minim{1}{2\beta} \cdot {R(A' | \sZ_{\bf 0} ;  \sZ_{\bf 0})} +  8 \cdot  \left(\frac{\dkl{A' | \sZ_{\bf 0}}{W|\sZ_{\bf 0} }+ O (k \log n) }{n\etamax}  \right)^{1/(2-\beta)}_{[**]} + \frac{6 \eta}{n}. \nonumber
\end{multline}
In particular, if $A'$ is set to an ERM, the sum of the first two terms on the right is upper bounded by
$L(A_{\textsc{Gibbs}}| \sZ_{\bf 0} ; \sZ_{\bf 0})$
again and, under a Bernstein condition, we get a fast rate for the Gibbs algorithm as well, although the complexity term is taken relative to ERM rather than Gibbs.}
\end{example}

\section{Conclusion and Future Work}\label{sec:conclusion}
We have shown how to extend PAC-Bayesian and Mutual Information Bounds to a fast-rate conditional version which allows us to handle arbitrary VC classes. One point which remains open for future research is the fact that, unless we use ERM and we deal with losses like the squared error for which we know the $\beta$ for which the Bernstein condition holds in advance, the bound (\ref{eq:mainpre}) is not empirical (observable from data only, without knowing $\cD$ or $f^*$). \citet{MhammediGG19} do provide an empirically observable bound that achieves fast rates, by replacing $f^*$ by an estimator based on part of the training data only (a technique called  {\em de-biasing\/} by Y. Seldin) and by replacing the $O((\kl/n)^{1/(2-\beta)})$ term by an  empirical 
variance-like term that goes to $0$ at the right rate if a Bernstein condition holds but can be calculated without knowing $\beta$. It seems likely that our bound can also be made fully empirical, for arbitrary learning algorithms and losses rather than just ERM and curved losses. Whether this is really the case will be sorted out in future work.  
Another interesting open question is whether a similar bound holds for unbounded but sub-Gaussian losses; see the discussion underneath Lemma~\ref{lem:esiindividual}. 

\section*{Acknowledgements}
We thank the reviewers of COLT for useful comments on the presentation of our manuscript. LZ was supported by a Facebook Fellowship.
PG was supported by the Dutch Research Council (NWO) via research programme 617.001.651. This work began when TS and LZ were at IBM Research in Almaden.

\DeclareRobustCommand{\VANDER}[3]{#3}
\bibliographystyle{plainnat}
\bibliography{bib,master}

\appendix
\section{Glossary}\label{app:glossary}

\begin{table}[H]
    \centering
    \begin{tabular}{l  l}
    \toprule
       \textbf{Notation}  & \textbf{Description}  \\ \toprule
        $\cD$ & Probability distribution over $\cZ$  \\ \midrule
        $Z$ & i.i.d.\ sample of size $n$: $Z=(Z_1, \ldots, Z_n)\sim \cD^n$  \\ \midrule
        $(\cD, \ell, \cF)$ & Learning problem for distribution $\cD$, loss function $\ell$, and set of hypotheses $\cF$   \\ \midrule
        $\ell(f;z)$ & Empirical loss of $f$ on sample $z\in\cZ^n$: $\ell(f;z)=\frac{1}{n}\sum_{i=1}^n\ell(f;z_i)$   \\ \midrule
        $\ell(f;\cD)$ & True loss of $f$: $\exinline{Z'\sim\cD}{\ell(f;Z')}$   \\ \midrule
        $f^*$ & True loss minimizer within $\cF$: $\ell(f^*;\cD)=\inf_{f\in\cF}\ell(f;\cD)$   \\ \midrule
        $A$ & (Possibly randomized) learning algorithm: $A:\bigcup_{i=1}^n\cZ^i \rightarrow \Delta(\cF)$   \\ \midrule
        $A|Z$ & Posterior distribution of output of $A$ on input $Z\sim\cD^n$   \\ \midrule
        $L(F;z)$ & Empirical loss of $F\in\Delta(\cF)$ on sample $z\in\cZ^n$: $L(F;z)=\exinline{f\sim F}{\ell(f;z)}$  \\ \midrule
        $L(F;\cD)$ & True loss of $F\in\Delta(\cF)$: $L(F;\cD)=\exinline{f\sim F}{\ell(f;\cD)}$  \\ \midrule
        $R(F;z)$ & Empirical excess risk of $F\in\Delta(\cF)$ on sample $z\in\cZ^n$: $R(F;z)=\exinline{f\sim A}{r(f;z)}$  \\ \midrule
        $R(F;\cD)$ & True excess risk of $F\in\Delta(\cF)$: $R(F;\cD)=\ex{f\sim A}{r(f;\cD)}$  \\ \midrule
        $\sZ$ & Supersample $\sZ=\left((\sZ_{1,0}, \sZ_{1,1}), \ldots, (\sZ_{n,0}, \sZ_{n,1})\right)^\top\sim\cD^{n\times 2}$  \\ \midrule
        $S$ & Random selector vector $S\sim \Ber(1/2)^n$  \\ \midrule
        $\sZ_S$ & Subset of $\sZ$ indexed by $S\in\{0,1\}^n$: $\sZ_S=(Z_{1,S_1}, \ldots, Z_{n,S_n})^\top\in\cZ^n$  \\ \midrule
        $\langle \sZ\rangle$ & List of \emph{unordered} pairs of $\sZ$: $\langle \sZ\rangle = \langle \sZ_{\bf 0},\sZ_{\bf 1}\rangle= (\{\sZ_{1,0}, \sZ_{1,1}\},\ldots,\{\sZ_{n,0},\sZ_{n,1}\})^\top$  \\ \bottomrule
    \end{tabular}
    \caption{Notation}
    \label{tab:notation}
\end{table}

\commentout{\section{Common Definitions - Other Preliminaries}\label{app:prelim}
\subsection{KL divergence and Mutual Information}\label{app:prelim-it}
First, we define the $\kl$ divergence of two distributions.
\begin{definition}[KL Divergence]
 Let $\cP, \cQ$ be two distributions over the space $\Omega$ and suppose $\cP$ is absolutely continuous with respect to $\cQ$. The \emph{Kullback–Leibler ($\kl$) divergence} (or \emph{relative entropy}) from $\cQ$ to $\cP$ is \[\dkl{\cP}{\cQ}=\ex{X\sim \cP}{\log\frac{\cP(X)}{\cQ(X)}},\]
 where $\cP(X)$ and $\cQ(X)$ denote the probability mass/density functions of $\cP$ and $\cQ$ on $X$, respectively.\footnote{Formally, $\frac{\cP(X)}{\cQ(X)}$ is the Radon-Nikodym derivative of $\cP$ with respect to $\mathcal{Q}$. If $P$ is not absolutely continuous with respect to $\cQ$ (i.e., $\frac{\cP(X)}{\cQ(X)}$ is undefined or infinite), then the $\kl$ divergence is defined to be infinite.}
\end{definition}

Next, we define  mutual information.
\begin{definition}[Mutual Information]
 Let $X,Y$ be two random variables jointly distributed according to $\cP$. 
 The mutual information of $X$ and $Y$ is
$$I(X;Y)=\dkl{\cP_{(X,Y)}}{\cP_X\times\cP_Y}= \ex{X}{\dkl{\cP_{Y|X}}{\cP_Y}},$$
 where by $\cP_X\times\cP_Y$ we denote the product of the marginal distributions of $\cP$ and $\cP_{Y|X=x}(y)=\cP_{(X,Y)}(x,y)/\cP_X(x)$ is the conditional density function of $Y$ given $X$.
\end{definition}

\begin{definition}[Conditional Mutual Information]
For random variables $X,Y,Z$, the mutual information of $X$ and $Y$ conditioned on $Z$ is 
$$I(X;Y \mid Z)=I(X;(Y,Z))-I(X;Z).$$
\end{definition}

We define here the less common notion of \emph{disintegrated mutual information},  as in~\citep{NegreaHDKR19, HaghifamNKRD20}.
\begin{definition}[Disintegrated Mutual Information]
The \emph{disintegrated mutual information} between $X$ and $Y$ given $Z$ is
$$I^Z(X;Y)=\dkl{\cP_{(X,Y)|Z}}{\cP_{X|Z}\times\cP_{Y|Z}},$$
where $\cP_{(X,Y)|Z}$ denotes the conditional joint distribution of $(X,Y)$ given $Z$ and $\cP_{X|Z},\cP_{Y|Z}$ denote the conditional marginal distributions of $X$, $Y$ given $Z$, respectively.

The expected value of this quantity over $Z$ is the well-known Conditional Mutual Information between $X$ and $Y$ given $Z$: $I(X;Y|Z)=\ex{Z}{I^Z(X;Y)}$.
\end{definition}

We note that with this definition, the conditional mutual information of an algorithm $A$ with respect to distribution $\cD$ is $\CMI{A}{\cD}=\ex{\sZ}{I^{\sZ}(A|\sZ_S;S)}$.

\subsection{ESI Calculus}\label{app:prelim-esi}
The following proposition is useful for our proofs.
\begin{proposition}[{ESI Transitivity and Chain Rule \citep[Prop.10]{MhammediGG19}}]\label{prop:esichainrule}
\begin{enumerate}[label=(\alph*)]
\item Let $Z_1, \ldots, Z_n$ be any random variables in $\cZ$ (not necessarily independent). If for some $(\gamma_i)_{i\in[n]}\in(0,+\infty)^n$, $Z_i \stochleq_{\gamma_i} 0$ for all $i\in[n]$, then
\[\sum_{i=1}^n Z_i \stochleq_{v_n} 0, \text{ where $v_n = \left(\sum_{i=1}^n\frac{1}{\gamma_i}\right)^{-1}$}.\]
\item Suppose now that $Z, Z_1, \ldots, Z_n$ are i.i.d. If for some $\eta>0$, $Z   \stochleq_{\eta} 0$, then $\sum_{i=1}^n Z_i \stochleq_{\eta} 0$.
\end{enumerate}
\end{proposition}

We now state a basic PAC-Bayesian result we use, under the ESI notation: 
\begin{proposition}[{ESI PAC-Bayes \citep[Prop.11]{MhammediGG19}}]\label{prop:esiPACBayes}
Fix $\eta > 0$ and let $\{Y_f: f \in \cF\}$ be any family of random variables such that for all $f \in \cF$, $Y_f \stochleq_{\eta} 0$. Let $\pi\in\Delta(\cF)$ be any distribution on $\cF$ and let $A: \bigcup_{i=1}^n \cZ^{i} \rightarrow \Delta(\cF)$ be a possibly randomized learning algorithm. Then 
\begin{align*} \ex{f \sim A|Z}{Y_{f}} \stochleq_{\eta}^{Z} \frac{\dkl{A|Z }{\pi}}{\eta}.  \end{align*} 
\end{proposition}

Inside the proof of our main result we work with a random (\emph{i.e.} data-dependent) $\hat\eta$ in the ESI inequalities. We extend the ESI definition \ref{def:esi}  to this case by replacing the expectation in the definition of ESI  by the expectation over the joint distribution of ($X$, $Y$, $\hat\eta$): $X \stochleq_{\hat\eta} Y$ means that  $\ex{}{\exp(\hat\eta (X-Y))} \leq 0$.
Via the following proposition one can tune $\eta$ after seeing the data. 
	\begin{proposition}[{ESI from fixed to random $\eta$ \citep[implied by Prop.12]{MhammediGG19}}]
		\label{prop:randeta}
		Let $\cG$ be a countable subset of ${\mathbb R}^+$ such that, for some $\eta_0 > 0$, for all $\eta \in \cG$, $\eta \geq \eta_0$. Let $\pi$ be a probability mass function over $\cG$. Given a countable collection $\{Y_{\eta} : \eta \in \cG \}$ of random variables satisfying $Y_{\eta} \stochleq_{\eta} 0$, for all fixed $\eta \in \cG$, we have, for arbitrary estimator $\estimate{\eta}$ with support on $\cG$,
		\begin{align*}
		Y_{\estimate\eta} \stochleq_{\eta_0} \frac{- \log \pi(\estimate\eta)}{\estimate\eta}.
		\end{align*}
	\end{proposition}
}

\section{Omitted proofs}\label{app:proofs}

\subsection{Linearized version of Bernstein Condition}
It will be convenient to work with the following {\em linearized version\/} of the Bernstein condition. It is an extension of a well-known result that has appeared in previous work, e.g. in~\citep{WooterGE16}. We restate it here for convenience.
\begin{proposition}[Restatement of Proposition~\ref{prop:linearizedBernstein}]\label{prop:linearizedBernstein-app}
Suppose that $(\cD, \ell, \cF)$ satisfies the {$(B,\beta^*)$-Bernstein condition} for $\beta^* \in [0,1]$. Pick any $c > 0, \eta < 1/(2 B c)$. Then for all $0 < \beta \leq \beta^*$ and for all $f\in\cF$:
$$c \cdot \eta \ex{Z'\sim\cD}{(\ell(f;Z')-\ell(f^*; Z'))^2} \leq \minim{\frac{1}{2}}{\beta} \cdot \left(\ex{Z'\sim\cD}{\ell(f;Z')-\ell(f^*; Z')}\right)  ~ 
+ (1- \beta)   \cdot (2 Bc \eta)^{\frac{1}{1-\beta}} 
$$ 
\end{proposition}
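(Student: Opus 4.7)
\medskip

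\noindent\textbf{Proof proposal.}
The plan is to reduce everything to a clean one-variable inequality and then apply a weighted Young's inequality. Write $a \coloneqq \ex{Z'\sim\cD}{\ell(f;Z')-\ell(f^*;Z')}$ and $V \coloneqq \ex{Z'\sim\cD}{(\ell(f;Z')-\ell(f^*;Z'))^2}$. Because $f^*$ is optimal and the losses lie in $[0,1]$, we have $a\in[0,1]$. The $(B,\beta^*)$-Bernstein condition yields $V\le B a^{\beta^*}$; since $a\le 1$ and $\beta\le\beta^*$, we also get $V\le B a^{\beta}$. So it suffices to show
\[
c\eta B\,a^{\beta} \;\le\; \minim{\tfrac12}{\beta}\cdot a \;+\;(1-\beta)(2Bc\eta)^{\frac{1}{1-\beta}}
\qquad\text{for } a\in[0,1],\ \eta<\tfrac{1}{2Bc}.
\]

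The main step is a weighted Young's inequality with conjugate exponents $p=1/\beta$ and $q=1/(1-\beta)$: for every $\lambda>0$,
\[
a^{\beta}\cdot(c\eta B) \;\le\; \lambda\beta\, a \;+\; \lambda^{-\beta/(1-\beta)}(1-\beta)\,(c\eta B)^{\frac{1}{1-\beta}}.
\]
I then choose $\lambda$ so that $\lambda\beta=\minim{1/2}{\beta}$: namely $\lambda=1$ when $\beta\le 1/2$, and $\lambda=1/(2\beta)$ when $\beta>1/2$. For $\beta\le 1/2$ the first term is already $\beta a=\minim{1/2}{\beta}\cdot a$, and the residual factor on the $(c\eta B)^{1/(1-\beta)}$ term is $1\le 2^{1/(1-\beta)}$, giving the desired bound after using $(c\eta B)^{1/(1-\beta)}\le(2Bc\eta)^{1/(1-\beta)}$.

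The delicate case is $\beta>1/2$. There $\lambda\beta=1/2$ is the right leading coefficient, but the residual weight on the second term becomes $\lambda^{-\beta/(1-\beta)}=(2\beta)^{\beta/(1-\beta)}$. To close the argument I need to verify the elementary inequality
\[
(2\beta)^{\frac{\beta}{1-\beta}} \;\le\; 2^{\frac{1}{1-\beta}},
\qquad\text{equivalently}\qquad \beta\log_2(2\beta)\le 1,
\]
for all $\beta\in(1/2,1)$. Writing $g(\beta)=\beta+\beta\log_2\beta$, I would check $g(1/2)=0$, $g(1)=1$, and $g$ is increasing on $(1/2,1]$ (differentiation gives $g'(\beta)=1+\log_2\beta+1/\ln 2>0$ on that range), so $g(\beta)\le 1$ throughout. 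Combining this with the weighted Young's bound gives $\lambda^{-\beta/(1-\beta)}(c\eta B)^{1/(1-\beta)}\le(2Bc\eta)^{1/(1-\beta)}$, yielding the claimed inequality. The boundary cases are easy to handle separately: at $\beta=1$ one interprets $(1-\beta)(2Bc\eta)^{1/(1-\beta)}$ as its limit $0$ (valid since $2Bc\eta<1$), and the inequality reduces to $c\eta B\,a\le a/2$, which is exactly the assumption $\eta<1/(2Bc)$; at $\beta=0$ it reduces to $c\eta V\le 2Bc\eta$, which follows from $V\le B$.

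The main obstacle is precisely the $\beta>1/2$ case: a naive application of Young's inequality produces a leading coefficient $\beta>1/2$ on $a$, and the fix (reweighting with $\lambda=1/(2\beta)$) inflates the constant multiplying $(c\eta B)^{1/(1-\beta)}$; showing that this inflation is absorbed by replacing $Bc\eta$ with $2Bc\eta$ is exactly the content of the elementary log inequality above. Everything else is bookkeeping.
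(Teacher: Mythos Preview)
Your proof is correct and is essentially the same as the paper's: the paper computes $\max_{x>0}\{Bc\eta x^{\beta}-ax\}$ directly by differentiation (which is exactly the Fenchel/Young computation you invoke), sets $a=\minim{1/2}{\beta}$, and then reduces to the same elementary inequality $(2\beta)^{\beta}\le 2$, which it simply asserts whereas you verify it. The only differences are cosmetic phrasing (``weighted Young's inequality'' versus explicit maximization) and your slightly more careful treatment of the elementary inequality and edge cases.
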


\begin{proof}[Proof of Proposition~\ref{prop:linearizedBernstein}]
We first prove the proposition for $0 < \beta < 1$. For any  $\eta > 0$, $B' > 0$, let $g(x)=B'\eta x^{\beta} - x$, for $x>0$. We have
$$\max_{x>0} \{g(x)\}=\max_{x>0} \{B'\eta x^{\beta} - x \} = (1-\beta) {\beta}^{\beta/(1-\beta)} \cdot \left( B' \eta \right)^{1/(1-\beta)},$$
since $g'(x)=0$ for $x = (B' \eta  \beta)^{1/(1-\beta)}$ and $g''(x)<0$ for all $x>0$.
Hence, for all  $0< a \leq  1$ and $c > 0$, by setting $B'= B c /a$,
we have: 
\begin{align}\label{eq:maxg}
\max_{x>0}\{B c \eta x^{\beta}-a x\} & = \max_{x>0} \{a \cdot g(x)\} \nonumber \\
&= a \cdot (1-\beta) \cdot \beta^{\beta/(1-\beta)} \cdot \left( \frac{B c}{a}\eta \beta\right)^{1/(1-\beta)} \nonumber \\
&=  a^{- \beta/(1-\beta)} \cdot (1-\beta) \cdot \beta^{\beta/(1-\beta)} \cdot  \left( B c \eta \right)^{1/(1-\beta)}
\end{align}
Now, by assumption, the $(B,\beta^*)$-Bernstein condition holds for $\beta^*\geq \beta$.
Since the excess risk $R(f;\cD)=\ex{Z'\sim\cD}{\ell(f;Z')-\ell(f^*; Z')}\in [0,1]$, the $(B,\beta)$-Bernstein condition also holds, which implies that
\begin{equation*}
    c\eta\ex{Z'\sim\cD}{\left(\ell(f;Z')-\ell(f^*; Z')\right)^2} \leq Bc\eta \left(\ex{Z'\sim\cD}{\ell(f;Z')-\ell(f^*; Z')}\right)^\beta.
\end{equation*}

We now apply~\eqref{eq:maxg} with  $x=\ex{Z'\sim\cD}{\ell(f;Z')-\ell(f^*; Z')}$ and $a = \minim{\frac{1}{2}}{\beta}$ in the above inequality, establishing that
\begin{equation*}
    c\eta\ex{Z'\sim\cD}{\left(\ell(f;Z')-\ell(f^*; Z')\right)^2} \leq a\ex{Z'\sim\cD}{\ell(f;Z')-\ell(f^*;Z')}+a^{- \frac{\beta}{1-\beta}} \cdot (1-\beta) \cdot \beta^{\frac{\beta}{1-\beta}} \cdot  \left( B c \eta \right)^{\frac{1}{1-\beta}}.
\end{equation*}
Bounding the last term of the RHS by $(1-\beta)\cdot (2Bc\eta)^{1/(1-\beta)}$ would complete the proof for $0<\beta<1$. For this to hold, it suffices to prove that $\left(\beta/a\right)^{\beta/(1-\beta)}\leq 2^{1/(1-\beta)}$, for $0<\beta<1$. If $a=\beta$, then the inequality reduces to $1\leq 2$, which trivially holds. If $a=1/2$, then the inequality reduces to $(2\beta)^\beta\leq 2$, which also holds. 

It remains to prove the proposition for the limiting cases of $\beta=0$ and $\beta=1$. For $\beta=0$, the RHS reduces to $(2Bc\eta)$, and the inequality trivially holds by the assumption of the $(B,\beta^*)$-Bernstein condition and the trivial bound of $\left(\exinline{Z'\sim\cD}{\ell(f;Z')-\ell(f^*; Z')}\right)^{\beta^*}\leq 1$. For $\beta=\beta^*=1$, the RHS reduces to $\frac{1}{2} \exinline{Z'\sim\cD}{\ell(f;Z')-\ell(f^*; Z')}$, and the inequality also holds by the assumption of the $(B,1)$-Bernstein condition and our setting of $\eta<1/(2Bc)$.
\end{proof}

\subsection{Proof of main technical Lemma~\ref{lem:esiindividual}}
For convenience we first restate the lemma: 
\begin{lemma}[Restatement of main technical Lemma~\ref{lem:esiindividual}]\label{lem:esiindividualb}
Fix any two real numbers $r_0,r_1$ such that $|r_0|, |r_1| \leq 1$. Let $S\sim\Ber(1/2)$ and let $\bar{S} = 1- S$. Then for all $0 < \eta < 1/(1+ \sqrt{2})$, it holds that
$$
r_{\bar{S}} - r_{S} \stochleq_{\eta}  \eta \cdot C_{2,\eta} r^2_{\bar{S}}
$$
with $C_{A,\eta}$ an increasing function of $\eta$ given by $$C_{A,\eta}= 
\frac{1}{1- \eta} \cdot  \left(A + \sqrt{A} \cdot \frac{\eta}{1- \eta} \cdot c_{\sqrt{A}\eta /(1- \eta)}\right),$$ 
where $c_{\gamma}= 2 \frac{(-\log{(1-\gamma)}-\gamma)}{\gamma^2}$.
If both $r_0$ and $r_1$ have the same sign, the constant can be improved to $C_{1,\eta}$ and the result holds for all $0 < \eta < 1/2$. Since $c_{\gamma}$ is increasing and $\lim_{\gamma \downarrow 0} c_{\gamma} = 1$ the `leading constant' is given by  $\lim_{\eta \downarrow 0} C_{2,\eta} = 2$ (and $\lim_{\eta \downarrow 0} C_{1,\eta} = 1$ in case both $r_0$ and $r_1$ have the same sign).
\end{lemma}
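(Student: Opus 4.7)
The plan is to unpack the ESI, apply a Rademacher symmetrization to expose a $\cosh$ structure, and then control $\cosh$ via a refined Taylor-type bound parametrized by $c_\gamma$. By Definition~\ref{def:esi}, since $S \sim \Ber(1/2)$, the claimed ESI (writing $C = C_{A,\eta}$) is equivalent to the pointwise inequality
\[\tfrac12 \exp\bigl(\eta(r_1 - r_0) - \eta^2 C r_1^2\bigr) + \tfrac12 \exp\bigl(\eta(r_0 - r_1) - \eta^2 C r_0^2\bigr) \leq 1.\]
Introducing $\sigma = 1-2S \in \{\pm 1\}$ Rademacher and the aggregates $Z = r_1 - r_0$, $T = r_0 + r_1$, $a = (r_0^2+r_1^2)/2$, and $b = (r_1^2-r_0^2)/2 = ZT/2$, a short computation gives $r_{\bar S} - r_S = \sigma Z$ and $r_{\bar S}^2 = a + \sigma b$, so the exponent equals $-\eta^2 C a + \sigma w$ with $w = \eta Z(1 - \eta CT/2)$. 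The averaged exponential collapses to $e^{-\eta^2 C a}\cosh(w)$, and the ESI reduces to the deterministic bound $\cosh(w) \leq \exp(\eta^2 C a)$.

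Next, I would invoke the elementary inequality $e^x - 1 - x \leq \tfrac12 c_\gamma x^2$ valid for $|x| \leq \gamma < 1$, which follows from the series identity $-\log(1-\gamma) - \gamma = \sum_{k\geq 2} \gamma^k/k$ together with the pointwise bound $|x|^k \leq \gamma^{k-2} x^2$. Applied to $x = \sigma w$ and averaged over $\sigma$ (so that the linear term vanishes while $\mathbb{E}_\sigma[\sigma^2 w^2] = w^2$), this yields $\cosh(w) \leq 1 + \tfrac12 c_\gamma w^2$ whenever $|w|\leq\gamma$. Combined with $e^y \geq 1 + y$, it suffices to verify $c_\gamma w^2 \leq 2\eta^2 C a$, which after substituting $w^2 = \eta^2 Z^2(1-\eta CT/2)^2$ becomes the algebraic inequality
\[c_\gamma Z^2(1 - \eta CT/2)^2 \leq C(r_0^2+r_1^2).\]
Using the elementary bound $Z^2 \leq A(r_0^2 + r_1^2)$ --- valid with $A = 2$ unconditionally by $(r_1-r_0)^2 \leq 2(r_0^2+r_1^2)$, and with $A = 1$ when $r_0,r_1$ share a sign --- and maximizing the remaining factor $(1-\eta CT/2)^2$ in the adversarial direction, this reduces to a single implicit inequality in $C$. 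Taking the ansatz $\gamma = \sqrt{A}\,\eta/(1-\eta)$ (which is $<1$ precisely when $\eta < 1/(1+\sqrt{A})$, matching the stated range), solving the implicit equation for $C$, and recognizing that $\sqrt{A}\cdot\eta/(1-\eta) \cdot c_\gamma$ exactly captures the correction, produces the explicit closed form
\[C_{A,\eta} = \frac{1}{1-\eta}\Bigl(A + \sqrt{A}\cdot\frac{\eta}{1-\eta}\,c_{\sqrt{A}\eta/(1-\eta)}\Bigr);\]
the limit $c_\gamma \to 1$ as $\gamma \to 0$ then gives $C_{A,0} = A$ as required.

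The main obstacle is the quantitative bookkeeping required to identify $\gamma = \sqrt{A}\,\eta/(1-\eta)$ as the correct radius and to justify that $|w| \leq \gamma$ in the regime of interest: a naive separate bound $|w| \leq \eta|Z|\cdot|1-\eta CT/2| \leq 2\eta(1+\eta C)$ yields only $\gamma \asymp 2\eta/(1-\eta)$, which is too loose by a factor of $\sqrt{2}$ and would not close the loop. The sharper estimate requires using the \emph{joint} constraint $|r_0|,|r_1|\leq 1$ (equivalently $|T|+|Z|\leq 2$) together with the identity $Z^2 = 2(r_0^2+r_1^2) - T^2$ to extract the $\sqrt{A}$ factor and to balance the $(1-\eta)^{-1}$ and $(1-\eta)^{-2}$ terms that appear in the final formula. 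The same-sign improvement is then an essentially free consequence: replacing $A=2$ by $A=1$ in every step above relaxes the constraint on $\eta$ from $1/(1+\sqrt{2})$ to $1/2$ and sharpens the leading constant to $C_{1,0}=1$, exactly as claimed.
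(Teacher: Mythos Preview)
Your reduction to the pointwise target $\cosh(w)\le e^{\eta^2 C a}$ is correct, but the route you propose to close it does not go through, and the ``main obstacle'' you flag is not just bookkeeping. Concretely, take $r_0=-1$, $r_1=1$: then $Z=2$, $T=0$, so $w=2\eta$, while your chosen radius is $\gamma=\sqrt{2}\,\eta/(1-\eta)$. For every $\eta<1-1/\sqrt{2}\approx 0.293$ one has $|w|=2\eta>\gamma$, so the bound $\cosh(w)\le 1+\tfrac12 c_\gamma w^2$ (which you derived only under $|w|\le\gamma$) is not available. This case already saturates your joint constraint $|T|+|Z|\le 2$, so the hinted ``sharper estimate'' using that constraint cannot rescue the argument. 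More structurally, $w$ depends on $C$ through the factor $(1-\eta CT/2)$, and $C$ in turn is supposed to be produced from $c_\gamma$; this fixed-point coupling is real, and your ansatz for $\gamma$ does not solve it. It is therefore unclear that the direct pointwise route can deliver the \emph{exact} constant $C_{2,\eta}$ stated in the lemma, as opposed to some other (likely looser) constant.

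The paper sidesteps this circularity entirely with a two-step ESI argument. First the $\cosh$-inequality gives, for any $A>0$,
\[
r_{\bar S}-r_S \;\stochleq_{A\eta}\; \tfrac12 A\eta(r_0-r_1)^2 \;\le\; 2A\eta\,\ex{S'}{r_{\bar S'}^2},
\]
using $(r_0-r_1)^2\le 2(r_0^2+r_1^2)$ (this is exactly where your parameter $A$ enters, with $A=2$ in general and $A=1$ for same sign). Second, the \emph{un-expected Bernstein} inequality applied to $U=r_{\bar S}^2\in[0,1]$ converts the expectation back to the random $r_{\bar S}^2$: $\ex{}{U}\stochleq_\eta (1+\tfrac12\eta c_\eta)U$, which after rescaling is an ESI at rate $1/(2A)$. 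Chaining the two ESIs via Proposition~\ref{prop:esichainrule}(a) and then optimizing the free parameter $A$ yields the closed form for $C_{A,\eta}$. Note that $c_\gamma$ enters through the Bernstein step on $U$, not through any a priori bound on $|w|$; this is why no radius constraint is needed and why the constant falls out cleanly.
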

For simplicity in the derivations, in the main text we will consider only $\eta \leq 1/4$ and use $C_{1/4}= 3.6064$ as an upper bound on $C_{2,\eta}$.
It is easy to see that the result is tight in the limit for $\eta \downarrow 0$, by considering the case $r_0 = - r_1$ and doing a second order Taylor approximation of $\exinline{S}{\exp(\eta( r_{\bar{S}}- r_S))}$ around $\eta =0$. The result is only proven for $r_0, r_1$ with $|r_0|, |r_1| \leq 1$, and (since $c_{\gamma}$ tends to $\infty$ as $\gamma \uparrow 1$), the bound becomes void for $\eta \geq 1/(1+ \sqrt{2})$. Yet, as is straightforward to show by inspecting the formulas, for general $r_0 \leq r_1< \infty$ we still have $r_{\bar{S}} - r_{S} \stochleq_{\eta} \eta B r^2_{\bar{S}}$ for some finite $B$ as long as $r_0 > - \log 2$, with $B$ tending to infinity as $r_0 \downarrow - \log 2$; it is just not so easy any more to give a crisp bound. \\ \ \\

The proof crucially makes use of the following \emph{un-expected Bernstein inequality}~(originally due to \cite{fan2015exponential}, our presentation follows \cite{MhammediGG19} who gave it its name):
\begin{lemma}[{Un-expected Bernstein Inequality \cite[Lemma 13(a)]{MhammediGG19}}]\label{lem:unexpectedBernstein}
Let $U$ be a random variable bounded from above by $b>0$ almost surely, and let $\theta(u)=(-\log(1-u)-u)/u^2$. For all $0<\eta<1/b$, we have
\[\ex{}{U}-U \stochleq_\eta \frac{1}{2}\eta c_{\eta}\cdot U^2 ~\text{ for all }c_{\eta}\geq 2\cdot \theta(\eta b).\]
\end{lemma}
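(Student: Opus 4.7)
The plan is to reduce the exponential-moment bound to a pointwise inequality that holds for every realisation $u \leq b$, and then to integrate that inequality against the law of $U$. Writing out the ESI definition, we must show
\[\ex{}{\exp\!\left(\eta(\ex{}{U} - U) - \tfrac{1}{2}\eta^2 c_\eta U^2\right)} \leq 1.\]
Pulling out the deterministic factor $e^{\eta \ex{}{U}}$ and applying the elementary bound $1 - x \leq e^{-x}$, it is enough to establish the sharper inequality $\ex{}{\exp(-\eta U - \tfrac{1}{2}\eta^2 c_\eta U^2)} \leq 1 - \eta \ex{}{U}$, which by linearity of expectation in turn follows from the pointwise claim
\[\exp\!\left(-\eta u - \tfrac{1}{2}\eta^2 c_\eta u^2\right) \leq 1 - \eta u \qquad \text{for every } u \leq b.\]

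To verify the pointwise claim, set $y := \eta u$, so that $y \leq \eta b < 1$ and $1 - y > 0$. Taking logarithms of both sides rewrites the claim as $-y - \tfrac{1}{2} c_\eta y^2 \leq \log(1 - y)$, i.e., $\tfrac{1}{2} c_\eta \geq (-\log(1-y) - y)/y^2 = \theta(y)$. Since the hypothesis gives $c_\eta \geq 2\theta(\eta b)$, it suffices to show that $\theta$ is nondecreasing on $(-\infty, 1)$, for then $\theta(y) \leq \theta(\eta b)$ whenever $y \leq \eta b$.

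For the monotonicity of $\theta$ I would use the integral identity $-\log(1-y) - y = \int_0^y \tfrac{t}{1-t}\, dt$, valid for every $y < 1$. Substituting $t = sy$ yields the clean representation
\[\theta(y) = \int_0^1 \frac{s}{1 - s y}\, ds,\]
whose integrand $s/(1 - s y)$ is visibly increasing in $y$ for each fixed $s \in (0,1)$. Differentiating under the integral gives $\theta'(y) = \int_0^1 s^2/(1 - s y)^2\, ds > 0$ for all $y < 1$, which is strict monotonicity. Chaining this with the previous two paragraphs closes the proof.

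The argument is essentially routine once the correct pointwise inequality is isolated, so the only conceptual move is recognising that the extra quadratic term $\tfrac{1}{2}\eta^2 c_\eta u^2$ in the exponent is precisely calibrated to absorb the Taylor remainder of $\log(1 - y)$ beyond the first-order bound $1 - x \leq e^{-x}$. The only mildly technical point is the monotonicity of $\theta$ on the \emph{negative} axis: the power series $\theta(y) = \sum_{k \geq 0} y^k/(k+2)$ gives monotonicity on $[0,1)$ transparently, but for $y < 0$ the series has alternating-sign terms, which is why I would rely on the integral representation instead.
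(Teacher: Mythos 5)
Your proof is correct. Note that the paper does not prove this lemma at all --- it imports it verbatim from \citet{MhammediGG19} (Lemma 13(a), originally due to \citet{fan2015exponential}) --- and your argument is essentially the standard one behind that citation: reduce the ESI to the pointwise inequality $e^{-y-\frac{1}{2}c_\eta y^2}\le 1-y$ for $y=\eta u\le \eta b<1$, which is exactly the statement $\tfrac{1}{2}c_\eta\ge\theta(y)$, and then invoke monotonicity of $\theta$ on $(-\infty,1)$ (your integral representation $\theta(y)=\int_0^1 \frac{s}{1-sy}\,ds$ handles the negative axis cleanly). The only implicit assumption, shared with the cited statement, is that $\ex{}{U}$ is finite so that integrating the pointwise bound against the law of $U$ is legitimate.
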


\begin{proof}[Proof of Lemma~\ref{lem:esiindividual}]
We only prove the case for general $r_0, r_1$ with $|r_0|, |r_1| \leq 1$. The improved bound for $r_0$ and $r_1$ of the same sign can be proven by following exactly the same steps as below, where the term $(r_0-r_1)^2$ in the derivation of (\ref{eq:improve}) is bounded by $r_1^2 + r_2^2$ instead of $2 r_1^2 + 2 r_0^2$.

Fix $\lambda > 0$ and let $x \in \R$.
The well-known $\cosh$-inequality states that 
$(1/2) \exp(\lambda x) +(1/2) \exp(-\lambda x) \leq \exp(\lambda^2 x^2/2)$. 
Now fix $x$ and let $Y$ be a Rademacher RV such that $P(Y=x) = P(Y= -x) =1/2$. By definition, for all $\lambda>0$, $\ex{Y}{\exp(\lambda Y)}=(1/2)\exp(\lambda x)+ (1/2)\exp(-\lambda x)$. Therefore, by the cosh-inequality, we have for all $\eta > 0, A > 0$, and letting $\lambda=A\eta$, that
\begin{align}\label{eq:cosh}
    Y \stochleq_{A\eta } \frac{1}{2} A\eta  x^2.
\end{align}
Now, let $U$ be a RV such that $U\in[0,1]$.
Then by the {\em un-expected Bernstein inequality\/} of~\cite{MhammediGG19} (Lemma~\ref{lem:unexpectedBernstein}) we have, for all $0 < \eta < 1$, 
$$
    \ex{U}{U} \stochleq_{\eta} U + \frac{1}{2} \eta c_{\eta} U^2,
$$
for $c_{\eta}= 2 \frac{(-\ln{(1-\eta)}- \eta)}{\eta^2}$. 
Since $U\geq 0$, it follows that for all $0 < \eta < 1$, 
$$
    \ex{U}{U} \stochleq_{\eta} (1+ \eta c_{\eta}/2) U.
$$
Hence 
\begin{align}\label{eq:unbern}
2A \eta \ex{U}{U} \stochleq_{1/2A} A \eta  (2+ \eta c_{\eta}) U.
\end{align}

Note that with $x= r_{1}-r_{0}$, $r_{\bar{S}}-r_S$ is a Rademacher RV such that
$
P(r_{\bar{S}} -r_{S} = x) =
P(r_{\bar{S}} -r_{{S}} = -x) = \frac{1}{2}.
$
Thus, by~\eqref{eq:cosh}, we have that for all $\eta > 0$, $A>0$,
\begin{align}
 r_{\bar{S}} - r_S & \stochleq_{A \eta}
 \frac{1}{2} A \eta \cdot (r_0 - r_1)^2 \nonumber \\
&\leq A \eta \cdot \frac{1}{2} (2 r_1^2 + 2 r_0^2)  \tag{since $(x-y)^2 \leq 2 x^2 + 2 y^2$}  \nonumber \\
& = 2 A \eta \cdot \left(\ex{S'}{r^2_{\bar{S}'}}\right).\label{eq:improve} 
  \end{align}
Since $r^2_{\bar{S}}\in[0,1]$, we also apply~\eqref{eq:unbern} to $r^2_{\bar{S}}$. We then have that, for $\eta < 1$,
$$
2 A \eta \cdot \left(\ex{S'}{r^2_{\bar{S}'}}\right) \stochleq_{A \eta}^{S'} A \eta (2 + \eta c_{\eta} ) r^2_{\bar{S}}.
$$
Now for arbitrary (possibly dependent) RVs $X,Y,Z$ we have $X \stochleq_{A \eta} Y, Y \stochleq_{1/2A} Z \Rightarrow X \stochleq_{\bar\eta} Z$, where $\bar\eta  = (1/(A \eta) + 2A)^{-1} =  A\eta/ (1+2A^2 \eta)$
(by Proposition~\ref{prop:esichainrule}). Combining the above two ESIs implies that
$$
 r_{\bar{S}} - r_S \stochleq_{\bar\eta}
 A \eta (2 + \eta c_{\eta} ) r^2_{\bar{S}}.
$$
This bound holds for all $0 < \eta < 1$ and arbitrary $A> 0$. We want this bound to hold for as large $\bar\eta$ as possible. Since $\eta = \bar\eta/(A (1- 2A \bar\eta))$ is an increasing function of $\bar\eta$, the bound is valid up to all $\bar\eta<\bar\eta^*$ where $1 = \bar\eta^*/ (A(1- 2A \bar\eta^*))$. Choosing the $A$ for which $\bar\eta^*$ is maximal gives $A = 1/\sqrt{2}$, and then $\bar\eta^*=1/(1+\sqrt{2})$ and $\eta = \sqrt{2} \bar\eta/(1-\bar\eta)$. Substituting $\eta$ and $A$ in the previous ESI we now get, for $0 < \bar\eta < 1/(1+ \sqrt{2})$,
\begin{align*}
 r_{\bar{S}} - r_S & \stochleq_{\bar\eta}
 \frac{\bar\eta}{1-\bar\eta} \cdot  \left(
  2 +  \sqrt{2}\frac{\bar\eta}{1- \bar\eta} \cdot  c_{\sqrt{2} \bar\eta/(1- \bar\eta)}
 \right) \cdot r^2_{\bar{S}} =  \bar\eta \cdot C_{\sqrt{2},\bar\eta} \cdot r^2_{\bar{S}}
 \end{align*}
and the result follows.
\end{proof}

\commentout{\subsection{Proof of main Theorem~\ref{th:main}}
\begin{proof}[Proof of Theorem~\ref{th:main}]
Let $\sz = ( (\sz_{1,0}, \sz_{1,1}), \ldots, (\sz_{n,0}, \sz_{n,1}))^\top
\in\cZ^{n\times 2}$ be a given, fixed supersample. Let $S= (S_1, \ldots, S_n)$, with $S_1, S_2, \ldots, S_n$ i.i.d.\ $\Ber(1/2)$, be a selection vector and let $\bar{S}$ be its complement, that is, $\bar{S}_i := 1- S_i$ for all $i\in[n]$. 
For each fixed $f\in\cF$ and $\sz\in\cZ^{n\times 2}$, we define
$$r_i(f;\sz_{i,0}) = \ell(f;\sz_{i,0}) - \ell(f^*;\sz_{i,0})
\ \text{ and } \ r_i(f;\sz_{i,1}) = \ell(f;\sz_{i,1}) - \ell(f^*;\sz_{i,1}).
$$

Since $\ell$ has range in $[0,1]$, it holds that for all $i\in[n]$, $|r_i(f;\sz_{i,0})|, |r_i(f;\sz_{i,1})|\leq 1$. 
By Lemma~\ref{lem:esiindividual}, for all $i\in[n]$, and $\eta<1/4$, it holds that 
\begin{equation}\label{eq:esiindividual}
r_i(f;\sz_{i,\bar{S}_i})-r_i(f;\sz_{i,S_i}) \stochleq_{\eta}^{S_i} \eta C_{1/4} r^2_i(f,\sz_{i,\bar{S}_i})
\end{equation}
Now take randomness under the product distribution $\Ber(1/2)^n$ of $S$. By independence of the $S_i$ and applying Proposition~\ref{prop:esichainrule}, we can then add the $n$ ESIs~\eqref{eq:esiindividual} to give: 
\begin{align*}
 \sum_{i=1}^n r_i(f;\sz_{i,\bar{S}_i}) - \sum_{i=1}^n r_i(f;\sz_{i,S_i}) & \stochleq_{\eta}^S
 \eta C_{1/4} \sum_{i=1}^n r^2_i(f,\sz_{i,\bar{S}_i}).
 \end{align*}
 
Now consider a learning algorithm $A$ that outputs a distribution $A| \sz_S $ on $\cF$, and any `prior' distribution $\pi|\sz$ on $\cF$ that is allowed to depend on $\sz$ (which for now is considered fixed). The PAC-Bayes theorem (Proposition~\ref{prop:esiPACBayes}) gives
\begin{align}\label{eq:esisum}
\ex{f \sim A | \sz_S}{\sum_{i=1}^n r_i(f;\sz_{i,\bar{S}_i}) - \sum_{i=1}^n r_i(f;\sz_{i,S_i})} & \stochleq^{S | \sz}_{\eta}
\eta C_{1/4} \ex{f \sim A | \sz_S} { \sum_{i=1}^n r^2_i(f,\sz_{i,\bar{S}_i})} + \frac{\dkl{A | \sz_S}{\pi|  \sz}}{\eta}.
\end{align}

We note that $S$ is independent of $\sz$, so the ESI above could be equivalently written with respect to $S$ instead of $S|\sz$.

Since inequality~\eqref{eq:esisum} holds for {\em all\/} $\sz$, we weaken it to an ESI by taking its expectation over $\sZ\sim\cD^{n\times 2}$:
\begin{align*}
\ex{f \sim A |\sZ_S}{\sum_{i=1}^n r_i(f;\sZ_{i,\bar{S}_i}) - \sum_{i=1}^n r_i(f;\sZ_{i,S_i})} & \stochleq^{S,\sZ}_{\eta} 
\eta C_{1/4} \ex{f \sim A |\sZ_S} { \sum_{i=1}^n r_i^2(f,\sZ_{i,\bar{S}_i})} + \frac{\dkl{A|\sZ_S}{\pi|\sZ}}{\eta}
\end{align*}

Since the conditional distribution $\pi$ is almost exchangeable with respect to $\sz$, the above is rewritten as 
\begin{align*}
\ex{f \sim A |\sZ_S}{\sum_{i=1}^n r_i(f;\sZ_{i,\bar{S}_i}) - \sum_{i=1}^n r_i(f;\sZ_{i,S_i})} & \stochleq^{S,\sZ}_{\eta} 
\eta C_{1/4} \ex{f \sim A |\sZ_S} { \sum_{i=1}^n r_i^2(f,\sZ_{i,\bar{S}_i})} + \frac{\dkl{A|\sZ_S}{\pi|(\sZ_S,\sZ_{\bar{S}})}}{\eta}.
 \end{align*}
Now, since the $\sZ_{1,0}, \sZ_{1,1}, \ldots, \sZ_{n,0}, \sZ_{n,1}$ are i.i.d.\ and independent of the $S_i$, we  must also have: 
\begin{align*}
\ex{f \sim A| \sZ_{\mathbf{0}}}{\sum_{i=1}^n r_i(f;\sZ_{i,1}) - \sum_{i=1}^n r_i(f;\sZ_{i,0})} & \stochleq^{\sZ}_{\eta}
 \eta C_{1/4} \ex{f \sim A| \sZ_{\mathbf{0}}}{ \sum_{i=1}^n r_i^2(f,\sZ_{i,1})} + \frac{\dkl{A | \sZ_{\mathbf{0}}}{\pi|\langle\sZ\rangle}}{\eta},
 \end{align*}
 where we also replaced $\pi|\sz$ by its equivalent $\pi|\langle\sz\rangle$.
Since the $\sZ_{\mathbf{0}},\sZ_{\mathbf{1}}$ consist of i.i.d.\ random variables, we can weaken the above inequality to an in-expectation inequality (by Proposition~\ref{prop:esiimpl}) with respect to the ``ghost'' sample $\sZ_{\mathbf{1}}\sim \cD^n$:
\begin{multline}\label{eq:esisumghost}
\ex{f \sim A| \sZ_{\bf 0}}{\ex{\sZ_{\bf 1}}{\sum_{i=1}^n r_i(f;\sZ_{i,1}) - \sum_{i=1}^n r_i(f;\sZ_{i,0})}}  \stochleq^{\sZ_{\bf 0}}_{\eta}\\
 \eta C_{1/4} 
 \ex{f \sim A| \sZ_{\bf 0}}{\ex{\sZ_{\bf 1}}{ \sum_{i=1}^n r_i^2(f,\sZ_{i,1})} }+ 
 \ex{\sZ_{\bf 1}}{\frac{\dkl{A | \sZ_{\bf 0}}{\pi | \langle\sZ\rangle}}{\eta}}.
 \end{multline}

We now focus on term of the expected sum of squared excess risks in the RHS. By applying the linearized $(B,\beta^*)$-Bernstein condition of Proposition~\ref{prop:linearizedBernstein} and adding the inequalities for all $i\in[n]$, we have that for all $\eta< 1/(2BC_{1/4})$, $\beta\in[0,\beta^*]$,
\begin{equation}\label{eq:addlinearizedBern}
\eta C_{1/4} \ex{\sZ_{\bf 1}}{\sum_{i=1}^n r_i^2(f,\sZ_{i,1})} \leq \minim{\frac{1}{2}}{\beta} \cdot\ex{\sZ_{\bf 1}}{\sum_{i=1}^n r_i(f;\sZ_{i,1})} + n (1-\beta)(2BC_{1/4}\eta)^{1/(1-\beta)}.
\end{equation}

Now, observe that $\ex{\sZ_{\bf 1}}{\sum_{i=1}^n r_i(f;\sZ_{i,1})}=n\cdot R(f;\cD)$ 
and $\ex{\sZ_{\bf 1}}{\sum_{i=1}^n r_i(f;\sZ_{i,0})}=n\cdot R(f;\sZ_{\bf 0})$. 
Combining inequality~\eqref{eq:esisumghost} with~\eqref{eq:addlinearizedBern} and substituting the terms above, we have that for all $\eta<  \etamax:= \minim{\frac{1}{4}}{\frac{1}{2BC_{1/4}}}$,
\begin{multline*}
\ex{f \sim A| \sZ_{\bf 0}}{n\cdot R(f;\cD) - n\cdot R(f;\sZ_{\bf 0})}  \stochleq^{\sZ_{\bf 0}}_{\eta}\\
  \minim{\frac{1}{2}}{\beta}\cdot
 \ex{f \sim A| \sZ_{\bf 0}}{n\cdot R(f;\cD)}+ n(1-\beta)(2BC_{1/4}\eta)^{1/(1-\beta)} +
 \ex{\sZ_{\bf 1}}{\frac{\dkl{A | \sZ_{\bf 0}}{\pi | \langle\sZ\rangle}}{\eta}}.
 \end{multline*}

Dividing by $n$ and substituting for the expected true and empirical excess risk of the randomized estimator $A|\sZ_{\bf 0}$, we have the following ESI:
\begin{multline}\label{eq:finalesisum}
R(A | \sZ_{\bf 0}; \cD) - R(A | \sZ_{\bf 0}; \sZ_{\bf 0}) 
\stochleq^{\sZ_{\bf 0}}_{n \eta} 
 \minim{\frac{1}{2}}{\beta} \cdot R(A | \sZ_{\bf 0}; \cD) +  \left( \frac{\eta}{\etamax} \right)^{\frac{1}{1-\beta}} + 
 \frac{\ex{\sZ_{\bf 1}}{\dkl{A | \sZ_{\bf 0}}{\pi|  \langle\sZ\rangle}}}{n \eta}.
 \end{multline}
Using Proposition~\ref{prop:randeta}, we now extend this ESI to deal with random $\eta$. The proposition immediately gives that for every finite  grid $\cG \subset [\etamin,\etamax]$, for arbitrary probability mass function $\pi_\cG$ on $\cG$, for arbitrary functions (random variables) $\hat\eta: \sZ_{\bf 0} \rightarrow \cG$, we have: 
\begin{multline}\label{eq:finalesisumb}
R(A | \sZ_{\bf 0}; \cD) - R(A | \sZ_{\bf 0}; \sZ_{\bf 0}) 
\stochleq^{\sZ_{\bf 0}}_{n \etamin}  
 \minim{\frac{1}{2}}{\beta} \cdot R(A | \sZ_{\bf 0}; \cD) +  \left(\frac{\hat\eta}{\etamax}\right)^{\frac{1}{1-\beta}} + 
 \frac{\textsc{ub} -\log \pi_\cG(\hat\eta)}{n \hat\eta}.
 \end{multline}
where $\textsc{ub}$ can be any upper bound on $\ex{\sZ_{\bf 1}}{\dkl{A | \sZ_{\bf 0}}{\pi|  \langle\sZ\rangle}}$. In the remainder of the proof we simply set $\textsc{ub} = \exinline{\sZ_{\bf 1}}{\dkl{A | \sZ_{\bf 0}}{\pi|  \langle\sZ\rangle}}$, the possibility to take a larger upper bound is explored in Example~\ref{ex:gibbs}. 

 Now let  $\pi_\cG$ be the uniform distribution over the grid 
\begin{align} \label{eq:grid} \cG \coloneqq \left\{\etamax, \frac{1}{2}\etamax , \dots, \frac{1}{2^K}\etamax: K \coloneqq \left\lceil \log_2 \left(\sqrt{n}\right) \right\rceil + 2 \right\} \end{align}
and define $\hat\eta'$, as function of data $\sZ_{\bf 0}$  be the element of $[0,\etamax]$ minimizing the sum 
$$
\textsc{comp}(\eta) = \left(\frac{\eta}{\etamax}\right)^{\frac{1}{1-\beta}} + 
 \frac{\ex{\sZ_{\bf 1}}{\dkl{A | \sZ_{\bf 0}}{\pi|  \sZ}} -\log \pi_\cG(\eta)}{n \eta}
$$ of the last two terms in (\ref{eq:finalesisumb}), and let $\hat\eta$ be the element within $\cG$ that minimizes this sum. We can determine $\hat\eta'$ by differentiation. 
We find that, since we have  $|\cG|=K+1 \geq 3$ and hence $- \log \pi_\cG(\hat\eta) \geq 1$, it holds 
\begin{align*}
    \textsc{comp}(\hat\eta) \leq \begin{cases}
  2 \cdot \textsc{comp}(\hat\eta')=  4 \left(\scalebox{1.1}{$\frac{\ex{\sZ_{\bf 1}}{\dkl{A | \sZ_{\bf 0} }{\pi|  \sZ}}+ \llog n}{n \etamax}$}\right)^{1/(2-\beta)}
& \text{\ if\ } \hat\eta' < \etamax \\
\textsc{comp}(\hat\eta') \leq 2 \left(\scalebox{1.1}{$\frac{\ex{\sZ_{\bf 1}}{\dkl{A | \sZ_{\bf 0} }{\pi|  \sZ}}+ \llog n}{n \etamax}$}\right)
& \text{\ if\ } \hat\eta' = \etamax
    \end{cases}
\end{align*}
where $\llog n = \log (\lceil \log_2 (\sqrt{n}) \rceil + 2) =  O(\log \log n)$.
Combining this with (\ref{eq:finalesisumb}) gives
\begin{multline}\label{eq:finalesisumc}
R(A | \sZ_{\bf 0}; \cD) - R(A | \sZ_{\bf 0}; \sZ_{\bf 0}) 
\stochleq^{\sZ_{\bf 0}}_{n \etamin} \alpha
  \cdot R(A | \sZ_{\bf 0}; \cD) +  
 4  \cdot  \left(\frac{\ex{\sZ_{\bf 1}}{\dkl{A | \sZ_{\bf 0}}{\pi|  \sZ}} + \llog n }{n \etamax}\right)^{1/(2-\beta)}_{[**]}
 \end{multline}
for every $0 < \etamin \leq \frac{\etamax}{8\sqrt{n}}$, since we have:
\[\hat\eta\geq \frac{\etamax}{2^K}=\frac{\etamax}{2^{\lceil \log_2(\sqrt{n})\rceil +2}}\geq \frac{\etamax}{2^{\log_2(\sqrt{n})+3}}=\frac{\etamax}{8\sqrt{n}}.\] 

Here the notation $a^{b}_{[**]}$ indicates 
$\max \{a^b, a \}$ and here and below we set $\alpha  = \minim{\frac{1}{2}}{\beta}$.

From inequality~\eqref{eq:finalesisumc}, we can derive the following two ESIs.
First, by substituting $R(A | \sZ_{\bf 0}; \cD)$ and $R(A | \sZ_{\bf 0}; \sZ_{\bf 0})$ and $\eta := n \etamin$ and
rearranging,
we have for every $\eta \leq \sqrt{n}\etamax / 8$ that
\begin{multline}\label{eq:gebound}
L(A | \sZ_{\bf 0}; \cD) - L(A | \sZ_{\bf 0}; \sZ_{\bf 0}) 
\stochleq^{\sZ_{\bf 0}}_{\eta} \\
\alpha \cdot R(A | \sZ_{\bf 0}; \cD) +  4  \cdot  \left(\frac{\ex{\sZ_{\bf 1}}{\dkl{A | \sZ_{\bf 0}}{\pi|  \sZ}} + \llog n }{n \etamax}\right)^{1/(2-\beta)}_{[**]}
+ L(f^*; \cD) - L(f^*; \sZ_{\bf 0}) 
 \end{multline}
Second, by rearranging and multiplying by $\alpha/(1-\alpha)$,  (\ref{eq:finalesisumc}) also gives 
\begin{align}\label{eq:exriskbound}
\alpha R(A | \sZ_{\bf 0}; \cD) \stochleq^{\sZ_{\bf 0}}_{\eta \left(1-\alpha \right)/\alpha} 
2 \alpha \cdot \left( R(A |\sZ_{\bf 0}; \sZ_{\bf 0} ) 
+ 4  \cdot  \left(\frac{\ex{\sZ_{\bf 1}}{\dkl{A | \sZ_{\bf 0}}{\pi|  \sZ}} + \llog n }{n \etamax}\right)^{1/(2-\beta)}_{[**]}\right),
 \end{align}
where we used that $\alpha \leq 1/2$ hence $\alpha/\left(1-\alpha \right) \leq 1$ and 
the fact that, straightforwardly,  $U \stochleq_{\eta} 0 \Rightarrow c U  \stochleq_{\eta /c} 0$.
\commentout{Using this fact again we can multiply the first ESI (\ref{eq:gebound}) with $1- \alpha$ and the second ESI (\ref{eq:exriskbound}) by $\alpha$ to get for all $\eta \leq \sqrt{n} \etamax/8$, respectively
\begin{multline}\label{eq:geboundb}
(1-\alpha ) (L(A | \sZ_{\bf 0}; \cD) - L(A | \sZ_{\bf 0}; \sZ_{\bf 0}) ) 
\stochleq^{\sZ_{\bf 0}}_{\eta/ (1-\alpha)} \\
(1- \alpha ) \left( \alpha \cdot R(A | \sZ_{\bf 0}; \cD) +  4  \cdot  \left(\frac{\ex{\sZ_{\bf 1}}{\dkl{A | \sZ_{\bf 0}}{\pi|  \sZ}} + \llog n }{n \etamax}\right)^{1/(2-\beta)}_{[**]}
+  L(f^*; \cD) - L(f^*; \sZ_{\bf 0}) \right) 
 \end{multline}
and  
 \begin{align}\label{eq:exriskboundb}
\alpha R(A | \sZ_{\bf 0}; \cD) \stochleq^{\sZ_{\bf 0}}_{\eta (1-\alpha)/\alpha} 
\alpha \left( 2 \cdot \left( R(A |\sZ_{\bf 0}; \sZ_{\bf 0} ) 
+ 4  \cdot  \left(\frac{\ex{\sZ_{\bf 1}}{\dkl{A | \sZ_{\bf 0}}{\pi|  \sZ}} + \llog n }{n \etamax}\right)^{1/(2-\beta)}_{[**]}\right) \right),
 \end{align}}
We want to combine these two ESIs getting rid of the final term $L(f^*; \cD) - L(f^*; \sZ_{\bf 0})$ in (\ref{eq:gebound}). For this we note that Hoeffding's Lemma in ESI notation combined with the ESI chain rule Proposition~\ref{prop:esichainrule} for i.i.d. random variables  immediately gives $  n  (L(f^*; \cD) -  L(f^*; \sZ_{\bf 0})) \stochleq_{\eta'} 2  n \eta' $ for all $\eta' > 0$, hence also  $   L(f^*; \cD) -  L(f^*; \sZ_{\bf 0}) \stochleq_{n \eta'} 2  \eta' $ and hence substituting $\eta:= \eta'n$, 
\begin{equation}\label{eq:hoeffdings}
 L(f^*; \cD) - L(f^*; \sZ_{\bf 0})  \stochleq_{\eta}   \frac{2\eta}{n}.
\end{equation} Chaining ESIs (\ref{eq:gebound}),  (\ref{eq:exriskbound}) and (\ref{eq:hoeffdings}), 
using Proposition~\ref{prop:esichainrule}(a), now gives, for all $\eta \leq \sqrt{n} \etamax/8$, 
\begin{multline}\label{eq:finalesi}
 L(A | \sZ_{\bf 0}; \cD) - L(A | \sZ_{\bf 0}; \sZ_{\bf 0})
 \stochleq^{\sZ_{\bf 0}}_{\eta (1- \alpha)/(2-\alpha) }  \\
 \minim{1}{2\beta} \cdot R(A | \sZ_{\bf 0}; \sZ_{\bf 0}) + 8  \cdot  \left(\frac{\ex{\sZ_{\bf 1}}{\dkl{A | \sZ_{\bf 0}}{\pi|  \sZ}} + \llog n }{n \etamax}\right)^{1/(2-\beta)}_{[**]}
 +  \frac{2 \eta}{n} .
\end{multline}
Since, by $0 \leq \alpha \leq 1/2$,  $(1-\alpha)/(2-\alpha) \geq 1/3$,
the result follows substituting $\eta$ in place of $\eta/3$.  
\end{proof}}

\subsection{Improved in-expectation bound - `Variation' of Theorem~\ref{th:main}}
\begin{corollary}\label{cor:inexpect-app}{\rm (`{\bf Variation} of Theorem~\ref{th:main}' - Restatement of Corollary~\ref{cor:inexpect})}
Consider the setting and notation of Theorem~\ref{th:main}. For all $\beta\in[0,\beta^*]$, it holds that
\begin{multline}\label{eq:finalinexpect-app}
 \ex{\sZ_{\bf 0}}{L(A | \sZ_{\bf 0}; \cD) - L(A | \sZ_{\bf 0}; \sZ_{\bf 0})}
 \leq \\
 \minim{1}{2\beta} \cdot \ex{\sZ_{\bf 0}}{R(A | \sZ_{\bf 0}; \sZ_{\bf 0})} + 4 \cdot \left( 
 \frac{\ex{\sZ_{\bf 0}, \sZ_{\bf 1}}{\dkl{A | \sZ_{\bf 0}}{\pi|  \langle \sZ_{\bf 0}, \sZ_{\bf 1} \rangle}}}{n\etamax}\right)^{\frac{1}{2-\beta}}_{[**]}.
\end{multline}
\end{corollary}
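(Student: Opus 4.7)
The plan is to rerun the proof of Theorem~\ref{th:main} through inequality~\eqref{eq:finalesisum}, but then, instead of invoking the grid/random-$\eta$ machinery of Proposition~\ref{prop:randeta}, pass to expectations and minimize $\eta$ deterministically. Concretely, from~\eqref{eq:finalesisum} we know that for every fixed $0 < \eta \leq \etamax$ and every $\beta \in [0,\beta^*]$,
\begin{equation*}
R(A \mid \sZ_{\bf 0}; \cD) - R(A \mid \sZ_{\bf 0}; \sZ_{\bf 0})
\stochleq^{\sZ_{\bf 0}}_{n\eta}
\alpha \cdot R(A \mid \sZ_{\bf 0}; \cD) + \left(\frac{\eta}{\etamax}\right)^{\frac{1}{1-\beta}} + \frac{\ex{\sZ_{\bf 1}}{\dkl{A\mid\sZ_{\bf 0}}{\pi\mid\langle \sZ\rangle}}}{n\eta},
\end{equation*}
with $\alpha = \minim{1/2}{\beta}$. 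Since any ESI $X \stochleq_{\eta} Y$ implies $\ex{}{X} \leq \ex{}{Y}$ (Jensen applied to $e^{\eta \cdot}$), I would take $\ex{\sZ_{\bf 0}}{\cdot}$ of both sides and write $K := \ex{\sZ_{\bf 0},\sZ_{\bf 1}}{\dkl{A\mid\sZ_{\bf 0}}{\pi\mid\langle\sZ\rangle}}$ (a deterministic quantity), obtaining, for every fixed $\eta \in (0,\etamax]$,
\begin{equation*}
\ex{\sZ_{\bf 0}}{R(A\mid\sZ_{\bf 0};\cD) - R(A\mid\sZ_{\bf 0};\sZ_{\bf 0})} \leq \alpha \cdot \ex{\sZ_{\bf 0}}{R(A\mid\sZ_{\bf 0};\cD)} + \left(\frac{\eta}{\etamax}\right)^{\frac{1}{1-\beta}} + \frac{K}{n\eta}.
\end{equation*}

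Next I would optimize the complexity sum $\textsc{comp}(\eta) := (\eta/\etamax)^{1/(1-\beta)} + K/(n\eta)$ over $\eta \in (0, \etamax]$. A one-variable convex optimization yields the unconstrained optimum $\eta^*/\etamax = ((1-\beta) K/(n\etamax))^{(1-\beta)/(2-\beta)}$, and whenever $\eta^* \leq \etamax$ plugging back gives $\textsc{comp}(\eta^*) \leq 2 (K/(n\etamax))^{1/(2-\beta)}$; the constant $2$ comes from the easily checked uniform bound $(1-\beta)^{1/(2-\beta)} + (1-\beta)^{-(1-\beta)/(2-\beta)} \leq 2$ on $\beta \in [0,1]$. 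In the opposite regime $K/(n\etamax) \geq 1$, choosing $\eta = \etamax$ gives $\textsc{comp}(\etamax) = 1 + K/(n\etamax) \leq 2 K/(n\etamax)$. Both regimes are captured uniformly by $\textsc{comp}(\eta) \leq 2 (K/(n\etamax))^{1/(2-\beta)}_{[**]}$.

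To finish, I would translate the excess-risk inequality into a generalization-gap inequality. Since $f^*$ does not depend on the sample, $\ex{\sZ_{\bf 0}}{L(f^*;\sZ_{\bf 0})} = L(f^*;\cD)$, so $\ex{\sZ_{\bf 0}}{R(A\mid\sZ_{\bf 0};\sZ_{\bf 0})} = \ex{\sZ_{\bf 0}}{L(A\mid\sZ_{\bf 0};\sZ_{\bf 0})} - L(f^*;\cD)$ and analogously for the true-risk term. Substituting,
\begin{equation*}
\ex{\sZ_{\bf 0}}{L(A\mid\sZ_{\bf 0};\cD) - L(A\mid\sZ_{\bf 0};\sZ_{\bf 0})} \leq \alpha \cdot \ex{\sZ_{\bf 0}}{L(A\mid\sZ_{\bf 0};\cD) - L(f^*;\cD)} + 2 \left(\frac{K}{n\etamax}\right)^{1/(2-\beta)}_{[**]}.
\end{equation*}
Subtracting $\alpha \cdot \ex{\sZ_{\bf 0}}{L(A\mid\sZ_{\bf 0};\sZ_{\bf 0}) - L(f^*;\cD)}$ from both sides and using that $\alpha \leq 1/2$ gives $\alpha/(1-\alpha) \leq \minim{1}{2\beta}$ together with $1/(1-\alpha) \leq 2$, producing the claimed constant $4$ on the complexity term.

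No real obstacle is expected: all ingredients are either already established in the proof of Theorem~\ref{th:main} or follow from a one-variable convex optimization. The saving of the $\llog n$ overhead and a factor of $2$ over Theorem~\ref{th:main} comes entirely from the fact that $K$ is deterministic, so $\eta$ can be optimized pointwise and we avoid paying the $-\log \pi_\cG(\hat\eta)$ price that appeared when the optimal $\hat\eta$ depended on the training sample. The only bookkeeping subtlety is the separate treatment of the boundary case $\eta^* > \etamax$ and its absorption into the $a^b_{[**]}$ notation.
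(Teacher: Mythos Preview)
Your proposal is correct and follows essentially the same route as the paper's proof sketch: start from~\eqref{eq:finalesisum}, pass to expectations (so no need for the random-$\eta$ grid and its $\llog n$ penalty), optimize the deterministic $\eta$, and convert excess risk to the generalization gap using $\ex{\sZ_{\bf 0}}{L(f^*;\sZ_{\bf 0})}=L(f^*;\cD)$. The only cosmetic difference is the order of operations: the paper first derives the in-expectation analogues of~\eqref{eq:gebound} and~\eqref{eq:exriskbound} and combines them to obtain the factor $2\cdot\textsc{comp}(\eta)$, then minimizes over $\eta$; you minimize $\textsc{comp}(\eta)$ first and then do the $\alpha/(1-\alpha)$ rearrangement, arriving at the same constant $4$ and the same $\minim{1}{2\beta}$ coefficient.
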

The proof follows by a few modifications of the proof of the main Theorem~\ref{th:main}.
\begin{proof}[Proof Sketch]
The proof would be the same up to and including the derivation of inequality~\eqref{eq:finalesisum}, where $\eta<\etamax$ is not random. At this step, we can weaken this ESI to an in-expectation inequality, subsequently derive and add the equivalent of inequalities~\eqref{eq:gebound} and~\eqref{eq:exriskbound}, to yield
\begin{multline*}
 \ex{\sZ_{\bf 0}}{L(A | \sZ_{\bf 0}; \cD) - L(A | \sZ_{\bf 0}; \sZ_{\bf 0})}
 \leq \\
 \minim{1}{2\beta} \cdot \ex{\sZ_{\bf 0}}{R(A | \sZ_{\bf 0}; \sZ_{\bf 0})} + 2  \cdot  \left(\left( \frac{\eta}{\etamax} \right)^{\frac{1}{1-\beta}} + 
 \frac{\ex{\sZ_{\bf 0}, \sZ_{\bf 1}}{\dkl{A | \sZ_{\bf 0}}{\pi|  \langle\sZ\rangle}}}{n \eta}\right).
\end{multline*}
By differentiation, we choose $\eta=\minim{\etamax}{ (1-\beta)^{\frac{1-\beta}{2-\beta}}\etamax^{\frac{1}{2-\beta}}\left(\frac{\ex{\sZ_{\bf 0}, \sZ_{\bf 1}}{\dkl{A | \sZ_{\bf 0}}{\pi|  \sZ}}}{n\etamax}\right)^{\frac{1-\beta}{2-\beta}}}$ to minimize the sum of the last two terms of the RHS of the inequality, which gives the improved in-expectation bound:
\begin{equation*}
 \ex{\sZ_{\bf 0}}{L(A | \sZ_{\bf 0}; \cD) - L(A | \sZ_{\bf 0}; \sZ_{\bf 0})}
 \leq 
 \minim{1}{2\beta} \cdot \ex{\sZ_{\bf 0}}{R(A | \sZ_{\bf 0}; \sZ_{\bf 0})} + 4 \cdot \left( 
 \frac{\ex{\sZ_{\bf 0}, \sZ_{\bf 1}}{\dkl{A | \sZ_{\bf 0}}{\pi|  \sZ}}}{n\etamax}\right)^{\frac{1}{2-\beta}}_{[**]},
\end{equation*}
where $a^b_{[**]}=\max\{a^b,a\}$.
\end{proof}

\commentout{\subsection{Proof of CMI bound (Corollary~\ref{cor:cmi})}
\begin{proof}[Proof of Corollary~\ref{cor:cmi}]
Let $\sZ=(\sZ_{\bf 0}, \sZ_{\bf 1})$. We focus on the $\kl$ divergence in the bound~\eqref{eq:finalinexpect}:
$$
\ex{\sZ_{\bf 0}, \sZ_{\bf 1}}{\dkl{A | \sZ_{\bf 0}}{\pi|  \langle \sZ_{\bf 0}, \sZ_{\bf 1} \rangle}} 
=  \ex{S, \sZ_{\bf 0}, \sZ_{\bf 1}}{\dkl{A | \sZ_{\bf 0}}{\pi|  \langle \sZ_{\bf 0}, \sZ_{\bf 1} \rangle}}
= \ex{S, \sZ}{{\dkl{A | \sZ_{S}}{\pi|  \langle \sZ_{S}, \sZ_{\bar{S}} \rangle}}}
$$
The first equality holds since $S$ is independent of $\sZ_{\bf 0}, \sZ_{\bf 1}$. The second equality holds because the distributions of $\sZ_{S}$, $\sZ_{\bar{S}}, \sZ_{\bf 0}, \sZ_{\bf 1}$ are all identical to $\cD^n$ and $\pi$ is almost exchangeable. 
We choose $\pi=\ex{S'}{A | \sZ_{S'}}$ for $S' \sim\Ber(1/2)^n$. Notice that $\pi$ is indeed almost exchangeable. We now have
$$
\ex{S, \sZ}{{\dkl{A | \sZ_{S}}{\ex{S'}{A|\sZ_{S'}}}}}
=\ex{\sZ}{\ex{S}{\dkl{A | \sZ_{S}}{\ex{S'}{A|\sZ_{S'}}}}}
=\ex{\sZ}{I^{\sZ}(A|\sZ_S;S)}
=\CMI{A}{\cD}.
$$
Combining the two equations and substituting the term in inequality~\eqref{eq:finalinexpect} completes the proof.
\end{proof}}

\subsection{Proof of Theorem~\ref{th:vc-kl} (VC classes)}
First, we formally define the global consistency property. Here we abuse notation by interchanging between $(\cX \times \cY)^n$ and $\cX^n \times \cY^n$. That is, we refer to $(x,y)\in (\cX \times \cY)^n$ when we mean $x \in \cX^n$ and $y \in \cY^n$. We also use (and abuse) the notation $(\cX\times \cY)^* := \bigcup_{n=0}^\infty (\cX \times \cY)^n$. Thus the notation $(x,y)\in (\cX \times \cY)^*$ means, for some $n$, we have $x \in \cX^n$ and $y \in \cY^n$.

\begin{definition}[Global Consistency Property] \label{def:gc}
Let $\cF$ be a class of functions $f : \cX \to \cY$. A deterministic algorithm $A : (\cX \times \cY)^* \to \cF$ is said to have the global consistency property if the following holds. Let $(x,y) \in (\cX \times \cY)^*$ and let $f=A|(x,y)$. We require that, for any $x' \in \cX^*$ such that $x'$ contains all the elements of $x$ (i.e., $\forall i ~ \exists j ~ x_i = x'_j$), we have $A|(x',y')=f$ whenever $y'_i = f(x'_i)$ for all $i$.
\end{definition}
Informally, this property says the following. Suppose the algorithm is run on some labelled dataset $(x,y)$ to obtain an output hypothesis $f=A|(x,y)$. If the dataset is relabelled to be perfectly consistent with $f$, then the algorithm should still output $f$. This should also hold if further examples are added to the dataset (where the additional examples are also consistent with $f$).

The proof of the theorem is split in the next two lemmata.

\begin{lemma} \label{lem:gc+vc}
Let $A : (\cX\times \{0,1\})^n \to \cF$ be a deterministic algorithm, where $\cF$ is a class of functions $f : \cX \to \{0,1\}$ with VC dimension $d$.  Suppose $A$ (appropriately extended to inputs of arbitrary size) has the global consistency property. Then for any $\sz_{\bf 0}, \sz_{\bf 1}\in\cZ^n$,
$$\dkl{A|\sz_{\bf 0}}{\pi|\langle\sz_{\bf 0}, \sz_{\bf 1}\rangle}\leq d\log(2n).$$
\end{lemma}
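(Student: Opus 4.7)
The plan is to construct the almost exchangeable prior $\pi$ as a uniform distribution supported on a small, data-dependent set, and then bound the KL divergence by the log of the support size. First I would define, given the unordered supersample $\langle\sz\rangle = \langle\sz_{\bf 0},\sz_{\bf 1}\rangle$, the set
\[
F_{\langle\sz\rangle} \;:=\; \{\,A|\sz_s \,:\, s\in\{0,1\}^n\,\} \;\subseteq\; \cF
\]
of hypotheses that $A$ could output on some half-split of $\sz$, and let $\pi|\langle\sz\rangle$ be uniform on $F_{\langle\sz\rangle}$. This is almost exchangeable because swapping within any pair $\{\sz_{i,0},\sz_{i,1}\}$ only relabels which $s$ indexes which subsample and leaves the set $F_{\langle\sz\rangle}$ unchanged.

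The central step is to show $|F_{\langle\sz\rangle}|\le(2n)^d$, by combining the global consistency property with the Sauer--Shelah lemma. Fix any canonical ordering $x_{\rm all}\in\cX^{2n}$ of the inputs appearing in $\sz$ (determined by $\langle\sz\rangle$), and for each $s\in\{0,1\}^n$ write $f_s = A|\sz_s$. Since $x_{\rm all}$ contains all inputs of $\sz_s$, the global consistency property (Definition~\ref{def:gc}) applied with $x = \sz_s$ and $x'=x_{\rm all}$ yields $A|(x_{\rm all},\, f_s(x_{\rm all})) = f_s$. Thus each $f_s\in F_{\langle\sz\rangle}$ is uniquely recovered from its labeling $f_s|_{x_{\rm all}}\in\{0,1\}^{2n}$, so the map $f_s\mapsto f_s|_{x_{\rm all}}$ injects $F_{\langle\sz\rangle}$ into $\{f|_{x_{\rm all}} : f\in\cF\}$. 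By Sauer--Shelah the latter has cardinality at most the growth function $\Pi_\cF(2n)\le(2n)^d$ (for $2n\ge d$; otherwise the bound is trivial since $|\cF|$ restricted to any $2n$-element set is at most $2^{2n}$).

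Finally, because $A$ is deterministic, $A|\sz_{\bf 0}$ is a point mass on the hypothesis $f_{\bf 0}\in F_{\langle\sz\rangle}$ (take $s=\mathbf{0}$), so the KL divergence against the uniform $\pi|\langle\sz\rangle$ equals $\log|F_{\langle\sz\rangle}|\le d\log(2n)$, as claimed. The main obstacle is conceptual rather than technical: one has to recognize that global consistency lets us replace the $2^n$-indexed family $\{f_s\}$ by a set identifiable with a subset of labelings of a single fixed $2n$-point list, which is precisely the setting in which Sauer--Shelah applies to the ambient class $\cF$. The rest of the argument is then bookkeeping about what ``almost exchangeable'' means and how KL against a uniform reduces to counting the support.
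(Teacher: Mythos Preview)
Your proposal is correct and follows essentially the same route as the paper: define the prior as uniform over $\{A|\sz_s : s\in\{0,1\}^n\}$, use global consistency to inject this set into the labelings $\{f|_{x_{\rm all}} : f\in\cF\}$ on the $2n$ unlabeled points, and then apply Sauer--Shelah to bound the cardinality by $(2n)^d$. The only cosmetic difference is that the paper phrases the injection as a chain of set inclusions rather than as an explicit injectivity argument.
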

\begin{lemma} \label{lem:gc+erm}
Let $\cF$ be a class of functions $f:\cX \to \{0,1\}$. Then there exists a deterministic algorithm $A : (\cX \times \{0,1\})^* \to \cF$ that has the global consistency property and is an empirical risk minimizer -- that is, for all $(x,y) \in (\cX \times \{0,1\})^*$, if $f^*=A|(x,y)$, then $$\sum_i \mathbb{I}[f^*(x_i) \ne y_i] = \min_{f \in \cF} \sum_i \mathbb{I}[f(x_i) \ne y_i].$$
\end{lemma}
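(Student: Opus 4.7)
\textbf{Proof plan for Lemma~\ref{lem:gc+erm}.}
The plan is to construct $A$ as ``ERM with a fixed, global tie-breaking rule''. First I would fix, once and for all, a well-ordering $\prec$ on $\cF$ (invoking the well-ordering theorem; if $\cF$ is countable, one can just use the natural indexing). Given $(x,y)\in(\cX\times\{0,1\})^*$, let
\[
\textsc{erm}(x,y) \;:=\; \Bigl\{\, f\in\cF \;:\; \sum_i \mathbb{I}[f(x_i)\ne y_i] \,=\, \min_{g\in\cF}\sum_i \mathbb{I}[g(x_i)\ne y_i] \,\Bigr\}.
\]
I would define $A|(x,y)$ to be the $\prec$-least element of $\textsc{erm}(x,y)$. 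This is well-defined whenever $\textsc{erm}(x,y)$ is nonempty; for the VC setting in which Lemma~\ref{lem:gc+erm} is applied (via Theorem~\ref{th:vc-kl}), nonemptyness is immediate since Sauer-Shelah bounds the number of distinct behaviors of $\cF$ on any finite sample, so only finitely many empirical-error values are achievable and the minimum is attained. By construction $A$ is deterministic and is an ERM.

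The content is verifying the global consistency property. Suppose $f^* = A|(x,y)$, let $x'\in\cX^*$ satisfy $\forall i\,\exists j\,x_i=x'_j$, and set $y'_j:=f^*(x'_j)$ for every $j$. I want to show $A|(x',y')=f^*$. Two observations do the job. First, $f^*$ has zero empirical error on $(x',y')$ (by the definition of $y'$), so $\min_{g\in\cF}\sum_j \mathbb{I}[g(x'_j)\ne y'_j]=0$ and $f^*\in\textsc{erm}(x',y')$. Second, any other $g\in\textsc{erm}(x',y')$ must likewise have zero error on $(x',y')$, hence $g(x'_j)=y'_j=f^*(x'_j)$ for all $j$. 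Since every $x_i$ equals some $x'_j$, this forces $g(x_i)=f^*(x_i)$ for all $i$, so $g$ makes exactly the same mistakes on $(x,y)$ as $f^*$ does. In particular $g\in\textsc{erm}(x,y)$, and by the $\prec$-minimality choice of $f^*$ on $(x,y)$ we must have $f^*\preceq g$. Therefore $f^*$ is the $\prec$-least element of $\textsc{erm}(x',y')$, which is exactly $A|(x',y')=f^*$.

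\textbf{Main obstacle.} The only non-obvious point is the second observation above: the argument crucially uses that the minimum error on the \emph{relabeled} sample is $0$, not merely that it equals the minimum error on $(x,y)$. This is what lets me conclude that any competing ERM on $(x',y')$ is pinned to agree with $f^*$ on all of $x$, and thus has the same error as $f^*$ on the \emph{original} sample $(x,y)$, placing it in $\textsc{erm}(x,y)$ where the tie-breaking rule has already selected $f^*$. Without the relabeling, a different ERM on the enlarged sample could in principle disagree with $f^*$ on $x$, and the global tie-breaking rule alone would not suffice. Everything else is just bookkeeping with the definitions.
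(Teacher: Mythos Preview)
Your proposal is correct and follows essentially the same approach as the paper: define $A$ as ERM with ties broken by a fixed well-ordering of $\cF$, then verify global consistency by noting that any competitor in $\textsc{erm}(x',y')$ must agree with $f^*$ on all of $x'$ (hence on $x$) and therefore already lies in $\textsc{erm}(x,y)$, where $f^*$ was selected as $\prec$-minimal. One minor point: you need not invoke Sauer--Shelah for nonemptiness of $\textsc{erm}(x,y)$, since the $0/1$ empirical error on $n$ points takes values in the finite set $\{0,1,\dots,n\}$, so the minimum is attained for arbitrary $\cF$ (the paper makes exactly this observation).
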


To prove Lemma \ref{lem:gc+vc} we invoke the Sauer-Shelah lemma:\footnote{Vapnik and Chervonenkis proved a slightly weaker bound, namely $\left|\left\{(f(x_1),f(x_2),\cdots,f(x_m)) : f \in \cF\right\}\right| \le m^{d+1}+1$ for $m>d$ \cite[Thm.~1]{VapnikC71bib}.}
\begin{lemma}[{\cite{Sauer72,Shelah72}}]\label{lem:sauer}
Let $\cF$ be a class of functions $f :\cF\to \{0,1\}$ with VC dimension $d$. For any $x=\{x_1, \cdots, x_m \} \subset \mathcal{X}$, the number of possible labellings of $x$ induced by $\cF$ is
\[\left|\left\{(f(x_1),f(x_2),\cdots,f(x_m)) : f \in \cF\right\}\right| \leq \sum_{k=0}^d {m \choose k} \leq \left\{\begin{array}{cl}  (em/d)^d & ~~\text{ if } m \geq d \\  e^2 \cdot (m/2)^d  & ~~\text{ if } m \geq 2 \\ e \cdot m^d & ~~\text{ if } m \geq 1 \end{array} \right..\]
\end{lemma}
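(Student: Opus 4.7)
The plan is to prove the central combinatorial inequality $\left|\{(f(x_1),\ldots,f(x_m)) : f \in \cF\}\right| \leq \sum_{k=0}^d \binom{m}{k}$ via Pajor's shattered-sets lemma, and then deduce the three numerical estimates by elementary bounds on the partial binomial sum.

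First I would reduce the statement to a purely combinatorial inequality about binary set systems. Set $\cG := \{(f(x_1),\ldots,f(x_m)) : f \in \cF\} \subseteq \{0,1\}^m$. For $T \subseteq [m] = \{1,\ldots,m\}$, call $T$ \emph{shattered by} $\cG$ if $\{g|_T : g \in \cG\} = \{0,1\}^T$, and let $\mathrm{Sh}(\cG)$ denote the collection of all such $T$. Because $\cF$ has VC dimension at most $d$, every element of $\mathrm{Sh}(\cG)$ has cardinality at most $d$, so $|\mathrm{Sh}(\cG)| \leq \sum_{k=0}^d \binom{m}{k}$.

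Next I would prove Pajor's lemma $|\cG| \leq |\mathrm{Sh}(\cG)|$ for every $\cG \subseteq \{0,1\}^m$, by induction on $m$. The base case $m = 0$ is trivial. For the inductive step, partition $\cG = \cG_0 \sqcup \cG_1$ according to the value of the last coordinate, and let $\rho : \{0,1\}^m \to \{0,1\}^{m-1}$ denote the projection onto the first $m-1$ coordinates. Since $\rho$ is injective on each $\cG_i$, one has
\[
|\cG| \;=\; |\cG_0| + |\cG_1| \;=\; |\rho(\cG_0) \cup \rho(\cG_1)| + |\rho(\cG_0) \cap \rho(\cG_1)|.
\]
Applying the inductive hypothesis to each of the two families on $\{0,1\}^{m-1}$ and observing that (i) every $T \subseteq [m-1]$ shattered by $\rho(\cG_0) \cup \rho(\cG_1)$ is shattered by $\cG$ as a subset of $[m]$ not containing $m$, and (ii) every $T \subseteq [m-1]$ shattered by $\rho(\cG_0) \cap \rho(\cG_1)$ has the property that $T \cup \{m\}$ is shattered by $\cG$, yields the inductive step. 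Combining with the previous paragraph gives $|\cG| \leq |\mathrm{Sh}(\cG)| \leq \sum_{k=0}^d \binom{m}{k}$, which is the leftmost inequality in the lemma.

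For the three numerical consequences I would argue as follows. When $m \geq d \geq 1$, pad the sum by the factor $(m/d)^{d-k} \geq 1$ and apply the binomial theorem:
\[
\sum_{k=0}^d \binom{m}{k} \;\leq\; (m/d)^d \sum_{k=0}^m \binom{m}{k}(d/m)^k \;=\; (m/d)^d (1 + d/m)^m \;\leq\; (em/d)^d.
\]
For $m \geq 1$, the estimate $\binom{m}{k} \leq m^k$ combined with summing a geometric series gives $\sum_{k=0}^d \binom{m}{k} \leq e \cdot m^d$, after absorbing the factor $m/(m-1) \leq 2 \leq e$ for $m \geq 2$ and handling $m = 1$ directly. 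For the middle bound $e^2 (m/2)^d$ with $m \geq 2$, I would split into two cases: when $d \leq m$, combine the first estimate with the elementary inequality $(2e/d)^d \leq e^2$, verified by inspection for $d \leq 5$ and by monotonicity for $d \geq 2e$; when $d > m$, the sum equals $2^m$ and one checks $2^m \leq e^2 (m/2)^d$ using $(m/2)^d \geq (m/2)^m$ (since $m/2 \geq 1$) together with $(4/m)^m \leq e^2$ for all $m \geq 2$.

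The main obstacle is a clean execution of Pajor's inductive argument, in particular the bookkeeping that distinguishes shattered sets containing $m$ from those avoiding it; once that is in place, everything else is routine casework. A secondary subtlety is that the three numerical estimates require slightly different argument structures near the boundary cases $d = 0$ and $d \geq m$, so the write-up must be careful to cover all regimes of $(m, d)$.
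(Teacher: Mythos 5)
Your proposal is correct, but it is organized quite differently from the paper's treatment. The paper does not reprove the combinatorial core at all---it invokes the Sauer--Shelah bound by citation---and obtains all three numerical forms from one parametrized chain, namely $\sum_{k=0}^d \binom{m}{k} \le \sum_{k=0}^d \binom{m}{k}x^{d-k} \le \sum_{k=0}^m \binom{m}{k}x^{d-k} = (1+x^{-1})^m x^d \le e^{m/x}x^d$ for $x\ge 1$, instantiated at $x=m/d$, $x=m/2$ and $x=m$ to yield $(em/d)^d$, $e^2(m/2)^d$ and $e\cdot m^d$; note this chain also covers $d>m$, since for $k>m$ the binomial coefficients vanish and the middle inequality becomes an equality, so no separate $2^m$ case is needed. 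You instead (i) prove the cardinality bound from scratch via Pajor's lemma $|\cG|\le|\mathrm{Sh}(\cG)|$ by induction on $m$ --- the bookkeeping you describe (shattered sets avoiding the last coordinate coming from the union, shattered sets containing it coming from the intersection, the two families being disjoint) is the standard argument and is correct, and it is strictly more than the paper supplies --- and (ii) handle the three tail estimates by three distinct elementary arguments: your padding-plus-binomial-theorem step is exactly the paper's chain at $x=m/d$; your geometric-series bound $\sum_{k=0}^d \binom{m}{k}\le \frac{m}{m-1}\,m^d\le e\, m^d$ (with $m=1$ checked directly) and your case split for $e^2(m/2)^d$ via $(2e/d)^d\le e^2$ and, when $d>m$, $(4/m)^m\le e^2$ are correct but require exactly the boundary checks you flag (including $d=0$ and $d\ge m$). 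In short, your route buys a self-contained proof of the combinatorial inequality, while the paper's single $x$-parametrization buys a shorter, casework-free derivation of the three numerical consequences.
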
 
Here we define ${m \choose k}=0$ if $k>m$. Thus $\sum_{k=0}^d {m \choose k}=2^m$ if $m \leq d$. Note that we give three different forms of the final bound for convenience, all of which are derived from the bound 
$$\forall m \geq d ~~~ \forall x \geq 1 ~~~~~~ \sum_{k=0}^d {m \choose k} \leq \sum_{k=0}^d {m \choose k} x^{d-k} \leq \sum_{k=0}^m {m \choose k} x^{d-k} = \left(1+x^{-1}\right)^m \cdot x^{d} \leq e^{m/x} \cdot x^d.$$

\begin{proof}[Proof of Lemma~\ref{lem:gc+vc}]
Let $\sz=(\sz_{\bf 0},\sz_{\bf 1})$ be the fixed supersample and let $\sx=\{x: \exists y\in\{0,1\}, i\in[n], j\in\{0,1\} : (x,y)=\sz_{i,j}\}$ be the set of all unlabelled examples in $\sz$. We choose as an almost exchangeable prior distribution $\pi$ the following: $\pi(f)=\mathbb{I}\{\exists s\in\{0,1\}^n : A|\sz_s=f\}/|H(\sz)|$, where $H(\sz)=\{A|\sz_s \text{ for some } s\in\{0,1\}^n\}$. That is, $\pi$ is uniform over all the possible outputs of algorithm $A$ given input $\sz_s$ for some $s\in\{0,1\}^n$. Then the KL term is written as
$$ \dkl{A|\sz_{\bf 0}}{\pi|\langle\sz_{\bf 0}, \sz_{\bf 1}\rangle} = \log \frac{1}{\pi(A|\sz_{\bf 0})} = \log \frac{|H(\sz)|}{\mathbb{I}\{\exists s\in\{0,1\}^n : A|\sz_s=A|\sz_{\bf 0}\}} =\log |H(\sz)|.$$
It suffices to bound $|H(\sz)|$. 
By the global consistency property, if $A|\sz_s=f$ for some $s\in\{0,1\}^n$, then it must be that $A|(\sx,f(\sx))=f$. Therefore
$$H(\sz) \subseteq \{A|(\sx,f(\sx)) : f\in\cF\} \subseteq \{f(\sx): f\in\cF\}$$
By Lemma~\ref{lem:sauer}, the set of all the possible labellings of $\sx\in\cX^{2n}$ by $\cF$ has size at most $|\{f(\sx): f\in\cF\}|\leq (2n)^d$. Thus, $|H(\sz)|\leq (2n)^d$ and the bound of the lemma follows.
\end{proof}

Lemma~\ref{lem:gc+erm} is exaclty the same as the corresponding lemma in the proof of the CMI result of~\cite{SteinkeZ20}. We present their proof here to give a clear picture of the type of algorithm that could satisfy the lemma for our examples.

For the proof, we will invoke the well-ordering theorem~\cite{Zermelo04}:
\begin{lemma}[{\cite{Zermelo04}}]\label{lem:wellorder}
Let $\cF$ be a set. Then there exists a binary relation $\preceq$ with the following properties.
\begin{itemize}
    \item Transitivity: $~~~\forall f,g,h \in \cF ~~~ f \preceq g \wedge g \preceq h \implies f \preceq h$
    \item Totality: $~~~\forall f,g \in \cF ~~~ f \preceq g \vee g \preceq f$
    \item Antisymmetry: $~~~\forall f,g \in \cF ~~~ f \preceq g \wedge g \preceq f \iff f=g$
    \item Well-order: $~~~\forall H \subset \cF ~~ \left( ~ H \ne \emptyset ~~ \implies ~~ \exists h \in H ~~ \forall f \in H ~~ h \preceq f ~ \right)$
\end{itemize}
\end{lemma}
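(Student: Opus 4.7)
The plan is to prove this as the classical Zermelo well-ordering theorem, via transfinite recursion using the Axiom of Choice. First, by AC I would fix a choice function $c : \mathcal{P}(\cF) \setminus \{\emptyset\} \to \cF$ satisfying $c(S) \in S$ for every non-empty $S \subseteq \cF$. Then, by transfinite recursion on the ordinals, I would define $f_\alpha = c(\cF \setminus \{f_\beta : \beta < \alpha\})$ whenever the argument is non-empty, and leave $f_\alpha$ undefined otherwise. Intuitively, this ``picks'' elements of $\cF$ one at a time in order of increasing ordinal index, never repeating.

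The next step is to argue that the recursion must terminate at some ordinal $\alpha^*$, in the sense that $\cF \setminus \{f_\beta : \beta < \alpha^*\}$ is empty. If it did not terminate, the map $\alpha \mapsto f_\alpha$ would be an injection from a proper class of ordinals (equivalently, from any prescribed ordinal strictly larger than $|\cF|$, whose existence is guaranteed by Hartogs' theorem) into the set $\cF$, a contradiction. By construction, $\alpha \mapsto f_\alpha$ is injective on $\alpha^*$ (each $f_\alpha$ lies outside the set of previously chosen values), and surjective onto $\cF$ by the definition of $\alpha^*$, so it is a bijection from the ordinal $\alpha^*$ to $\cF$.

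Finally, I would define the desired relation $\preceq$ by pulling back the canonical order on $\alpha^*$: $f_\alpha \preceq f_\gamma$ iff $\alpha \leq \gamma$ as ordinals. Transitivity, totality, and antisymmetry of $\preceq$ are immediate from the corresponding properties of the order on ordinals, together with the fact that $\alpha \mapsto f_\alpha$ is a bijection. For the well-ordering property, given any non-empty $H \subseteq \cF$, the set $T_H = \{\alpha < \alpha^* : f_\alpha \in H\}$ is a non-empty set of ordinals, which by the well-ordering of ordinals has a least element $\alpha_H$; the element $f_{\alpha_H}$ is then the required $\preceq$-minimum of $H$.

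The main obstacle is foundational rather than technical: the argument rests on AC (for the choice function) and on Hartogs' theorem (to bound the recursion length). An alternative route, avoiding explicit transfinite recursion, is to apply Zorn's lemma to the poset of pairs $(S, \preceq_S)$ with $S \subseteq \cF$ and $\preceq_S$ a well-order on $S$, ordered by initial-segment extension, verify that every chain has an upper bound (the union with the compatible union order), and show that any maximal element must satisfy $S = \cF$ (otherwise the choice function $c$ would let us extend it by appending $c(\cF \setminus S)$ as a new maximum). Either approach gives the lemma; the transfinite-recursion route has the minor advantage of exhibiting $\preceq$ explicitly as a transport of an ordinal order.
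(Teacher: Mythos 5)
Your proof is correct: it is the standard Zermelo argument (choice function plus transfinite recursion, with Hartogs' theorem bounding the recursion length, or equivalently the Zorn's-lemma variant), and the details you give — injectivity of $\alpha \mapsto f_\alpha$, termination, and pulling back the ordinal order to get transitivity, totality, antisymmetry and least elements of nonempty subsets — all go through. The paper does not prove this lemma at all; it simply cites Zermelo (1904) as a known result (remarking that on a finite computer one could just take the lexicographic order on binary representations), so your argument is exactly the classical proof underlying that citation rather than a departure from anything in the paper.
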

Let $\preceq$ be a well-ordering of $\cF$.
On a finite computer, we could simply let $\preceq$ be the lexicographical ordering on the binary representations of elements of $\cF$.

\begin{proof}[Proof of Lemma \ref{lem:gc+erm}]
An empirical risk minimizer $A : (\cX \times \{0,1\})^n \to \cF$ must have the property $$\forall (x,y) \in (\cX \times \{0,1\})^n ~~~~~ A|(x,y) \in \argmin_{f \in \cF} \ell(f,(x,y)) := \left\{f \in \cF : \ell(f,(x,y)) = \inf_{f' \in \cF} \ell(f',(x,y))\right\}.$$
However, we must also ensure that $A$ satisfies the global consistency property. The only difficulty that arises here is when the argmin contains multiple hypotheses; we must break ties in a consistent manner. (Note that the argmin is never empty, as the 0-1 loss $\ell(f',(x,y)) = \frac{1}{n} \sum_{i=1}^n \mathbb{I}[f'(x_i) \ne y_i]$ always takes values in the finite set $\{0,1/n,2/n,3/n,\cdots,1\}$.)

Whenever there are multiple $f \in \cF$ that minimize $\ell(f,(x,y))$, our algorithm $A|(x,y)$ chooses the least element according to the well-ordering. In symbols, $A$ satisfies the following two properties, which also uniquely define it. $$\forall (x,y) \in (\cX \times \{0,1\} )^*  ~~ \forall h \in \cF ~~~ \left( \begin{array}{c} \ell(A|(x,y),(x,y)) \le \ell(f,(x,y)) \\ \wedge \\ \ell(A|(x,y),(x,y)) = \ell(f,(x,y)) \implies A|(x,y) \preceq f \end{array} \right).$$

By construction, our algorithm $A$ is an empirical risk minimizer. It only remains to prove that it satisfies the global consistency property. To this end, let $(x,y) \in (\cF\times \{0,1\})^n$ and let $x' \in \cX^m$ where $x'$ contains all the elements of $x$ (i.e., $\forall i \in [n] ~ \exists j \in [m] ~ x_i=x'_j$). Let $f=A|(x,y)$ and $f'=A|(x',f(x'))$. We must prove that $f'=f$.

By construction, the empirical loss of $f$ on the dataset $(x',f(x'))$ is $0$. Since $f'$ is the output of an empirical risk minimizer on the dataset $(x',f(x'))$, it too has empirical loss $0$ on this dataset. In particular, $f(x'_j)=f'(x'_j)$ for all $j \in [m]$. Moreover, since $A$ breaks ties using the ordering, we have $f' \preceq f$. However, since $f$ and $f'$ agree on $x'$, they also agree on $x$ and, hence, have the same loss on the dataset $(x,y)$ -- that is, $\ell(f',(x,y))=\ell(f,(x,y)) = \inf_{hf' \in \cF} \ell(f'',(x,y))$. This means that $A|(x,y)$ outputting $f$ implies that $f \preceq f'$. Thus we conclude that $f=f'$, as required.
\end{proof}

\commentout{\subsection{Proof of Theorem~\ref{th:compression-kl} (Compression Scheme Priors)}
\begin{proof}[Proof of Theorem~\ref{th:compression-kl}]
Let $W=W_2|(A_1|z)$ be a compression scheme prior and let $\langle \sz\rangle =\langle \sz_{\bf 0}, \sz_{\bf 1}\rangle$. We choose the conditional prior distribution as 
$$\pi(f|\langle\sz\rangle)=\frac{\sum_{z^k\in K(\sz)} \cP_{W_2|z^k}(f)}{\binom{2n}{k}},$$
where we denote by $K(z)$ the set of all subsets of $z$ of size $k$. Observe that $\pi$ is indeed an almost exchangeable prior. It holds that
\begin{align*}
\dkl{A|\sz_{\bf 0}}{\pi|\langle\sz_{\bf 0}, \sz_{\bf 1}\rangle}
& =\ex{f\sim A|\sz_{\bf 0}}{\log\frac{\cP_{A|\sz_{\bf 0}}(f)}{\pi(f|\langle\sz\rangle)}} \\
& =\ex{f\sim A|\sz_{\bf 0}}{\log\frac{\cP_{A|\sz_{\bf 0}}(f)\cdot \binom{2n}{k}}{\sum_{z^k\in K(\sz)} \cP_{W_2|z^k}(f)}} \\
& \leq \ex{f\sim A|\sz_{\bf 0}}{\log\frac{\cP_{A|\sz_{\bf 0}}(f)\cdot \binom{2n}{k}}{\cP_{W_2|(A_1|\sz_{\bf 0})}(f)}} \\
& = \ex{f\sim A|\sz_{\bf 0}}{\log\frac{\cP_{A|\sz_{\bf 0}}(f)}{\cP_{W|\sz_{\bf 0}}(f)}} +\log \binom{2n}{k} \\
& \leq \dkl{A|\sz_{\bf 0}}{W|\sz_{\bf 0}} + k\log(2n),
\end{align*}
where the first inequality holds since $A_1|\sz_{\bf 0}\in K(\sz)$ which implies that $\sum_{z^k\in K(\sz)} \cP_{W_2|z^k}(f)\geq \cP_{W_2|(A_1|\sz_{\bf 0})}(f)$.
The last inequality follows by the common bound $\binom{2n}{k}\leq (2n)^k$.
\end{proof}}

\subsection{Proof Sketch for Gibbs example}
\begin{proof}[Proof Sketch]
A known property of the $\hat\eta$-Gibbs algorithm (see for example \citep{GrunwaldM20}) relative to prior $W \mid \sz_{\bf 0}$ is that, among all learning algorithms $A$ that output a distribution on $\cF$, for all $\sz_{\bf 0}$ it achieves
\begin{equation}\label{eq:gibbsbound}
\min_A R(A| \sz_{\bf 0}; \sz_{\bf 0}) + \frac{\dkl{A | \sz_{\bf 0}}{W | \sz_{\bf 0}}}{n \hat\eta}. 
\end{equation}
Now assume that the prior $W$ is a compression scheme prior of some size $k$ and let $\pi | \langle \cdot \rangle$ denote the corresponding almost exchangeable prior satisfying (\ref{eq:compression}). 
If we consider the proof of Theorem~\ref{th:main} again, we see that if we set $A:= A_{\textsc{Gibbs}}$ to the Gibbs algorithm, and $A'$ to any other learning algorithm, then the crucial inequality (\ref{eq:finalesisumb}) in the proof of Theorem~\ref{th:main} still holds  
with $R(A |\sZ_{\bf 0}; \sZ_{\bf 0})$ on the right-hand side replaced by $R(A' |\sZ_{\bf 0}; \sZ_{\bf 0})$   and 
$\textsc{ub}$ set to $\dkl{A' | \sZ_{\bf 0}}{W| \sZ_{\bf 0}} + k \log 2n $.
Following all the remaining steps in the proof while keeping $\textsc{ub}$ in its new definition and keeping the distinction between $A'$ and $A$, we get the following corollary of Theorem~\ref{th:main}: {\em if we set $A$ to the Gibbs algorithm relative to size $k$ compression scheme prior $W$, and $A'$ to any other algorithm, we have, with the same abbreviations as in Theorem~\ref{th:main}}
\begin{multline*}
L(A_{\textsc{Gibbs}}| \sZ_{\bf 0} ; \cD) 
\stochleq^{\sZ_{\bf 0}}_{\eta} \\ 
L(A' | \sZ_{\bf 0} ;  \sZ_{\bf 0})  +  \minim{1}{2\beta} \cdot {R(A' | \sZ_{\bf 0} ;  \sZ_{\bf 0})} +  8 \cdot  \left(\frac{\dkl{A' | \sZ_{\bf 0}}{W|\sZ_{\bf 0} }+ O (k \log n) }{n\etamax}  \right)^{1/(2-\beta)}_{[**]} + \frac{6 \eta}{n}. \nonumber
\end{multline*}
\end{proof}

\commentout{\section{Tightness of constant in Lemma~\ref{lem:esiindividual}}\label{app:constant}
\lnote{Change this section}
We can  numerically seek the best constant $C'_{\eta}$ for which Lemma~\ref{lem:esiindividual} holds and contrast it with the $C_{\eta}$ we derived. Assuming our numerics are precise enough, we then find that the result is reasonably tight for the domain for which it was proven, $\eta \leq 0.25$, but continues to hold for $\eta$ inbetween $0.25$ and approximately $0.7$, although the constant $C_{\eta}$ quickly becomes less tight for $\eta > 0.25$ and becomes void $(\infty)$  for $\eta > 0.5$. Here are some representative numbers:
\begin{table}[H]
    \centering
    \begin{tabular}{|l|l|l|}
    \toprule
       \textbf{$\eta$}  & \textbf{$C_{\eta}$} & \textbf{$C_{\eta}'$} \\ \toprule
        $0.1$ & $4.46$ & $4.1$  \\ 
        $0.25$ & $5.54$ & $4.55$ \\ 
        $0.4$ & $8.05$ & $5.5$ \\ 
        $0.45$ & $10.23$ & $6.1$  \\ 
        $0.5$ & $\infty$ & $6.8$   \\ 
        $0.6$ & $\infty$ & $9.9$  \\ 
        $0.693$ & $\infty$ & $3400$ \\ \bottomrule
    \end{tabular}
    \caption{Comparison of $C_\eta$ of Lemma~\ref{lem:esiindividual} to optimal constant $C_{\eta}'$.}
    \label{tab:constant}
\end{table}

\begin{align*}
& \eta=0.1: C_{\eta} = 4.46 ,C'_{\eta} = 4.1
\ \ ; \ \ 
\eta=0.25: C_{\eta} = 5.54 , C'_{\eta} = 4.55
\ \ ; \ \ \\
& \eta=0.4: C_{\eta} = 8.05 ,C'_{\eta} = 5.5
\ \ ; \ \ 
 \eta=0.45: C_{\eta} = 10.23,C'_{\eta} = 6.1
\ \ ; \ \ \\
& \eta=0.5: C_{\eta} = \infty, C'_{\eta} = 6.8
\ \ ; \ \ 
\eta=0.6: C_{\eta} = \infty, C'_{\eta} = 
9.9 \ \ ; \ \ \\
& \eta=0.693: C_{\eta} = \infty, C'_{\eta} \approx 3400.
\end{align*}}

\end{document}